%% file: main.tex
\documentclass{article}

\usepackage{microtype}
\usepackage{graphicx}
\usepackage{subfigure}
\usepackage{booktabs} %

\usepackage{hyperref}
\usepackage{listings}
\usepackage{xcolor}
\usepackage{enumitem}

\usepackage[accepted]{icml2026}

\usepackage{amsmath}
\usepackage{amssymb}
\usepackage{mathtools}
\usepackage{amsthm}
\usepackage{bm}
\usepackage{soul}
\usepackage{nicefrac}

\usepackage{longtable}

\usepackage[capitalize,noabbrev]{cleveref}

\theoremstyle{plain}
\newtheorem{theorem}{Theorem}[section]

\newtheorem{lemma}[theorem]{Lemma}

\theoremstyle{definition}
\newtheorem{definition}[theorem]{Definition}

\theoremstyle{remark}

\usepackage[textsize=tiny]{todonotes}

\newcommand{\GL}{\operatorname{GL}}
\newcommand{\reynolds}{\mathcal{R}}
\newcommand{\vect}{\text{vec}}

\newcommand{\half}{\frac{1}{2}}
\newcommand{\quarter}{\frac{1}{4}}
\newcommand{\eqnote}[1]{\textcolor{gray}{\text{\small #1}}}

\crefname{subsection}{Sec.}{Secs.}
\crefname{section}{Sec.}{Secs.}
\crefname{figure}{Fig.}{Figs.}
\crefname{equation}{Eq.}{Eqs.}
\crefformat{equation}{Eq.~#2#1#3}
\Crefformat{equation}{Eq.~#2#1#3}
\crefmultiformat{equation}{Eqs.~#2#1#3}
  { and~#2#1#3}
  {,~#2#1#3}
  {, and~#2#1#3}

\Crefmultiformat{equation}{Eqs.~#2#1#3}
  { and~#2#1#3}
  {,~#2#1#3}
  {, and~#2#1#3}
\crefrangeformat{equation}{Eqs.~#3#1#4--#5#2#6}
\Crefrangeformat{equation}{Eqs.~#3#1#4--#5#2#6}

\icmltitlerunning{Exploiting weight-space symmetries for approximating curvature}

\begin{document}

\twocolumn[
\icmltitle{Exploiting weight-space symmetries for approximating curvature}

\icmlsetsymbol{equal}{*}

\begin{icmlauthorlist}
\icmlauthor{Artem Artemev}{yyy}
\icmlauthor{Rui Xia}{xxx,yyy}
\icmlauthor{Benjamin M. Boyd}{xxx,yyy}
\icmlauthor{Youjing Yu}{xxx,yyy}
\icmlauthor{Felix Dangel}{zzz}
\icmlauthor{Guillaume Hennequin}{xxx,yyy}
\icmlauthor{Alberto Bernacchia}{yyy}

\end{icmlauthorlist}

\icmlaffiliation{yyy}{MediaTek, Cambridge, UK}
\icmlaffiliation{xxx}{University of Cambridge, Cambridge, UK}
\icmlaffiliation{zzz}{Vector Institute, Toronto, Canada}

\icmlcorrespondingauthor{Artem Artemev}{art.art.v@gmail.com}

\icmlkeywords{Optimization, Symmetry, Deep Learning, Schur-Weyl Duality}

\vskip 0.3in
]

\printAffiliationsAndNotice{}  %

\begin{abstract}
Many machine learning techniques rely on approximating a loss function's curvature, but this is notoriously hard to do at the scale of modern deep networks.
Surprisingly, no previous work has exploited the curvature constraints that arise from well known weight-space symmetries in loss landscapes. 
By analytically averaging over group actions that leave the loss invariant, we construct structured Hessian approximations from single gradients that can be tractably estimated, stored, and inverted. 
The choice of user-specified symmetry group directly governs the trade-off between approximation accuracy and computational cost. Moreover, our framework provides a unifying theoretical lens for viewing existing methods; in particular, a specific choice of symmetry group recovers Shampoo/Muon-like curvature estimates. We validate our method on a range of network architectures, and deploy it to second-order optimization benchmarks, including a small language model.
Our curvature estimation framework might find applications in other machine learning problems such as uncertainty estimation, continual learning, compression/pruning, training data attribution, and more.
\end{abstract}

\section{Introduction}
\label{sec:intro}

Efficient estimation of a loss function's (inverse) curvature has become a pillar of many machine learning sub-fields. In second-order optimization, an accurate and tractable estimate of curvature may be used to precondition gradient-based updates and obtain faster convergence \citep{bottou2018optimization}. In Bayesian deep learning, it is the core ingredient in the Laplace approximation of the posterior over weights \citep{mackay1992bayesian,ritter2018a}. In continual learning, it can help identify directions in parameter space along which previously acquired knowledge is protected \citep{ritter2018online,kao2021natural}. Likewise, some network compression algorithms rely on second-order information to score the importance of weights and prioritize pruning \citep{mcgowan2024efficient,wang2019eigendamage,theis2018faster,kurtic2022optimal,hassibi1992second}.

Despite the ubiquitous usefulness of second-order information, the computational and memory requirements of constructing, storing, and inverting curvature matrices are often prohibitive.
The field of second-order optimization has been a useful incubator for many commonly used approximation techniques that scale to large models.
Optimizers such as (E)KFAC \citep{martens2015kfac,george2018fast}, Shampoo \citep{gupta2018shampoo} or Soap \citep{vyas2025soap} approximate curvature in positive-definite, block-diagonal form where each block is Kronecker-factorized. This leads to efficient estimation, storage, and inversion.
These methods have proven remarkably effective in large-scale applications \cite{kasimbegaccelerating, liu2025muon, grosse2023studying} and have been mathematically dissected from multiple angles 
\citep{buffelli2024exact,frans2025reallymattersmatrixwhiteningoptimizers,lin2025understanding,eschenhagen2025purifyingshampooinvestigatingshampoos,pethick2025trainingneuralnetworksscale}.

Here, we present a different approach for approximating curvature, rooted in a key insight recently highlighted in a few studies \citep{bernacchia25a,ainsworth2022git,rossi2023permutation,entezari2022the}: many loss functions in deep learning are invariant under certain transformations of their parameters, such as rescaling, permutations, or broader orthogonal transformations (\citealp{hecht1990algebraic, chen1993geometry}; see \Cref{fig:illustration}, top, for a permutation example). 
Identifying such groups is often straightforward. 
Critically, loss invariance implies gradient equivariance, such that knowing a single gradient is enough to analytically compute the gradients at many other points across the loss landscape, namely those reachable by a symmetry transformation (the \emph{group orbit}, \Cref{fig:illustration}, bottom).
Our first contribution is to show how to analytically distill these gradient variations across the orbit into an estimate of the loss curvature -- using a single gradient computation.
The solution involves the orbit averages of the gradient which belongs to the commutant algebra (\cref{def:commutant_algebra}) of the given symmetry group, and is therefore provably highly structured: a linear superposition of a small number of basis elements.
This implies that our approximate Hessian can be represented and stored in the compact form of the “factors” that weigh the contribution of each basis element in the algebra.
We provide a just-in-time compiler that turns user-provided, symbolic specifications of symmetries into efficient PyTorch routines computing the corresponding factors. %

Our work sheds new light on the accuracy-complexity tradeoff inherent in the art of approximating Hessians.
For a large symmetry group, the commutant algebra is very low-dimensional, resulting in cheap calculations. However, the associated group orbits stretch very far in parameter space, with gradient variations along the orbit eventually ceasing to reflect local curvature.
Conversely, small symmetry groups have smaller, more ``local''  orbits, but yield more expensive Hessian approximations that comprise more terms.   
We show that both Shampoo and Muon can be understood as making a specific choice of permutation subgroup within our framework; this choice appears to hit a sweet spot in complexity vs.\ performance.
More generally, our framework provides a systematic recipe for constructing symmetry-aware Hessian approximations that fit varying computational budgets.

In summary, we contribute the following:
\begin{itemize}
    \item We provide a novel Hessian approximation, derived from a single gradient computation by averaging over the gradient's symmetry group orbit (\Cref{sec:hessians_from_symmetries}).
    \item Using multi-layer perceptrons (MLPs) and Transformers as illustrative examples, we show theoretically that our Hessian approximation's complexity depends on the symmetry group size, and that the group can be restricted to match a desired complexity (\Cref{sec:commutant_algebra}). 
    \item We apply our approximate Hessian to second-order optimization, and we prove mathematically that a specific choice of the symmetry group corresponds to well-known optimizers, Shampoo and Muon (\Cref{sec:symo_equivalence}).
    \item We test empirically the quality of the Hessian approximation and the equivalence with Shampoo/Muon, on MLP and Transformer architectures (\Cref{sec:hess_approx,sec:empirical_equivalence}).
\end{itemize}

\input{_fig_illustration}

\subsection{Related Work}
The ubiquity of weight-space symmetries in neural networks is a well-known fact \cite{ravanbakhsh2017equivariance,ziyinparameter, zhao2023symmetries}. 
Symmetries have been exploited to improve optimization \cite{neyshabur2015path, meng2018mathcal, zhao2022symmetry}, to analyze the structure and stability of fixed points in the loss landscape \cite{fukumizu2000local, simsek21a, zhang2021embedding}, and to justify model merging \cite{entezari2022the, ainsworth2022git}.

Surprisingly, there has been little work on how to exploit such symmetries for constraining curvature estimates. 
\Citet{kunin2020neural} and \citet{ziyin2023symmetry} showed that symmetries place structural constraints on the Hessian, with the latter proving that at symmetric critical points the Hessian inherits the symmetry of the group. Our approximations share this structure at any point during training, and can be used to estimate curvature from a single gradient.

Most related to ours is the work of \citet{bernacchia25a}, who showed how to construct a form of ``global curvature'' by analytically averaging gradients across an ensemble of neural networks, which depends on the weights initialization. 
Here, we arrive at a similar mathematical construction but from the principled angle of Hessian approximation in a \emph{single model}, irrespective of how the model is initialized, and we extend the theory beyond MLPs to Transformer models.

A few previous studies provided theoretical interpretations of Shampoo and Muon. 
Recent work showed that such optimizers can be interpreted as an approximate version of second-order optimization, the Gauss-Newton method \cite{morwani2025a}.
Another line of work argues that they are equivalent to the spectral descent method \cite{bernstein2024old, pethick2025trainingneuralnetworksscale}. 
Our work establishes a new connection between those optimizers and the model architecture's symmetries.

\section{Background and notation}
\label{sec:background}

\subsection{Invariance to weight-space transformations}
\label{sec:geometry}
Most neural networks parameterize a loss function $\mathcal{L}(\bm{w})$ that is invariant to specific parameter transformations, or group actions.
Consider the toy MLP network shown in \Cref{fig:illustration} (top) where $B$ and $C$ are the first and second layer weight matrices, respectively, and input-output map $\bm{y} = C \phi( B \bm{x})$, where $\phi(\cdot)$ is an element-wise non-linearity.
This map is invariant to any simultaneous permutation of the rows of $B$ and the columns of $C$. That is, for any permutation matrix $A_1$, the transformations $B \to A_1 B$ and $C \to C A_1^\top$ leaves $\bm{y}$ unaffected.
We refer to $A_1$ as the matrix representation of an element of the symmetric group of permutations, denoted by $\mathcal{G}_1$.

We denote by $\bm{w} = [ \text{vec}(B); \text{vec}(C) ]$ the vectorized tensor of parameters.
In terms of vectorized tensors $\bm{b}=\mathrm{vec}(B)$ and $\bm{c}=\mathrm{vec}(C)$, those transformations are rewritten, equivalently, as $\bm{b} \to \left(\mathrm{I}\otimes A_1\right)\bm{b}$, and $\bm{c} \to \left(A_1\otimes \mathrm{I}\right)\bm{c}$, with the Kronecker product $\otimes$.
The identity matrix $\mathrm{I}$ represents the fact that the input and output components of the neural net cannot be transformed without affecting the map.
For convenience of notation, we define $\mathcal{G}_0$ to be the trivial group with the identity element only, $A_0=\mathrm{I}$, and we rewrite the transformations as $\bm{b} \to \left(A_0\otimes A_1\right)\bm{b}$ and $\bm{c} \to \left(A_1\otimes A_0\right)\bm{c}$.

More generally, a network parameterized by a set of tensors might be invariant to transformations applied to specific axes of specific tensors. 
We denote by $\mathcal{G}=\mathcal{G}_0\times\mathcal{G}_1\times\ldots\times\mathcal{G}_m$ the direct product of all groups acting on different tensors, and by $A_i$ the matrix representation of a member of group $\mathcal{G}_i$. 
For a vectorized tensor $\bm{v}$ of order $d$ within the network's parameter set, we use the Kronecker notation $\bm{v} \to \smash{\big( \bigotimes_{k=1}^d A_{i(k)}\big)} \bm{v}$ to express that the tensor is transformed by $A_{i(k)}$ along its $k^\text{th}$ axis.
The notation $i(k)$ allows for the possibility that axes of different tensors are tied to the same group action, as in the case of the $2$-layer MLP, with $i(1)=0$, $i(2)=1$ for $\bm{b}$, and $i(1)=1$, $i(2)=0$ for $\bm{c}$.

In most applications, invariance of the network's input-output map implies invariance of the loss.
An important consequence of that is gradient equivariance:
if $\mathcal{L}(\bm{w}) = \mathcal{L}(A \bm{w})$ for some orthogonal parameter transformation $A$, then a straightforward application of the chain rule, together with $A^{-\top} = A$, yields  $\nabla \mathcal{L}(A\bm{w}) = A \nabla\mathcal{L}(\bm{w})$.

\subsection{Orbit averages}
\label{sec:orbit_averages}

Much of our theory is based on the concept of orbit averaging, i.e.\ averaging specific expressions over all elements of the relevant group \citep{fulton2013representation}.
For an order-$d$ (vectorized) tensor $\bm{v}$, we consider the first-order average
\begin{equation}
   \label{eq:first_order_oa} 
\reynolds_1(\bm{v}, \mathcal{G}) \equiv \mathbb{E}_{\mathcal{G}} 
        \left[ \left( \bigotimes_{k=1}^d A_{i(k)} \right) \bm{v} \right]
\end{equation}
where $\mathbb{E}_{\mathcal{G}}[\cdot]$ denotes an average over all elements in $\mathcal{G}$, i.e.\ an average over all possible choices of $A_1, A_2, \ldots$.
Similarly, we will need the (second-order) orbit average of the outer product of two (vectorized) tensors:
\begin{align}
   \label{eq:second_order_oa} 
 &\reynolds_2(\bm{v}, \bm{v}', \mathcal{G}) \equiv \nonumber \\ 
&\mathbb{E}_{\mathcal{G}} \left[
     \left( \bigotimes_{k=1}^d A_{i(k)} \right)  \bm{v}  {\bm{v}'}^\top  \left(\bigotimes_{k=1}^{d'} A_{i'(k)}\right)^\top  \right]
\end{align}
where $\bm{v}'$ is another vectorized tensor of order $d'$.

\subsection{Shampoo and Muon} 
\label{sec:shampoo_muon}

Shampoo \citep{gupta2018shampoo} is a second-order optimizer that approximates the loss curvature in a structured way to construct a gradient preconditioner of each parameter tensor in a neural network.
Specifically, for a parameter matrix $W_t$ and corresponding loss gradient $G_t$, Shampoo computes two running sums across training iterations $t$:
\begin{subequations}
\begin{align}
    L_{t+1} &= L_t + G_t G_t^\top,  &  (L_0 &= \epsilon I)  \\
    R_{t+1} &= R_t + G_t^\top G_t.  &  (R_0 &= \epsilon I)
\end{align}
\end{subequations}
Some versions of Shampoo implement an exponential moving average instead of temporal accumulation \cite{eschenhagen2025purifyingshampooinvestigatingshampoos,shi2023distributed}.
The weight update is equal to
\begin{equation}
   \label{eq:shampoo}
   W_{t+1} = W_t - \eta L_t^{-\frac{1}{4}} G_t R_t^{-\frac{1}{4}}
\end{equation}
where $\eta$ is a learning rate.
This update can be viewed as an approximation to the standard second-order update where the Hessian has block-diagonal form, and each block is equal to $\smash{R_t^{\nicefrac{1}{4}} \otimes L_t^{\nicefrac{1}{4}}}$ \cite{morwani2025a}.

The Shampoo optimizer is closely related to the recently proposed Muon optimizer \citep{jordan2024muon}.
Leaving aside the temporal accumulation of $L_t$ and $R_t$, by using a singular value decomposition $G_t = U \Sigma V^\top$, one can show %
\begin{align}\label{eq:shampoo_muon_updates}
   L_t^{-\frac{1}{4}} G_t R_t^{-\frac{1}{4}} \sim \left(G_tG_t^\top\right)^{-\frac{1}{4}}G_t\left(G_t^\top G_t\right)^{-\frac{1}{4}}= U V^\top,
\end{align}
with further details in \Cref{app:shampoo_muon_eqv}. This uniformization of gradient singular values, or `polar decomposition', is the essence of Muon. 
Muon approximates this by running a few iterations of the Newton-Schulz algorithm, which scales better than Shampoo's SVD-based computation of $\smash{L_t^{-\nicefrac{1}{4}}}$ and $\smash{R_t^{-\nicefrac{1}{4}}}$.
Despite recent progress \cite{morwani2025a, bernstein2024old, pethicktraining}, we still
lack a principled understanding of why Shampoo and Muon optimizers have proven so effective.

\section{Theoretical results}
\label{sec:results}

In this section, we provide our main theoretical contributions.
In \Cref{sec:hessians_from_symmetries}, we derive our novel approximation of the Hessian, based on orbit group averaging of gradients.
In \Cref{sec:commutant_algebra}, we use MLPs and Transformers as examples to show that the complexity of the Hessian depends on the size of the symmetry group.
In \Cref{sec:symo_equivalence}, we prove that Shampoo/Muon optimizers correspond to an intermediate size of the symmetry group.

Before we start, we state the following Lemma, which is instrumental in deriving many of our results and is proven in \Cref{app:invariance}:
\begin{lemma}
\label{lemma:invariance}
For a given group $\mathcal{G}$ and the matrix representation $A$ of any element in the group, the orbit average is invariant (for all $\bm{v}$, $\bm{v}'$)
\begin{subequations}
\begin{align}
   \label{eq:first_order_invariance} 
A\;\reynolds_1(\bm{v}, \mathcal{G}) &= \reynolds_1(\bm{v}, \mathcal{G}),
\\
   \label{eq:second_order_invariance} 
A\;\reynolds_2(\bm{v}, \bm{v}', \mathcal{G})A^\top&=\reynolds_2(\bm{v}, \bm{v}', \mathcal{G}).
\end{align}
\end{subequations}

\end{lemma}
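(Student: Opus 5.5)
The plan is the standard invariance argument for group averages: left-multiplying by a fixed group element permutes the group, so the average is unchanged. I first make precise what $A$ means in the statement. An element $g=(g_0,g_1,\ldots,g_m)\in\mathcal{G}$ acts on $\bm{v}$ through the matrix $A_g^{(\bm{v})}=\bigotimes_{k=1}^d A_{i(k)}(g)$. For \cref{eq:second_order_invariance}, the left factor is the representation acting on $\bm{v}$ and the right factor the one acting on $\bm{v}'$, both evaluated at the same $g$. The key structural fact I will use is that $g\mapsto A_g^{(\bm{v})}$ is a group homomorphism: each $g_i\mapsto A_i(g_i)$ is a matrix representation, and the mixed-product property gives
\begin{equation*}
\Big(\bigotimes_k A_{i(k)}(h)\Big)\Big(\bigotimes_k A_{i(k)}(g)\Big)=\bigotimes_k A_{i(k)}(hg).
\end{equation*}
Hence $A_h^{(\bm{v})}A_g^{(\bm{v})}=A_{hg}^{(\bm{v})}$.

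For the first-order statement \cref{eq:first_order_invariance}, fix $h\in\mathcal{G}$. By linearity of the expectation,
\begin{equation*}
A_h\,\reynolds_1(\bm{v},\mathcal{G})=\mathbb{E}_{g}\big[A_h A_g\bm{v}\big]=\mathbb{E}_{g}\big[A_{hg}\bm{v}\big].
\end{equation*}
The map $g\mapsto hg$ is a bijection of $\mathcal{G}$, so the uniform average over $\mathcal{G}$ is unchanged by this reindexing. For finite groups such as permutations this is a direct relabelling of a finite sum. For compact groups such as orthogonal groups, $\mathbb{E}_{\mathcal{G}}$ is the normalized Haar measure, and I invoke its left-invariance. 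Reindexing then gives back $\reynolds_1(\bm{v},\mathcal{G})$.

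For the second-order statement \cref{eq:second_order_invariance}, I apply the same steps on both sides. Writing $A_h$ and $A'_h$ for the two representations,
\begin{equation*}
A_h\,\reynolds_2\,{A'_h}^\top=\mathbb{E}_g\big[A_hA_g\bm{v}\bm{v}'^\top A_g'^\top A_h'^\top\big]=\mathbb{E}_g\big[A_{hg}\bm{v}\bm{v}'^\top A_{hg}'^\top\big].
\end{equation*}
The second equality uses $(A'_h A'_g)^\top=A_g'^\top A_h'^\top$. Substituting $g\mapsto hg$ then finishes the argument as before. Note that orthogonality of the representation is not needed, only the homomorphism property and invariance of the averaging measure.

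I expect no real obstacle. The one point needing care is notational: the statement writes a single $A$ on both sides of $\reynolds_2$, whereas for $\bm{v}\neq\bm{v}'$ of different shapes the left and right representations differ but are induced by the same group element. I will state this interpretation explicitly. I will also note that the measure-theoretic step, Haar left-invariance, covers the continuous groups the paper later uses.
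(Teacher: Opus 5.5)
Your proposal is correct and follows essentially the same argument as the paper: left-multiplying by a fixed group element only reindexes the uniform (or Haar) average, by group closure and $g\mapsto hg$ being a bijection, so the average is unchanged. You spell out three things the paper leaves implicit: the second-order case written out in full, why the reindexing is bijective, and the fact that in $\reynolds_2$ the left and right matrices are distinct representations of the same group element when $\bm{v}$ and $\bm{v}'$ differ in shape.
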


\subsection{Using orbit averages to approximate curvature}
\label{sec:hessians_from_symmetries}

The invariance of a neural network to a given group $\mathcal{G}$ of transformations, and the resulting gradient equivariance around the group orbit, gives information about how the loss gradient varies across parameter space (\Cref{fig:illustration}).
To aggregate these gradients into an estimate of the loss curvature near $\bm{w}$,
we proceed as in most quasi-Newton methods \citep{nocedal2006numerical,li2017preconditioned} whilst harnessing the network's symmetries. 
We assume the loss is twice differentiable and make a second-order Taylor approximation around $\bm{w}^\star \equiv \reynolds_1(\bm{w},\mathcal{G})$, the orbit average of $\bm{w}$:
\newcommand\gstar{\bm{g}^\star}
\newcommand\wstar{\bm{w}^\star}
\newcommand\dw{\bm{w}-\wstar}
\newcommand\grad{\nabla\mathcal{L}}
\begin{equation}
\label{eq:taylor_exp}
\begin{split}
    \mathcal{L}(\bm{w}) &\approx \mathcal{L}(\wstar) +
    {\gstar}^\top (\dw)\\
    &+ \frac12 (\dw)^\top H^\star (\dw)
\end{split}
\end{equation}  
where we have introduced the shorthand notation $\bm{g} \equiv \grad(\bm{w})$, $\gstar \equiv \nabla \mathcal{L}(\wstar)$, and $H^\star \equiv \nabla^2 \mathcal{L}(\wstar)$.
The accuracy of this approximation depends on the orbit size. In \cref{lem:orbit_distance} we show that the secant error is bounded by $\frac{M}{2}\|\bm{w} - \bm{w}^\star\|^2$ where $M$ is the Lipschitz constant of the Hessian, and that $\|\bm{w} - \bm{w}^\star\|$ grows monotonically with the group size.
Reducing the orbit size will be crucial to recover Shampoo/Muon (\Cref{sec:symo_equivalence}). 
By differentiating with respect to $\bm{w}$, the Taylor expansion implies the following secant condition:
\begin{subequations}
\begin{equation}
   \label{eq:secant0}
   \bm{g} - \bm{g}^\star \approx H^\star (\bm{w} - \wstar)
\end{equation}
We assume that this approximation is valid along the entire orbit, i.e. $\bm{w}\to A\bm{w}$, $\bm{g}\to A\bm{g}$ for all transformations $A$ in the symmetry group.
Furthermore, by \Cref{lemma:invariance}, we have that $A\bm{w}^\star=\bm{w}^\star$ and, by gradient equivariance, $A\bm{g}^\star=\bm{g}^\star$.
Therefore we have, for all $A$ in the symmetry group $\mathcal{G}$
\begin{equation}
   \label{eq:secant}
   A\left(\bm{g} - \bm{g}^\star\right) \approx H^\star A(\bm{w} - \wstar)
\end{equation}
\end{subequations}
Multiplying \Cref{eq:secant} by its transpose, and performing an orbit average gives
\newcommand\sigmag{S_{\bm{g}}}
\newcommand\sigmaw{S_{\bm{w}}}
\begin{subequations}
\begin{align}
    \sigmag &\approx H^\star \sigmaw H^\star \label{eq:curvature_condition} \\
    \text{with} \quad \sigmaw &\equiv \reynolds_2(\bm{w}-\wstar,\bm{w}-\wstar, \mathcal{G}) \\
    \label{eq:sg_def}
    \text{and} \quad \sigmag &\equiv \reynolds_2(\bm{g}-\bm{g}^\star,\bm{g}-\bm{g}^\star, \mathcal{G}).
\end{align}
\end{subequations}

Most applications require positive semi-definite curvature approximations \citep{bottou2018optimization}. Beyond practical convenience, this is empirically well-motivated: the directional curvature along the gradient, $\bm{g}^\top H \bm{g}$, has been observed to be positive throughout most of training \citep{roulet2024stepping}, and the gradient itself tends to align with the top positive eigenvectors of the Hessian \citep{gur2018gradient}. This suggests that the positive semi-definite component of the Hessian plays a dominant role for gradient preconditioning, even though the full Hessian is generally indefinite.
We show in \Cref{app:Hess_solution} (see also \citealp{li2017preconditioned}) that the only positive semi-definite solution of \Cref{eq:curvature_condition} is
\begin{equation}
\label{eq:symo_full}
   H^\star_\text{PD} = \sigmaw^{-\frac12} \left(\sigmaw^{\frac12} \sigmag \sigmaw^{\frac12}\right)^{\frac12} \sigmaw^{-\frac12}. 
\end{equation}
We show in \Cref{sec:commutant_algebra} how to efficiently obtain the required orbit averages from $\bm{w}$ and $\bm{g}$ without having to compute any other transformed parameter nor loss gradient.

Empirically, we find that simplifying \Cref{eq:symo_full} by assuming $\sigmaw$ to be approximately proportional to the identity provides a good approximation to the true curvature up to a scale factor (\Cref{sec:hess_approx}):
\begin{equation}
\label{eq:symo_1}
H^\star_{\bm{g}} = \sigmag^{\frac12}.
\end{equation}
This is convenient for two reasons. First, $\sigmaw$ may be rank-deficient, which would distort the curvature estimate upon inversion. Second, this simplification eliminates the need to estimate $\sigmaw$, reducing computational overhead.

\subsection{Structure of orbit averages}
\label{sec:commutant_algebra}

This section details how the invariance properties of orbit averages determine their structure.
For a simple start, recall the example network of \Cref{fig:illustration} (top) and its invariance to the simple permutation transformation discussed in \Cref{sec:geometry}.
For the parameter tensor $C$, 
 the second-order orbit average $S_{\bm{cc}} \equiv \reynolds_2(\bm{c}, \bm{c},\mathcal{G})$ satisfies the equation $ S_{\bm{cc}} = (A_1 \otimes A_0) S_{\bm{cc}} (A_1 \otimes A_0)^\top $, for any permutation matrix $A_1$, and with $A_0=\mathrm{I}$ (c.f.\ \Cref{eq:second_order_invariance}).
In this case, the solution is $\smash{S_{mnop} = \delta_{mo} f^{(1)}_{np}  
+ \bm{1}_{mo} f^{(2)}_{np}}$ for some to-be-determined matrices of factors $\{ f^{(1)}_{np} \}$ and $\{ f^{(2)}_{np} \}$ \citep{bernacchia25a}.
This solution space is illustrated in \Cref{fig:orbit_average} (top). Here, $S_{\bm{cc}}$ has 32 factors to be determined.

Similarly, $\reynolds_2(\bm{b}, \bm{b}, \mathcal{G})$ and $\reynolds_2(\bm{b}, \bm{c}, \mathcal{G})$ obey analogous equations whose solutions can be found in \Cref{app:commutant_algebra}.
Together, these orbit-averages give us the structure of each block of $\sigmaw$ and $\sigmag$, as required for our Hessian approximations (\Cref{eq:symo_full,eq:symo_1}).

Now, suppose that the hidden layer had a \texttt{tanh} activation function. 
In this case, the network would still be invariant if $\mathcal{G}_1$ 
is the group of \emph{signed} permutations \cite{chen1993geometry}.
Averaging over the orbit of this larger group gives rise to more constrained solutions; for example, $\reynolds_2(\bm{c}, \bm{c})$ is now composed of a single basis element ($\delta_{mo}$), reducing the number of factors from 32 to 16
(\Cref{fig:orbit_average}, middle).
This choice of group will give rise to the Shampoo optimizer -- see \Cref{sec:symo_equivalence}.

\begin{figure}
  \centering
  \includegraphics[width=\linewidth]{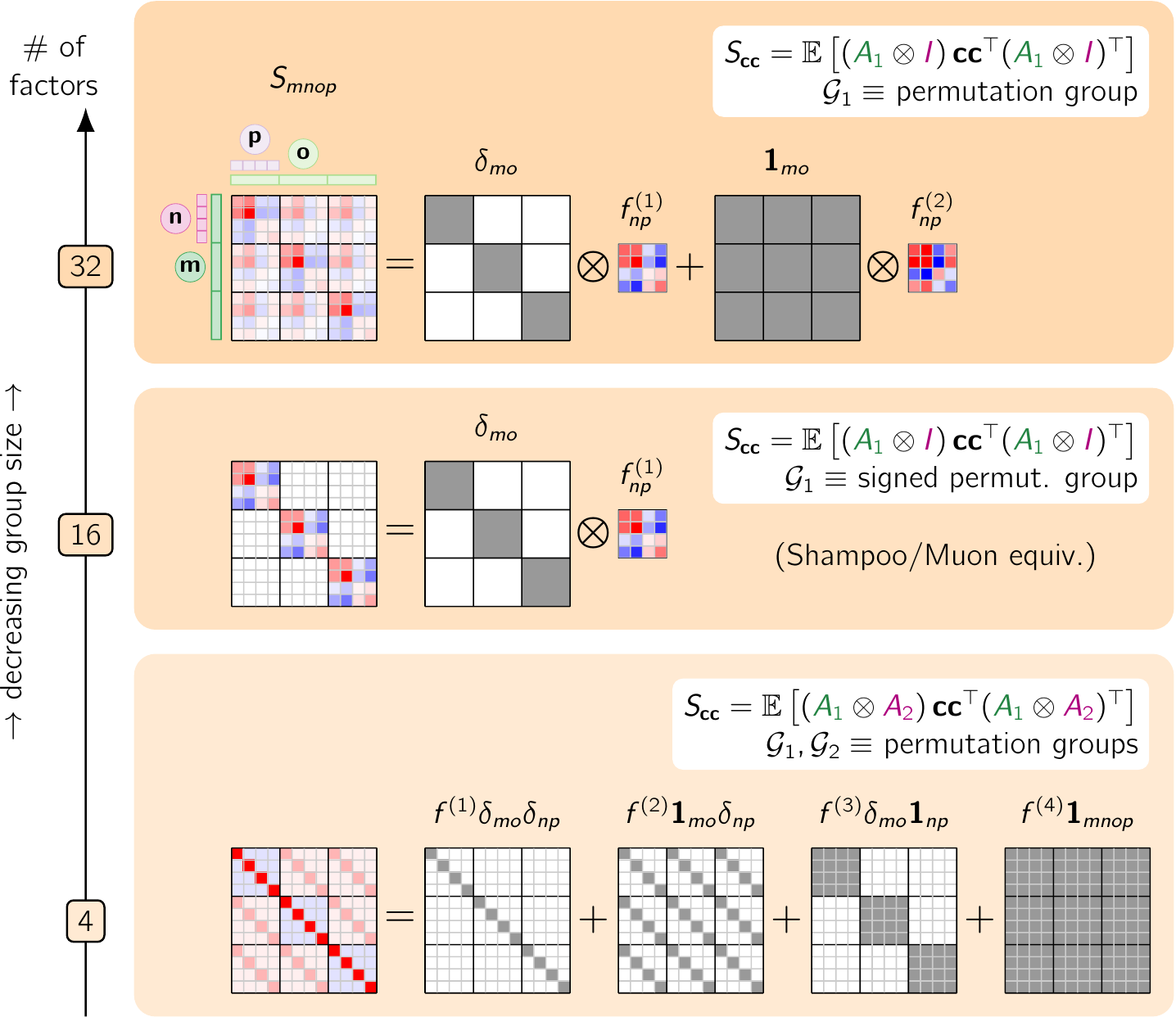}
  \caption{Structure of the second-order orbit average of $\bm{c}\bm{c}^\top$ with $\bm{c} = \text{vec}(C)$, where $C$ is the $3 \times 4$ output matrix of the illustrative MLP of \Cref{fig:illustration}. In this case, $\smash{\bm{c}\bm{c}^\top}$ has a natural order-4 tensor structure, and we denote the corresponding indices by $\{ m,n,o,p \}$ (see diagram). Depending on the group considered (white insets), the resulting orbit average $S_{\bm{cc}}$ can be very simple (bottom; four terms, each having only one scalar factor $\smash{f^{(\cdot)}}$) or richer (top, two terms, but with matrix-valued factors $\{ \smash{f^{(\cdot)}_{np}} \}$ totaling 32 factors). 
  Shampoo/Muon corresponds to the middle panel.}
  \label{fig:orbit_average}
  \vspace*{-1em}
\end{figure}

Alternatively, suppose the MLP of \Cref{fig:illustration} was used as an autoencoder. In this case, the loss function would be invariant not only to a permutation of the hidden layer, but also to the simultaneous permutation of the input and output layers. This gives orbit averages of the form $\reynolds_2(\bm{c},\bm{c}) = \mathbb{E}_{A_1, A_2}\left[ (A_1 \otimes A_2) \bm{c} \bm{c}^\top (A_1 \otimes A_2)^\top \right]$ where the average is over two different permutation matrices $A_1$ and $A_2$.
This further restricts the space of solutions, as shown in \Cref{fig:orbit_average} (bottom), reducing the number of factors to only 4.

A key takeaway of this section is that orbit averages are linear superpositions of a small number of basis elements, weighted by factors (we will show how to determine these factors in the next section). 
The basis elements are sparse, binary tensors that afford 
a compact symbolic representation as Kronecker products of identity tensors and/or $\bm{1}$ tensors.
While the solutions are easy to intuit for simple cases, the more general case can be dealt with using a diagrammatic approach which we detail in \Cref{app:commutant_algebra}. 
This approach lets us find the structure of orbit averages of the form $S_{\bm{v}\bm{v}'} \equiv \reynolds_2(\bm{v},\bm{v}',\mathcal{G})$ where $\bm{v}$ and $\bm{v}'$ are tensors which are subject to a transformation group $\mathcal{G}$ as introduced in \Cref{sec:geometry}. 
In other words, we can enumerate all linearly independent solutions to the more general equation
\begin{equation}
  \label{eq:general_commmutation}
  S_{\bm{v}\bm{v}'} = 
   \left( \bigotimes_{k=1}^d A_{i(k)} \right) S_{\bm{v}\bm{v}'}
  \left( \bigotimes_{k=1}^{d'} A_{i'(k)}\right)^\top
\end{equation}
that must hold for any $\{A_1, A_2, \ldots \}$ in the group.

As an example application of this general theory, \Cref{fig:transformer} illustrates the $\sigmag$ of a single-layer Transformer, which comprises six different parameter tensors: embedding weights $W_\text{emb}$; attention weights $W_{QK}$, $W_V$ and associated projection $W_\text{proj}$; and feed-forward weights $W_{\text{ff}_1}$ and $W_{\text{ff}_2}$.
The structured group for which the orbit average is computed is detailed in \Cref{app:transformer}. This example shows that even after orbit averaging, $\sigmag$ retains a lot of structure.

\subsection{Computing the factors} 
\label{sec:factor_estimation}

We have shown that the curvature can be represented in terms of a small number of factors, that depend on the symmetry group.
Here we show how to efficiently compute  these factors.
We prove the following Lemma in \Cref{app:factor_estimation}:
\begin{lemma}
\label{lemma:least_squares}
The orbit average is the orthogonal projection onto the space of invariants.
Consequently, the first-order orbit average of $\bm{v}$ denoted by ${\bm{s}}({\bm{f}}^\star) \equiv \reynolds_1({\bm{v}}, \mathcal{G})$ is the invariant that is nearest to $\bm{v}$ by the square norm:\\[-0.8em]
\begin{subequations}
\begin{equation}
  \label{eq:ls1}
  {\bm{f}}^\star=\arg\min_{\bm{f}} \left|{\bm{s}}({\bm{f}})-{\bm{v}}\right|^2.
\end{equation}\\[-1em]
Similarly, for the second-order orbit average of $\bm{v}\bm{v}^\top$ denoted by $S({\bm{f}}^\star) \equiv \reynolds_2({\bm{v}}, {\bm{v}}', \mathcal{G})$,\\[-0.8em]
\begin{equation}
  \label{eq:ls2}
  {\bm{f}}^\star=\arg\min_{\bm{f}} \left|S({\bm{f}})-{\bm{v}}{{\bm{v}}'}^\top\right|^2_F.
\end{equation}
\end{subequations}

\end{lemma}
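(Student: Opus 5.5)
The plan is to show that the orbit-averaging operator $P \equiv \reynolds_1(\cdot,\mathcal{G})$, viewed as a linear map on the vector space containing $\bm{v}$, is an orthogonal projection whose image is exactly the space of invariants $\mathcal{V}^{\mathcal{G}} = \{\bm{u} : A\bm{u}=\bm{u}\ \forall A\}$. Both least-squares claims then follow from the standard fact that an orthogonal projection returns the nearest point of its image. By construction, the ansatz $\bm{s}(\bm{f})$ (respectively $S(\bm{f})$) ranges over all linear combinations of the basis elements of the commutant, which is exactly the space of invariants.

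For the first-order statement I will use the following steps, writing $\rho(A)=\bigotimes_k A_{i(k)}$ for the representation acting on $\bm{v}$.

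\textbf{Linearity.} $P$ is linear in $\bm{v}$, since it is an average of linear maps: $P=\mathbb{E}_{\mathcal{G}}[\rho(A)]$.

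\textbf{Idempotence and image.} By \Cref{lemma:invariance}, $\rho(A)P\bm{v}=P\bm{v}$, so $P\bm{v}\in\mathcal{V}^{\mathcal{G}}$. Conversely, if $\bm{u}$ is invariant then $P\bm{u}=\mathbb{E}[\bm{u}]=\bm{u}$. Hence $P^2=P$ and the image of $P$ is $\mathcal{V}^{\mathcal{G}}$.

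\textbf{Self-adjointness.} The matrices $A_i$ are orthogonal (permutations, signed permutations, or orthogonal matrices), so $\rho(A)$ is orthogonal and $\rho(A)^\top=\rho(A)^{-1}=\rho(A^{-1})$. Since $A\mapsto A^{-1}$ is a bijection of the group preserving the uniform (Haar) measure, $P^\top=\mathbb{E}[\rho(A^{-1})]=P$.

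An idempotent symmetric matrix is an orthogonal projector. Therefore $\bm{v}-P\bm{v}\perp\mathcal{V}^{\mathcal{G}}$, and Pythagoras gives $|\bm{u}-\bm{v}|^2=|\bm{u}-P\bm{v}|^2+|P\bm{v}-\bm{v}|^2$ for every invariant $\bm{u}$. This is minimized exactly at $\bm{u}=P\bm{v}$. Since $\bm{s}(\bm{f})$ parameterizes $\mathcal{V}^{\mathcal{G}}$ (the basis elements found in \Cref{sec:commutant_algebra} span the solution space of the invariance equation), the minimizing $\bm{f}^\star$ satisfies $\bm{s}(\bm{f}^\star)=\reynolds_1(\bm{v},\mathcal{G})$. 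This gives \Cref{eq:ls1}.

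For the second-order statement I reduce to the first-order case by vectorizing. Using $\mathrm{vec}(\rho(A)X\rho'(A)^\top)=(\rho'(A)\otimes\rho(A))\mathrm{vec}(X)$, we get $\reynolds_2(\bm{v},\bm{v}',\mathcal{G})$ as the first-order orbit average of $\mathrm{vec}(\bm{v}\bm{v}'^\top)$ under the representation $A\mapsto\rho'(A)\otimes\rho(A)$. This representation is again orthogonal, and the Frobenius norm equals the Euclidean norm of the vectorization. The same projection argument then gives \Cref{eq:ls2}, with invariants being exactly the solutions of \Cref{eq:general_commmutation}.

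The main subtlety I expect is self-adjointness for groups that are not finite, where the average must be taken with respect to the normalized Haar measure. For the finite permutation groups, inversion invariance is just reindexing a sum; for compact groups it is the standard invariance of the Haar measure. A second point to state explicitly is that the factor parameterization is surjective onto the invariant space. If the basis elements are linearly independent, $\bm{f}^\star$ is unique. Otherwise the argmin is a set, but every element of it yields the same $\bm{s}(\bm{f}^\star)$, which is all the lemma needs.
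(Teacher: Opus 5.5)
Your proposal is correct and follows essentially the same route as the paper, which writes the orbit average as $P=\frac{1}{|\mathcal{G}|}\sum_j A^{(j)}$, shows $P^2=P$ by group closure and $P^\top=P$ by orthogonality and the bijection $A\mapsto A^{-1}$, and then concludes that $P$ returns the nearest invariant in square norm. Your explicit identification of the image of $P$ with the invariant space, the vectorization reduction of $\reynolds_2$ to a first-order average, and the Haar-measure remark fill in details that the paper leaves implicit.
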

This lemma implies that we can find the factors $\bm{f}$, associated with each basis function (c.f.\ \Cref{sec:commutant_algebra}) by least-squares, which can be solved analytically in most cases. 

As an example, we consider again the case in which the invariant is equal to $S_{mnop} = \delta_{mo} f_{np}$ (\Cref{sec:commutant_algebra}; \Cref{fig:orbit_average}, middle).
This example will be instrumental for deriving the equivalence with Shampoo and Muon in \Cref{sec:symo_equivalence}.
In matrix notation, the invariant is written as $S(F)=F\otimes\mathrm{I}$, where $F$ is the matrix of factors $f_{np}$.
Then, denoting by $G$ the gradient matrix of the corresponding tensor, the solution of \Cref{eq:ls2} with ${\bm{v}}={\bm{v}}'=\bm{g}=\mathrm{vec}(G)$ is equal to
\begin{equation}
  \label{eq:ls1}
  \reynolds_2({\bm{g}}, {\bm{g}}, \mathcal{G})=GG^\top\otimes\mathrm{I}.
\end{equation}
More generally, we provide a PyTorch-based library\footnote{\url{https://github.com/mtkresearch/symm_opt}} that analyses user-provided symbolic specifications of a network's symmetries (see \Cref{app:code} for example code), and just-in-time compiles them down to efficient routines for computing orbit averages such as $\sigmaw$ and $\sigmag$. Importantly, any matrix analytic function (including e.g.\ their inverse square root, useful below) has the same structure, with factors that can be computed efficiently through simple model reduction techniques (Appendix L in \citealp{bernacchia25a}).

\begin{figure}[t]
  \centering
  \includegraphics[width=0.8\linewidth]{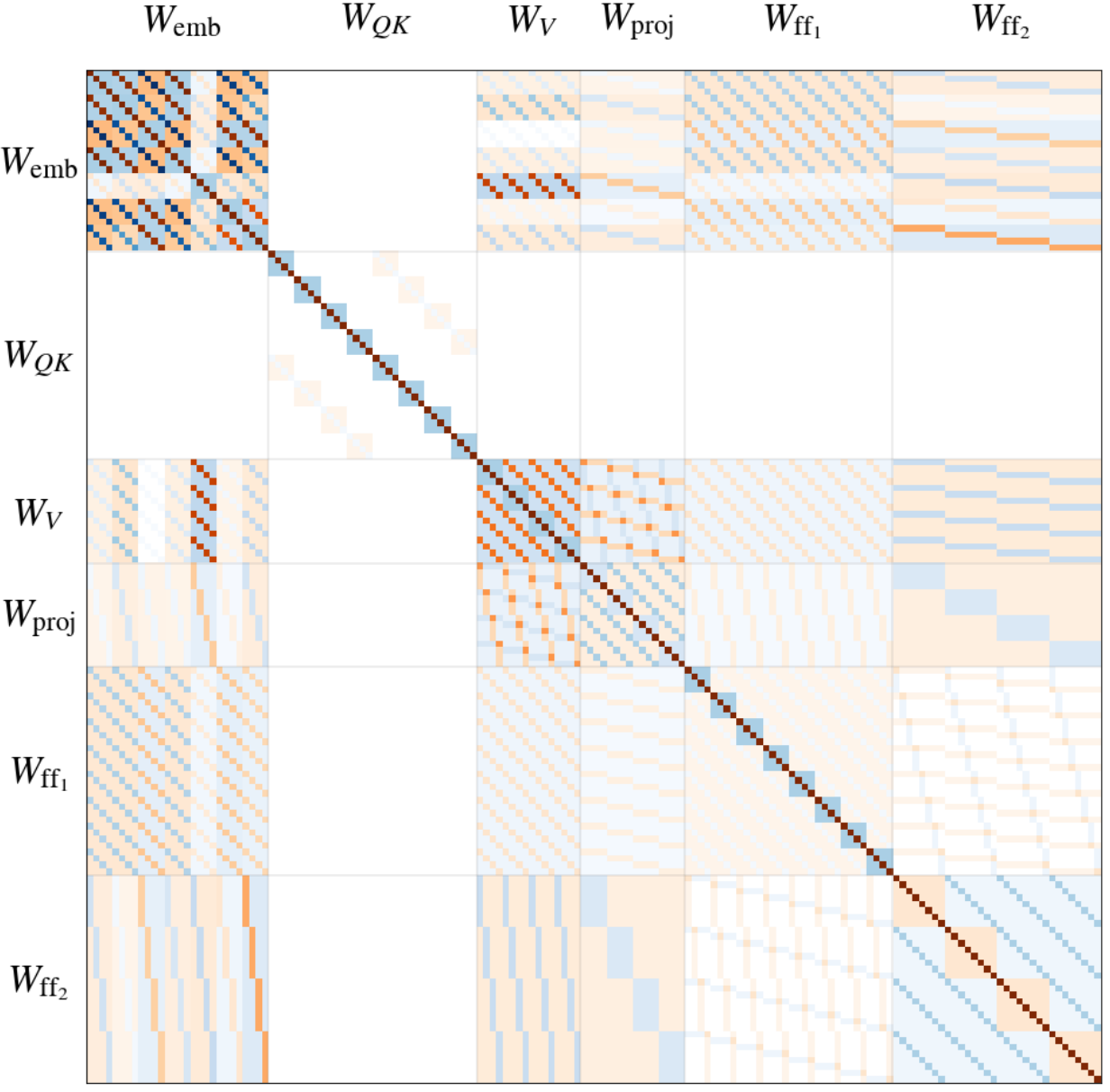}
  \caption{Orbit averaged gradient $\reynolds_2(\bm{g}, \bm{g}, \mathcal{G})$ of a Transformer layer, normalized to unit diagonal. Each block of the matrix consists of a handful of colors, reflecting the small number of bases and associated factors $\bm{f}$.}
  \label{fig:transformer}
  \vspace*{-1em}
\end{figure}

\subsection{Orbit averaging reduces to Shampoo/Muon under specific symmetry group choices}
\label{sec:symo_equivalence}

Here, we use the symmetry-based Hessian approximation of \Cref{eq:symo_1} to construct the `Symo' optimizer \citep[named for its exploitation the weight-space symmetries,][]{bernacchia25a}, which is defined by the update
\begin{equation}
\label{eq:symo_update}
    \bm{w}_{t+1} = \bm{w}_{t} - \eta \bigl(H^\star_{\bm{g}}\bigr)^{-1} \bm{g}_{t},
\end{equation}
where $H^\star_{\bm{g}}$ is defined in \Cref{eq:symo_1} (see \Cref{app:symo_optimizer} for practical details of training). 
The following Lemma shows that a specific restriction of the symmetry group recovers the Shampoo and Muon updates (proof in \Cref{app:symo_shampoo_equivalence}).

\begin{lemma}[Symo reduces to Shampoo/Muon under Kronecker block-diagonal curvature]
\label{lem:symo-shampoo}
Let $W^{(i)} \in \mathbb{R}^{n \times m}$ denote the $i$-th parameter tensor of a model, 
and let the curvature matrix $\sigmag$ be block diagonal, with the $i$-th diagonal block 
$\smash{\sigmag^{(ii)}}$ corresponding to $\smash{\bm{w}^{(i)} = \vect(W^{(i)})}$. We denote by $\smash{G^{(i)}} \in \mathbb{R}^{n \times m}$ the gradient of the loss with respect to $W^{(i)}$. Suppose each block takes the form
\begin{equation}
\label{eq:sigmag_i_kron_x}
\sigmag^{(ii)} = I \otimes F \quad \text{or} \quad \sigmag^{(ii)} = F \otimes I,
\end{equation}
where $F$ is a matrix of factors. 
Then the Symo update to $W^{(i)}$ reduces to the following:
\begin{align*}
    \begin{cases}
         \sqrt{n}~ G^{(i)} \left((G^{(i)})^{\top} G^{(i)}\right)^{-\frac12}, & \text{ when }  \sigmag^{(ii)} = I \otimes F \\
         \sqrt{m}~ \left(G^{(i)} (G^{(i)})^{\top}\right)^{-\frac12} G^{(i)}, &\text{ when } \sigmag^{(ii)} = F \otimes I
    \end{cases}
\end{align*}
Hence, the Symo update recovers the Shampoo/Muon update (up to a constant scaling factor that corresponds to whitened Shampoo \cite{gong2025towards}; \Cref{app:symo_shampoo_equivalence}). 
\end{lemma}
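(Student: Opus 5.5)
The plan is to use \Cref{lemma:least_squares} to compute the factor matrix $F$ explicitly in terms of the gradient. Then I will take the matrix square root and inverse required by \Cref{eq:symo_1} and \Cref{eq:symo_update}, exploiting that analytic functions of a Kronecker product with an identity act only on the non-identity factor. Because $\sigmag$ is assumed block diagonal, $(H^\star_{\bm{g}})^{-1}=\sigmag^{-1/2}$ is block diagonal too. The update therefore decouples across parameter tensors, and it suffices to treat a single block $\sigmag^{(ii)}$ acting on $\bm{g}^{(i)}=\vect(G^{(i)})$. I drop the superscript $(i)$ below.

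\textbf{Step 1: fix the convention and the centring.} I fix the vectorization convention so that $I\otimes F$ corresponds to right multiplication, $\vect(X)\mapsto\vect(XF)$ with $F\in\mathbb{R}^{m\times m}$. Correspondingly, $F\otimes I$ corresponds to left multiplication $X\mapsto FX$ with $F\in\mathbb{R}^{n\times n}$. This is the ordering that matches the stated $\sqrt{n}$ and $\sqrt{m}$ prefactors; the other ordering just swaps the two cases.

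I also need $\sigmag$ to be the orbit average of $\bm{g}\bm{g}^\top$ rather than of $(\bm{g}-\gstar)(\bm{g}-\gstar)^\top$. For the signed-permutation-type groups that produce this Kronecker structure, the block $\gstar$ is invariant under every $A$ in the group (\Cref{lemma:invariance} together with gradient equivariance). The only vector invariant under sign flips along an axis is $0$, so $\gstar=0$ on that block. Alternatively, I would simply state that the lemma takes $\sigmag=\reynolds_2(\bm{g},\bm{g},\mathcal{G})$ as given.

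\textbf{Step 2: solve the least-squares problem for $F$.} By \Cref{lemma:least_squares}, $F$ minimizes $\|I\otimes F-\bm{g}\bm{g}^\top\|_F^2$. In the case $I\otimes F$, I partition $\bm{g}\bm{g}^\top$ into $n\times n$ blocks of size $m\times m$, where block $(j,k)$ equals $\bm{r}_j\bm{r}_k^\top$ and $\bm{r}_j$ is the $j$-th row of $G$. The ansatz $I\otimes F$ places $F$ on the $n$ diagonal blocks and zero elsewhere. The off-diagonal blocks contribute a constant to the objective, so the objective reduces to $\sum_j\|F-\bm{r}_j\bm{r}_j^\top\|_F^2$. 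Its minimizer is the mean, $F=\frac1n\sum_j \bm{r}_j\bm{r}_j^\top=\frac1n G^\top G$. Symmetrically, in the case $F\otimes I$ one obtains $F=\frac1m GG^\top$. This matches \Cref{eq:ls1} up to the normalization constant.

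\textbf{Step 3: invert and recover Shampoo/Muon.} The identities $(I\otimes F)^{-1/2}=I\otimes F^{-1/2}$ and $(F\otimes I)^{-1/2}=F^{-1/2}\otimes I$ hold via the eigendecomposition of $F$. The update therefore becomes $G\,(\tfrac1n G^\top G)^{-1/2}=\sqrt{n}\,G(G^\top G)^{-1/2}$ in the first case, and $\sqrt{m}\,(GG^\top)^{-1/2}G$ in the second.

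Writing the SVD $G=U\Sigma V^\top$, both expressions equal the respective constant times $UV^\top$. This matches \Cref{eq:shampoo_muon_updates} and hence the Muon update and the (non-accumulated, whitened) Shampoo update.

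\textbf{Main obstacle.} The main obstacle is technical rather than conceptual: $G^\top G$ or $GG^\top$ is generally rank deficient, since $G$ is rectangular. I would handle this by interpreting $F^{-1/2}$ as the pseudo-inverse square root, or as the limit of $(F+\epsilon I)^{-1/2}$ as in Shampoo's $\epsilon I$ initialization. Since $G$ lies in the range of $F$, the product is well defined and still equals the constant times $UV^\top$.

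The secondary care point is keeping the Kronecker ordering consistent throughout, so that the correct dimension appears under the square root.
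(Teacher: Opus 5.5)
Your proposal is correct and follows the paper's own route. The paper also projects $\bm{g}\bm{g}^\top$ onto the Kronecker-structured invariants by least squares to get $F=\frac{1}{n}G^\top G$ (resp.\ $\frac{1}{m}GG^\top$), takes the inverse square root of the non-identity factor, and identifies $G(G^\top G)^{-1/2}$ with $UV^\top$. You additionally make explicit several points the paper leaves implicit: the blockwise least-squares solution, the vectorization convention (the appendix uses column-major $\mathrm{vec}$, which swaps the two cases relative to the lemma statement), $\bm{g}^\star=0$ derived from sign-flip invariance (the paper just assumes it), and the pseudo-inverse treatment of rank-deficient $G^\top G$.
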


For both an MLP (\cref{app:symo_ff_equiv}) and a Transformer (\cref{app:symo_transformer_equiv}), we can choose symmetry groups and network architectures such that \Cref{eq:sigmag_i_kron_x} holds. 
For an MLP with an even number of layers, we assign the trivial (identity) group to every even-indexed layer, while odd-indexed layers are associated with distinct signed permutation groups.
For the Transformer, we assign the identity group to all layers whose representations have the embedding dimension, which most architecture will suffice with residual connections.
Note that restricting any group to identity effectively reduces the size of the orbit, and arguably improves the approximation in \Cref{eq:taylor_exp}. From Symo’s perspective, Muon and Shampoo do not fully exploit the available symmetries; instead, they assume identity symmetries for certain layers, striking a balance between approximation quality and cost.

\section{Evaluation of the Hessian approximation}
\label{sec:hess_approx}
\input{_fig_hess_approx}

\definecolor{hess}{HTML}{E51E10}
\definecolor{hess_orbit_avg}{HTML}{08519C}
\definecolor{symo1}{HTML}{BF40BF}
\definecolor{symo1_diag}{HTML}{FF7F00}
\definecolor{symo2}{HTML}{0e7031}
\definecolor{shampoo}{HTML}{860f2b}

We evaluate the quality of our proposed approximations of the Hessian, \Cref{eq:symo_full,eq:symo_1}, and compare them with other Hessian approximations.
The comparison is based on the secant condition of \Cref{eq:secant}: we compute the cosine similarity between $\bm{g} - \bm{g}'$ and $\hat{H} (\bm{w}' - \bm{w})$, where $\bm{w}$ denotes the parameter vector at a given point during optimization, $\bm{w}' = \bm{w} + \Delta \bm{w}$, $\bm{g} = \nabla \mathcal{L}(\bm{w})$, and $\bm{g}' = \nabla \mathcal{L}(\bm{w}')$. 
For small $\Delta \bm{w}$, using the exact Hessian gives a similarity equal to one.
We consider two choices for $\Delta \bm{w}$: (1) taking a step of size $r$ in a random direction and average the cosine similarity across $N$ different random directions; (2) taking a step of size $r$ in the negative gradient direction $-\bm{g}$. 
We measure similarities at different points along the optimization trajectory (see \cref{app:hess_approx_details} for details). 

We compare against the true Hessian at \(\bm{w}\) (\(H = \nabla^{2} \mathcal{L}(\bm{w})\)) and five other Hessian approximations:
(i) the true Hessian evaluated at the orbit-averaged parameters, \(H^\star\) (\Cref{eq:taylor_exp});
(ii) \(H^\star_\text{PD}\) (\Cref{eq:symo_full});
(iii) \(H^\star_{\bm{g}}\) (\Cref{eq:symo_1});
(iv) block-diagonal \(H^\star_{\bm{g}}\) (denoted as \(H^\star_{\bm{g}} (\text{BD})\));
(v) the Shampoo Hessian approximation (\Cref{sec:shampoo_muon}).
Since we would like to compute exact Hessians for comparison, we consider a small four-layer MLP model in a teacher-student setting (mapping the input \(x \in \mathbb{R}^{100}\) through layer widths \(70 \rightarrow 70 \rightarrow 70 \rightarrow 40 \); \texttt{tanh} activation function; $D=$\(19{,}850\) parameters in total; see Appendix~\ref{app:hess_approx_details} for full details). 
To obtain mathematical equivalence between $H^\star_{\bm{g}} (\text{BD})$ and Shampoo's curvature matrix, we assign the trivial (identity) group to the input, output and middle layers, and a distinct signed permutation group to the first and last layer (see \Cref{sec:symo_equivalence}). 
We note that, while $H^\star_{\bm{g}}\,(\mathrm{BD})$ is equivalent to Shampoo in terms of the optimization \textit{update}, their Hessian approximations coincide only up to a scaling factor and when \Cref{eq:shampoo_symo_curvature_eqn} is satisfied (see \Cref{app:symo_shampoo_equivalence}).

In random directions, \textcolor{shampoo}{Shampoo} exhibits the highest similarity among all approximations (slightly above \(0.2\)) (\Cref{fig:hess_approx}, left). 
All three of our Hessian approximations -- \textcolor{symo2}{$H^\star_\text{PD}$}, \textcolor{symo1}{$H^\star_{\bm{g}}$}, and \textcolor{symo1_diag}{$H^\star_{\bm{g}}\,(\mathrm{BD})$} exhibit comparable Hessian approximation quality (between \(0.1\) and \(0.2\)). While \textcolor{hess_orbit_avg}{$H^\star$} has near zero cosine similarity.
We note that similarities of \(0.1\) - \(0.2\) are quite high considering the high dimensionality of the parameter space, since the average cosine similarity between two random unit vectors is approximately \(0.007 \approx 1/\sqrt{D}\).

Along the true gradient direction, which is more relevant for optimization, all three of our Hessian approximations achieve consistently higher similarity across training iterations (\Cref{fig:hess_approx}, right). 
Firstly, \textcolor{symo1}{$H^\star_{\bm{g}}$} achieving better approximation to the Hessian compared to \textcolor{symo2}{$H^\star_\text{PD}$}, despite being an approximation to the latter, is due to the rank deficiency of $\sigmaw$ in \Cref{eq:symo_full} (see \Cref{app:deficiency} for empirical verification). Inversion of $\sigmaw$ amplifies noise along singular directions which makes \textcolor{symo2}{$H^\star_\text{PD}$} less stable than \textcolor{symo1}{$H^\star_{\bm{g}}$}, providing strong motivation for using the approximation $\sigmaw \sim I$ beyond computational savings. 
Secondly, restricting \textcolor{symo1}{$H^\star_{\bm{g}}$} to be block-diagonal (\textcolor{symo1_diag}{$H^\star_{\bm{g}}\,(\mathrm{BD})$}) does not degrade curvature estimation, indicating that off-diagonal contributions to curvature are small and we can gain further computational savings by using a block-diagonal approximation. 
In contrast, \textcolor{hess_orbit_avg}{$H^\star$} approximates the Hessian very poorly in this direction, evident in its consistently low (nearly \(0\)) and occasionally negative cosine similarity with the Hessian. 
Finally, we demonstrate the equivalence between \textcolor{symo1_diag}{$H^\star_{\bm{g}}\,(\mathrm{BD})$} and \textcolor{shampoo}{Shampoo} (underneath the \textcolor{symo1_diag}{$H^\star_{\bm{g}}\,(\mathrm{BD})$} curve), as their cosine similarity curves overlap along the negative gradient direction. 
These observations provide a solid basis for interpreting the effectiveness of our approach in capturing curvature along optimization-relevant directions.

\section{Empirical equivalence with Shampoo/Muon}
\label{sec:empirical_equivalence}
In \Cref{sec:symo_equivalence} and \Cref{sec:hess_approx}, we provided both theoretical and empirical justifications for the equivalence between Symo and Shampoo/Muon updates when restricting group symmetries. In this section, we demonstrate their equivalence in more practical settings. In particular, we show that Symo's training curves closely match those of Muon, even in scenarios where the two optimizers are not mathematically identical due to differences in scaling and training heuristics. We illustrate this \textit{empirical} equivalence with two tasks: an MNIST deep autoencoder with an MLP, and a character-level text prediction task with nanoGPT.
\input{_fig_autoencoder_nanogpt}
\subsection{MNIST autoencoder}
\label{sec:autoencoder}
We train a deep autoencoder on the MNIST dataset \cite{lecun1998mnist,martens2015kfac, goldfarb2020practical}. 
The model is a fully-connected symmetric autoencoder with affine layers (including biases) and \texttt{tanh} nonlinearities, mapping the input \(x \in \mathbb{R}^{784}\) through layers of sizes $784 \rightarrow 1000 \rightarrow 500 \rightarrow 250 \rightarrow 30 \rightarrow 250 \rightarrow 500 \rightarrow 1000 \rightarrow 784 $, with a \texttt{sigmoid} activation at the output layer. 
The network has 2.8M learnable parameters (details of experimental setup and hyperparameters in \Cref{app:autoencoder_details,app:hyperparams}).

The conditions for exact mathematical equivalence between Symo and Muon for this autoencoder task are listed in List~\ref{enum:symo_muon_conditions_ae}. Importantly, we only satisfy the following: \Cref{item:symo_muon_eq_bd}, corresponding to restricting the Symo preconditioner to be block-diagonal;
\Cref{item:symo_muon_eq_symm}, corresponding to applying identity symmetry for the input layer and alternating between signed permutation symmetries and identity symmetries for subsequent layers; and 
\Cref{item:symo_muon_eq_avg}, corresponding to assuming the orbit average of the gradients to be zero (\Cref{app:autoencoder_details}). Empirically, the equivalence appears relatively insensitive to the specific choice of nonlinear activation; for example, \textcolor{symo1_diag}{\textbf{Symo}} and \textcolor{shampoo}{Muon} behave near-identically on MNIST with ReLU activation, and in nanoGPT with GELU activation, neither of which lead to exact signed permutation invariance.

\textcolor{symo1_diag}{\textbf{Symo}} and \textcolor{shampoo}{Muon} both outperform Adam, and the near perfect overlap in their respective training curves empirically demonstrates their equivalence (\Cref{fig:mnist_nanogpt},  left). We attribute the very small differences between \textcolor{symo1_diag}{\textbf{Symo}} and \textcolor{shampoo}{Muon} curves as arising from \Cref{item:symo_muon_eq_scaling}, \Cref{item:symo_muon_eq_muon_scaling}, \Cref{item:symo_muon_eq_ns}, and \Cref{item:symo_muon_eq_adam_for_bias} not being satisfied. We deliberately chose not to satisfy these conditions: \Cref{item:symo_muon_eq_scaling} would break the mathematical soundness of Symo, whereas \Cref{item:symo_muon_eq_muon_scaling}, \Cref{item:symo_muon_eq_ns}, and \Cref{item:symo_muon_eq_adam_for_bias} are Muon-specific training heuristics that improve convergence speed and training efficiency \cite{jordan2024muon}. 

\subsection{NanoGPT}
\label{sec:nanogpt}
Next, we train a transformer (nanoGPT; \citealp{karpathy2026nanoGPT}) to perform a character-level prediction task on the Shakespeare dataset \cite{tinyshakespeare}. 
The model consists of \(6\) transformer layers with 6 attention heads per layer and a hidden embedding dimension of \(384\). The network has approximately 10 million learnable parameters (see details of experimental setup in \Cref{app:nanogpt_details} and hyperparameters in \Cref{app:hyperparams}).
This compact architecture serves as a lightweight benchmark for evaluating optimization methods on sequence modeling tasks.

The conditions for exact mathematical equivalence between Symo and Muon for the nanoGPT task are listed in List~\ref{enum:symo_muon_conditions_gpt}.
We satisfy only \Cref{item:symo_muon_eq_bd}, \Cref{item:symo_muon_eq_gpt_symm}, and \Cref{item:symo_muon_eq_gpt_avg}, which are the same three conditions satisfied in the autoencoder setting, as well as \Cref{item:symo_muon_eq_gpt_adam_for_bias}, corresponding to the use of Adam for training specific subsets of weights (see \Cref{app:nanogpt_details}).

\textcolor{symo1_diag}{\textbf{Symo}} and \textcolor{shampoo}{Muon} again produce closely matching training and evaluation curves, while outperforming Adam (\Cref{fig:mnist_nanogpt}, right). Any small differences between them arise from \Cref{item:symo_muon_eq_scaling}, \Cref{item:symo_muon_eq_muon_scaling}, \Cref{item:symo_muon_eq_ns} not being satisfied. We chose not to satisfy these conditions for the same reasons as explained in \Cref{sec:autoencoder}. Together, the autoencoder and nanoGPT experiments demonstrate that \textcolor{symo1_diag}{\textbf{Symo}} and \textcolor{shampoo}{Muon} can exhibit practical equivalence in real-world settings, while preserving the mathematical soundness of Symo and the effective training heuristics of Muon.

\section{Exploring choices of symmetry groups} 
\label{sec:symm_choice}

\definecolor{symo}{HTML}{1F78B4}
\definecolor{symolarger}{HTML}{006D2C}
\definecolor{symosmaller}{HTML}{FF7F00}
 
In \Cref{fig:autoenc}, we show how different choices of symmetry group -- which all leave the loss invariant -- affect the performance of our symmetry-based optimizer. We train deep (ReLU) MLPs on a standard autoencoders benchmark \citep{goldfarb2020practical}, and compare Adam, Shampoo, and Symo with various choices of symmetry groups, with learning rates optimized through grid search (\Cref{app:autoenc}).
The largest group to which deep MLPs are universally invariant is one in which each internal layer is permuted, but where the input and output layers are pinned (referred to as simply “\textcolor{symo}{\textbf{Symo}}”).
This optimizer is already much closer to Shampoo than to Adam.
Interestingly, even with the largest possible group allowed by the autoencoding loss -- whereby all layers, including inputs and outputs, are permuted,
resulting in a Hessian approximation with only 64 free parameters~-- \textcolor{symolarger}{\textbf{Symo} (larger group)} still outperforms Adam by a large margin.
Conversely, by allowing permutations only for every odd layer and discarding off-diagonal Hessian blocks (\textcolor{symosmaller}{\textbf{Symo} (smaller group)}), the expressiveness of the Hessian approximation becomes similar to Shampoo's and yields comparable optimization performance. This suggests that this particular choice of symmetry group hits a sweet spot in terms of tractability and expressiveness.

\begin{figure}[t]
   \definecolor{muon}{HTML}{860F2B}
   \centering
    \includegraphics[width=1.\linewidth]{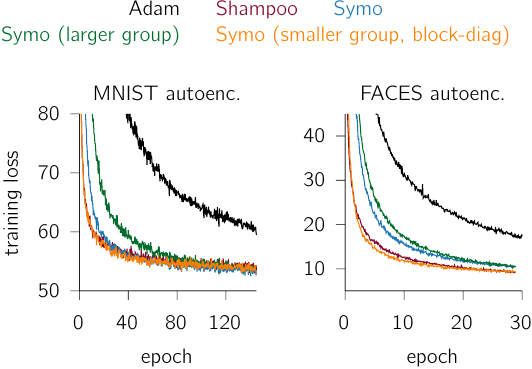}
    \caption{
    Exploring various choices of symmetry groups on autoencoder optimization benchmarks. See \citet{goldfarb2020practical} for details of datasets and comparisons to other popular 2nd-order optimizers.
    For \textcolor{symo}{\textbf{Symo}}, we considered the invariant action of group where every layer within the autoencoder can be permuted, except for the input and output layers.
    In \textcolor{symolarger}{\textbf{Symo (larger group)}}, this group is enlarged to also include a simultaneous permutation of the input and output layers (which also leaves the autoencoding loss invariant).
    In \textcolor{symosmaller}{\textbf{Symo (smaller group)}}, we restrict the group such that only every other layer in the autoencoder is permuted, and restrict the Hessian approximation to block-diagonal form. Theoretically, this gives approximate equivalence to \textcolor{muon}{\textbf{Shampoo}} (c.f.\ \Cref{sec:symo_equivalence}), which is corroborated numerically here.
}
    \label{fig:autoenc}
\end{figure}

\section{Summary and limitations}

This work introduced a new framework for approximating the curvature of large neural networks by harnessing weight-space symmetries. We demonstrated that by averaging over the orbit generated by a single gradient, we can obtain useful curvature information which can then be used for optimization, recovering Shampoo/Muon as special cases.
Our approach could inspire novel curvature approximations by considering symmetries inherent to different model classes. Beyond optimization, future work may extend this framework to applications like Bayesian inference, continual learning, and model compression.

Nevertheless, the framework has certain limitations. 
The accuracy of the second-order Taylor expansion around the orbit average (\Cref{eq:taylor_exp}) remains unclear.
We showed that reducing the size of the symmetry group, and therefore the orbit size, recovered well known optimizers, such as Shampoo and Muon.
We provide a preliminary bound on the Taylor approximation error in terms of the orbit radius (\cref{lem:orbit_distance}) confirming that smaller groups yield tighter approximations. However, bounding the orbit radius for \emph{trained} (non-random) weights, and understanding how the Hessian Lipschitz constant $M$ interacts with architecture and training dynamics, remain open questions for future work.

We note that our framework provides a principled route from symmetries to structured curvature approximations, and we show empirically that these approximations correlate well with the true Hessian along the gradient direction (see \cref{fig:hess_approx}). However, we do not establish a formal guarantee that Shampoo/Muon's preconditioner shares eigenvectors with the Hessian, nor do we prove that the gradient outer product $\bm{g}\bm{g}^\top$ converges to the Hessian in any formal sense. Establishing such a connection rigorously remains an open problem -- one that, to our knowledge, no existing work has resolved either.

Additionally, as with many curvature approximations, we rely in practice on a block-diagonal Hessian due to memory constraints -- for example, in the nanoGPT task (\Cref{sec:nanogpt}) -- which leaves cross-tensor interactions unexplored.
The possibility of improving curvature approximations by including off-diagonal blocks, in a scalable way, may be the subject of further investigation.  

In terms of practical significance, although we have automated the analytical solutions for invariance equations in \Cref{eq:general_commmutation} under any combination of common groups, meaningful comparison of computational and memory cost with other optimizers would still require more engineering time to ensure parallel computations in face of highly heterogeneous group structures. Currently, users need only specify the group settings, as shown in \Cref{app:code}, and our library automates the derivation of orbit averages. However, group selections for new architectures remains an open question, and the full automation based on just the computational graph of any architectures are left for future work. 

\section*{Impact statement}
This paper presents work whose goal is to advance the field of machine learning. There are many potential societal consequences of our work, none of which we feel must be specifically highlighted here.

\bibliographystyle{plainnat}
\bibliography{main}

\appendix
\onecolumn
\newpage

\section{Proof of \Cref{lemma:invariance}}
\label{app:invariance}

We rewrite the definition of orbit average
\begin{equation}
\reynolds_1(\bm{v}, \mathcal{G}) \equiv \mathbb{E}_{\mathcal{G}} 
        \left[ \left( \bigotimes_{k=1}^d A_{i(k)} \right) \bm{v} \right] = \mathbb{E}_{\mathcal{G}}(A\bm{v})\quad\mathrm{with}\quad A\equiv \bigotimes_{k=1}^d A_{i(k)}.
\end{equation}
This is a linear operator acting on the vectorized tensor $\bm{v}$, that can be represented by the matrix $P$:
\begin{equation}
\reynolds_1(\bm{v}, \mathcal{G}) \equiv P\bm{v} \quad\mathrm{with}\quad P \equiv \frac{1}{|\mathcal{G}|}\sum_jA^{(j)}.
\end{equation}
The index $j$ runs over all members of the group $\mathcal{G}$, and we recall that $\mathcal{G}=\mathcal{G}_0\times\mathcal{G}_1\times\ldots\mathcal{G}_m$ is the direct product of groups acting on all tensors, and is itself a group.

By definition of group, given indices $i$ and $j$, there exists an index $k$ such that $A^{(i)}A^{(j)}=A^{(k)}$, therefore
\begin{align}
A^{(i)}\reynolds_1(\bm{v}, \mathcal{G}) = A^{(i)}P\bm{v} = \frac{1}{|\mathcal{G}|}\sum_jA^{(i)}A^{(j)}\bm{v}=\frac{1}{|\mathcal{G}|}\sum_kA^{(k)}\bm{v}=\reynolds_1(\bm{v}, \mathcal{G}).
\end{align}
Similar steps can be followed for proving the result for $\reynolds_2$.

When $\mathcal{G}$ contains continuous orthogonal groups, the finite sum $\frac{1}{|\mathcal{G}|}\sum_jA^{(j)} (\cdot)$ is replaced by the Haar integral $\int_\mathcal{{G}}(\cdot)d\mu (\mathcal{G})$. The group closure property also holds under Haar integral, hence the Lemma extends to continuous orthogonal groups.

\section{Proof of formula (\ref{eq:symo_full}) for the Hessian}
\label{app:Hess_solution}

We rewrite \Cref{eq:curvature_condition}
\begin{align}
    \sigmag = H^\star \sigmaw H^\star.
    \label{eq:curvature_condition2}
\end{align}
We note that, by definition, $\sigmaw$ and $\sigmag$ are positive semidefinite.
Multiplying left and right of \Cref{eq:curvature_condition2} by $\sigmaw^{\frac{1}{2}}$ (defined as the symmetric square root) we obtain
\begin{align}
\sigmaw^{\frac{1}{2}}\sigmag\sigmaw^{\frac{1}{2}}=\left(\sigmaw^{\frac{1}{2}}H^*\sigmaw^{\frac{1}{2}}\right)^2.
\label{eq:square}
\end{align}
We note that the left hand side is positive semidefinite.
We denote its eigenvalue decomposition as
\begin{align}
\sigmaw^{\frac{1}{2}}\sigmag\sigmaw^{\frac{1}{2}}=VD^2V^\top
\end{align}
where columns of $V$ and the diagonal of $D^2$ are, respectively, its orthogonal eigenvectors and eigenvalues.
Then, assuming that $\sigmaw$ is not singular, the general solution of \Cref{eq:square} is
\begin{align}
H^*=\sigmaw^{-\frac{1}{2}}V\left(\pm D\right)V^\top\sigmaw^{-\frac{1}{2}}.
\end{align}
The expression $(\pm D)$ denotes the $2^p$ possible diagonal matrices with arbitrary signs for its diagonal entries.
By Sylvester's law of inertia, since $H^*$ is a congruent transform of $(\pm D)$, it has the same number of positive and negative eigenvalues.
Therefore, the number of positive and negative eigenvalues of $H^*$ is arbitrary and the only positive definite solution is 
\begin{equation}
   H^\star_\text{PD} = \sigmaw^{-\frac12} \left(\sigmaw^{\frac12} \sigmag \sigmaw^{\frac12}\right)^{\frac12} \sigmaw^{-\frac12}.
\end{equation}

\section{Taylor approximation quality}
\label{sec:taylor_quality}

\begin{lemma}[Orbit distance and Taylor approximation quality]
\label{lem:orbit_distance}
Let $L: \mathbb{R}^d \to \mathbb{R}$ be twice continuously differentiable with $M$-Lipschitz continuous Hessian, i.e.\ $\|\nabla^2 L(u) - \nabla^2 L(v)\| \leq M \|u - v\|$ for all $u, v$. Let $\mathcal{G}$ be an orthogonal symmetry group under which $L$ is invariant, and let $P \equiv \mathcal{R}_1(\cdot, \mathcal{G})$ denote the orthogonal projection onto the $\mathcal{G}$-invariant subspace $V \subseteq \mathbb{R}^d$ (\cref{lemma:least_squares}). Define $w^\star \equiv Pw$. Then:

\begin{enumerate}[label=(\roman*)]
    \item \textbf{Equidistance}. Every point on the orbit is equidistant from~$w^\star$:
    \begin{equation}
        \|Aw - w^\star\| = \|w - w^\star\|, \quad \forall\, A \in \mathcal{G}.
    \end{equation}

    \item \textbf{Secant error bound}. The error in the secant condition, \cref{eq:secant}, satisfies
    \begin{equation}
        \|\nabla L(Aw) - g^\star - H^\star(Aw - w^\star)\| \leq \frac{M}{2}\|w - w^\star\|^2, \quad \forall\, A \in \mathcal{G},
    \end{equation}
    where $g^\star \equiv \nabla L(w^\star)$ and $H^\star \equiv \nabla^2 L(w^\star)$.

    \item \textbf{Expected orbit radius}. If $w \sim \mathcal{N}(0, \sigma^2 I_d)$, then
    \begin{equation}
        \frac{\mathbb{E}\|w - w^\star\|^2}{\mathbb{E}\|w\|^2} = 1 - \frac{\dim(V)}{d}.
    \end{equation}
\end{enumerate}

In particular, restricting to a smaller group $\mathcal{G}' \subset \mathcal{G}$ enlarges the invariant subspace ($\dim(V') \geq \dim(V)$), reduces $\|w - w^\star\|$, and tightens the secant error bound.
\end{lemma}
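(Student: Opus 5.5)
We prove the three parts in order, using throughout that every $A\in\mathcal{G}$ is orthogonal and, by \Cref{lemma:invariance}, fixes $w^\star$, i.e.\ $Aw^\star=w^\star$. By \Cref{lemma:least_squares}, $P$ is the orthogonal projection onto $V$, so $P=P^\top=P^2$ and $w-w^\star=(I-P)w$ lies in $V^\perp$.

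For \textbf{(i)}, write $Aw-w^\star = Aw - Aw^\star = A(w-w^\star)$. Orthogonality of $A$ then gives $\|Aw-w^\star\|=\|w-w^\star\|$ directly.

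For \textbf{(ii)}, apply the standard integral form of Taylor's theorem to $\nabla L$ along the segment from $w^\star$ to $u\equiv Aw$:
\begin{equation*}
\nabla L(u)-\nabla L(w^\star)-\nabla^2L(w^\star)(u-w^\star)=\int_0^1\bigl[\nabla^2L(w^\star+t(u-w^\star))-\nabla^2L(w^\star)\bigr](u-w^\star)\,dt .
\end{equation*}
The Lipschitz assumption bounds the bracket in operator norm by $Mt\|u-w^\star\|$. Integrating $t$ over $[0,1]$ gives the bound $\frac{M}{2}\|u-w^\star\|^2$, and part (i) replaces $\|u-w^\star\|$ by $\|w-w^\star\|$. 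Two points need care. First, we need $\nabla L(Aw)$ itself, not a surrogate; we use the left side exactly as written, so equivariance is not required here, though it explains why this quantity equals $A g$. Second, the statement writes $H^\star(Aw-w^\star)$, which matches our expression because $g^\star=\nabla L(w^\star)$ and $H^\star=\nabla^2 L(w^\star)$.

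For \textbf{(iii)}, compute $\mathbb{E}\|(I-P)w\|^2=\sigma^2\operatorname{tr}(I-P)$ and $\mathbb{E}\|w\|^2=\sigma^2 d$. Since $P$ is an orthogonal projection, $\operatorname{tr}P=\operatorname{rank}P=\dim V$, so the ratio is $1-\dim(V)/d$.

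For the closing claim, if $\mathcal{G}'\subset\mathcal{G}$, any vector fixed by all of $\mathcal{G}$ is fixed by all of $\mathcal{G}'$, so $V\subseteq V'$ and $\dim V'\ge\dim V$. Then $(I-P')w$ is the distance from $w$ to the larger subspace $V'$, which is at most the distance to $V$. This gives $\|w-w'^\star\|\le\|w-w^\star\|$ pointwise, not just in expectation, and hence a tighter bound in (ii).

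The main obstacle is the reliance on $P$ being an orthogonal projection onto the fixed subspace, including $\operatorname{tr}P=\dim V$. We take this from \Cref{lemma:least_squares}. If a self-contained argument is needed, $P^2=P$ follows from group closure as in \Cref{app:invariance}, $P^\top=P$ follows since $A^\top=A^{-1}\in\mathcal{G}$, and $\operatorname{range}P=V$ follows since $Pv=v$ for invariant $v$. With Haar integrals in place of finite sums, the same steps hold for continuous groups.
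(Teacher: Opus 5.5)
Your proof is correct and follows the paper's argument essentially step by step. Part (i) comes from $Aw^\star=w^\star$ plus orthogonality; part (ii) from the standard Lipschitz-Hessian Taylor remainder bound, which you derive inline via the integral form rather than citing Nesterov's Lemma~1.2.4, and without the equivariance rewrite the paper uses; part (iii) from $\mathbb{E}\|(I-P)w\|^2=\sigma^2\operatorname{tr}(I-P)$. For the closing claim, your pointwise argument $\operatorname{dist}(w,V')\le\operatorname{dist}(w,V)$ is slightly sharper than the paper's, which argues only through $\dim(V')\ge\dim(V)$ and the expected orbit radius.
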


\begin{proof}
\textbf{(i)}
By \cref{lemma:invariance}, $Aw^\star = w^\star$ for all $A \in \mathcal{G}$. Since $A$ is orthogonal,
\begin{equation}
    \|Aw - w^\star\|^2 = \|A(w - w^\star)\|^2 = (w - w^\star)^\top A^\top A (w - w^\star) = \|w - w^\star\|^2.
\end{equation}

\textbf{(ii).}
Since $L$ has $M$-Lipschitz continuous Hessian, the standard Taylor remainder bound \citep[Lemma~1.2.4]{nesterov2018lectures}  for any $u$ in the domain gives,
\begin{equation}
    \|\nabla L(u) - \nabla L(w^\star) - H^\star (u - w^\star)\| \leq \frac{M}{2}\|u - w^\star\|^2.
\end{equation}
Substituting $u = Aw$ and using gradient equivariance $\nabla L(Aw) = A\nabla L(w)$ together with \cref{lemma:invariance}, i.e. $Aw^\star = w^\star$ and $Ag^\star = g^\star$, the left-hand side becomes $\|A(g - g^\star) - H^\star A(w - w^\star)\|$. The right-hand side equals $\frac{M}{2}\|Aw - w^\star\|^2 = \frac{M}{2}\|w - w^\star\|^2$ by (i).

\textbf{(iii).}
Since $P$ is an orthogonal projection (\cref{lemma:least_squares}),
$\|w - w^\star\|^2 = \|(I{-}P)w\|^2 = \|w\|^2 - \|Pw\|^2$.
For $w \sim \mathcal{N}(0, \sigma^2 I_d)$, $Pw$ is the projection of an isotropic Gaussian onto a $\dim(V)$-dimensional subspace, so $\mathbb{E}\|Pw\|^2 = \textrm{tr}(P \mathbb{E}[w w^\top])  = \sigma^2 \dim(V)$ and $\mathbb{E}\|w\|^2 = \sigma^2 d$. Hence
\begin{equation}
    \frac{\mathbb{E}\|w - w^\star\|^2}{\mathbb{E}\|w\|^2} = \frac{\sigma^2(d - \dim(V))}{\sigma^2 d} = 1 - \frac{\dim(V)}{d}.
\end{equation}

Finally, if $\mathcal{G}' \subset \mathcal{G}$, then $V \subseteq V'$ meaning that a smaller group imposes fewer constraints and therefore more vectors are invariant giving $\dim(V') \geq \dim(V)$, a smaller expected orbit radius, and therefore a tighter secant error bound.
\end{proof}

\section{Algebraic Structure of Network Symmetries}
\label{app:algebraic_structure}

\subsection{Definitions and Notation}

We formalize the algebraic structures used to describe the geometry of the weight space. Let $\mathcal{V}$ be a finite-dimensional vector space over $\mathbb{R}$.
\begin{definition}[Automorphism Group.]
    The automorphism group of $\mathcal{V}$, denoted $\text{Aut}(\mathcal{V})$, is the set of all invertible linear transformations from $\mathcal{V}$ to itself. This coincides with the general linear group $\GL(\mathcal{V}) \cong \GL_n(\mathbb{R})$ where $n = \dim(\mathcal{V})$. The group operation is function composition, i.e. matrix multiplication.
\end{definition}
\begin{definition}[Direct Product of Groups.]
    Let $\mathcal{G}_1, \dots, \mathcal{G}_k$ be a collection of groups. Their direct product $\mathcal{G} = \mathcal{G}_1 \times \dots \times \mathcal{G}_k = \prod_{i=1}^k \mathcal{G}_i$ is the set of tuples $(g_1, \dots, g_k)$ with component-wise operations. If $g = (g_1, \dots, g_k)$ and $h = (h_1, \dots, h_k)$, then $g \cdot h = (g_1 h_1, \dots, g_k h_k)$. The order of the product group is the product of the orders of the component groups \cite{lang2012algebra}.
\end{definition}
\begin{definition}[Direct Sum of Representations.]
    Let $\rho_1: \mathcal{G} \to \GL(V_1)$ and $\rho_2: \mathcal{G} \to \GL(V_2)$ be representations of a group $\mathcal{G}$. The direct sum representation $\rho = \rho_1 \oplus \rho_2$ acts on the vector space $V_1 \oplus V_2$. The matrix representation of this action is block-diagonal:
    \begin{equation}
        (\rho_1 \oplus \rho_2)(a) = \begin{pmatrix} \rho_1(a) & 0 \\ 0 & \rho_2(a) \end{pmatrix}.
    \end{equation}
    This construction allows the simultaneous analysis of independent actions on disjoint subspaces \cite{fulton2013representation}.
\end{definition}
\begin{definition}[Commutant Algebra] \label{def:commutant_algebra}
    Let $V$ be a vector space. The commutant algebra is simply the collection of all linear operators $T$ that commute with every transformation in the group $\mathcal{G}$, such that:
    \begin{equation}
        T \rho(a) = \rho(a) T, \quad \text{for all } a \in \mathcal{G}.
    \end{equation}
\end{definition}
Note that \cref{def:commutant_algebra} is also often called \textit{centralizer algebra}. Commutant hightlights algebraic property, whereas centralizer focuses on geometric stability, i.e. $T$ is central to the group $\mathcal{G}$.

\subsection{Derivation of the Kronecker Representation}

We demonstrate the origin of the tensor product structure in Eq.~\cref{eq:representation_action} using a deep linear network. Consider a network with $L$ layers defined by the composition of weight matrices $\{W_l\}_{l=0}^{L-1}$ where $W_l \in \mathbb{R}^{d_{l+1} \times d_l}$. The output $\mathbf{y}$ for an input $\mathbf{x}$ is:
\begin{equation}
    \mathbf{y} = W_{L-1} W_{L-2} \dots W_0 \mathbf{x}.
\end{equation}
We introduce a set of invertible transformations defined by the abstract group elements $\{g_l\}_{l=0}^L$, where $g_l$ belongs to the symmetry group $\mathcal{G}_l$ of layer $l$. Let $G_l \in \GL(d_l)$ denote the matrix representation of the element $g_l$ acting on the vector space $\mathbb{R}^{d_l}$, formally $G_l \equiv \rho_l(g_l)$. Inserting the identity operator $I = G_l^{-1} G_l$ between adjacent layers preserves the functional mapping:
\begin{align} \label{eq:adjacent_layers}
    \mathbf{y} &= W_{L-1} (A_{L-1}^{-1} A_{L-1}) W_{L-2} \dots (A_1^{-1} A_1) W_0 \mathbf{x} \\
    &= (W_{L-1} A_{L-1}^{-1}) (A_{L-1} W_{L-2} A_{L-2}^{-1}) \dots (A_1 W_0) \mathbf{x}.
\end{align}
To maintain strict invariance of the output $\mathbf{y}$ with respect to the input $\mathbf{x}$ (assuming $G_0 = I$ and $G_L = I$ or absorbing external transforms into the data), the weights must transform according to the rule:
\begin{equation} \label{eq:matrix_transform}
    W'_l = A_{l+1} W_l A_l^{-1}.
\end{equation}
We vectorize the parameters to define the action on the total parameter vector $\bm{w}$. We utilize the vectorization identity $\text{vec}(A X B) = (B^\top \otimes A) \text{vec}(X)$. Applying this to Eq.~\eqref{eq:matrix_transform}:
\begin{equation}
    \text{vec}(W'_l) = \text{vec}(A_{l+1} W_l A_l^{-1}) = (A_l^{-T} \otimes A_{l+1}) \text{vec}(W_l).
\end{equation}
Restricting our analysis to orthogonal groups where $A^{-1} = A^\top$, the transformation simplifies to:
\begin{equation}
    \text{vec}(W'_l) = (A_l \otimes A_{l+1}) \text{vec}(W_l).
\end{equation}
This confirms that the action on a single weight matrix is the tensor product of the actions on its input and output spaces. Stacking the vectorized weights of all layers $\bm{w} = [\text{vec}(W_0)^\top, \dots, \text{vec}(W_{L-1})^\top]^\top$, the global group action $\rho(a)$ becomes the direct sum of these Kronecker products:
\begin{equation}
    \rho(a) \bm{w} = \left[ \bigoplus_{l=0}^{L-1} (A_l \otimes A_{l+1}) \right] \bm{w}.
\end{equation}
This derivation justifies the block-diagonal Kronecker structure presented in Eq.~\eqref{eq:representation_action}.

\subsection{Formal Group representation of neural networks}

In \Cref{sec:geometry}, we defined that neural networks as a composition of linear maps $\{W_l\}_{l=0}^{L-1}$ interleaved with non-linearities. The weights for each layer $W_l$ live in the space of linear maps $\text{Hom}(\mathcal{V}_l, \mathcal{V}_{l+1})$. Therefore, we can write the global vectorized parameter vector space $\mathcal{W}$ of the whole network as the \textit{direct sum} of individual layer's vector spaces:
\begin{equation} \label{eq:weight_vector_hom}
    \mathcal{W} = \bigoplus_{l=0}^{L-1} \text{Hom}(\mathcal{V}_l, \mathcal{V}_{l+1}) \cong \bigoplus_{l=0}^{L-1} (\mathcal{V}_{l+1} \otimes \mathcal{V}_l^*).
\end{equation}
In fact, when network consists of weights which are tensors of order higher than $2$, we generalize notation at \cref{eq:weight_vector_hom}  to tensor homomorphisms:
\begin{equation} \label{eq:weight_tensor_hom}
    \mathcal{W} = \bigoplus_{l=0}^{L-1} \text{Hom}\left(\bigotimes_{v_l} \mathcal{V}_{v_l}^\text{in}, \bigotimes_{m_l} \mathcal{V}_{m_l}^\text{out}\right) \cong \bigoplus_{l=0}^{L-1} \left[\left(\bigotimes_{m_l} \mathcal{V}_{m_l}^\text{out}\right) \otimes \left(\bigotimes_{v_l} \left(\mathcal{V}_{v_l}^{\text{in}}\right)^*\right)\right].
\end{equation}
The loss function $\mathcal{L}(\bm{w})$ (and seems like network itself) is invariant under group actions which in turn act on the hidden layer vector spaces $\{\mathcal{V}_l\}$. Let $\text{Aut}(\mathcal{V}_l)$ denote the group of automorphisms at layer $l$, i.e. transformations that preserve the structure of the layer with respect to the loss. Note that when a basis of single layer $\mathcal{V}_l$ vector space changes, it changes together with the rows of the incoming weights $W_{l-1}$ and the columns of the outgoing weights $W_l$. For a standard feedforward architecture, layer-wise tranformations of corresponding vector spaces are independent, giving  to the \textit{global} symmetry group $\mathcal{G}$ the structure of a \textit{direct product}:
\begin{equation} \label{eq:group_structure}
    \mathcal{G} = \prod_{l=0}^{L} \text{Aut}(\mathcal{V}_l).
\end{equation}
The topology of other architectures, e.g., ResNets, will reinforce  constraints, i.e. $\mathcal{V}_l \equiv \mathcal{V}_{l+k}$ for some $l$ and $k \le L - l$. That changes the simple direct product structure of the global group $\mathcal{G}$ to a \textit{fiber product} or diagonal subgroup where the automorphisms for coupled layers must coincide.

We denote an action of $\mathcal{G}$ on the parameters $\bm{w} \in \mathcal{W}$ by the representation $\rho: \mathcal{G} \to \GL(\bm{w})$. It is easy to establish following our previous definitions of direct product and sum, that as $\mathcal{W}$ is a direct sum, $\rho$ decomposes into a \textit{direct sum} of representations. Also, we constrain group representations to be orthogonal such that $\rho_l(a_l)^{-1} = \rho_l(a_l)^\top$. Weight tensors order-2, i.e. matrices $W_l$, transform via conjugation by the group elements of its input and output spaces, and therefore the global representation of $\mathcal{G}$:
\begin{equation} \label{eq:representation_action}
    \rho(a) = \bigoplus_{l=0}^{L-1} (\rho_{l + 1}(a_{l + 1}) \otimes \rho_l(a_l)).
\end{equation}
Here, the term $(\rho_l(a_{l + 1}) \otimes \rho_{l}(a_{l}))$ captures the simultaneous basis change on the domain and codomain of $W_l$. Note that we used the definition of conjugate representations as $\rho^*(a) = \rho(a^{-1})$ and orthogonality of representations $\rho$ to cancel transposes (inverses) at \cref{eq:representation_action}.

\paragraph{Structure of global network commutant}

Now, let us analyze the influence of the representation $\rho$ on the structure of the commutant matrix $S$ such that $\rho(a) S = S \rho(a)$ for all $a \in \mathcal{G}$. As established in \cref{eq:representation_action}, the global representation decomposes into a direct sum. The matrix $S$ inherits this structure.

We derive the block structure of $S$ by vectorizing the invariance condition $S = \rho(a) S \rho(a)^\top$, which transforms into a linear fixed-point equation acting on the vectorized matrix:
\begin{equation} \label{eq:global_fixed_point}
  \rho(a)^{\otimes 2} \text{vec}(S) = \text{vec}(S), \quad \forall a \in \mathcal{G}.
\end{equation}
Now, we applying the distributivity of the tensor product over the direct sum and therefore expanding the operator $\rho(a)^{\otimes 2}$:
\begin{equation} \label{eq:tensor_decomp}
  \rho^{\otimes 2} = \left(\bigoplus_{l} \rho_{l + 1} \otimes \rho_{l}\right)^{\otimes 2} = \bigoplus_{l,k} \underbrace{(\rho_{l+1} \otimes \rho_{l}) \otimes (\rho_{k + 1} \otimes \rho_{k})}_{\text{Action on block} S_{lk}}.
\end{equation}
Equation \eqref{eq:tensor_decomp} implies that the global operator $\rho(a)^{\otimes 2}$ block-diagonalizes into $L^2$ independent tensor product actions, each corresponding to a pair of layers with indices $l$ and $k$. Consequently, the high-dimensional fixed-point equation decouples: instead of solving for the full matrix $S$ simultaneously, we can solve independent commutant equations for each block $S_{lk}$ corresponding to the interaction between layer $l$ and layer $k$.

\cref{eq:tensor_decomp} can be generalized to any weight tensors:
\begin{equation} \label{eq:tensor_decomp_gen}
  \rho^{\otimes 2} = \bigoplus_{l,k} \left[\left(\bigotimes_{v_l} \rho_{v_l}\right) \otimes \left(\bigotimes_{m_k} \rho_{m_k}\right)\right].
\end{equation}

\subsection{Schur-Weyl Duality and Diagram Solutions}
\label{app:schur-weyl}

\begin{theorem}[Schur-Weyl duality by \citet{weyl1946classical}]
\label{thm:schur_weyl}
Let $V = \mathbb{R}^d$ be a vector space and $V^{\otimes k}$ be the $k$-fold tensor product space. We consider two representations on this space: coordinate and permutation representations. Let any $g \in \GL_d$, let $\rho(a)$ be the matrix representing the simultaneous transformation of all tensor factors, then coordinate representation is defined as the $k$-fold Kronecker product $\rho(a)^{k} = \otimes_{i=1}^{k} \rho(a)$.

Now let any $\sigma \in S_k$, such that $\pi(\sigma)$ is the permutation matrix that reorders the tensor indices. Then permutation representation is defined as $\pi(\sigma)$ which is the unique linear map defined by its action on the tensor basis elements $e_{i_1} \otimes \dots \otimes e_{i_k}$:
    \begin{equation}
        \pi(\sigma) (e_{i_1} \otimes e_{i_2} \otimes \dots \otimes e_{i_k}) = e_{i_{\sigma^{-1}(1)}} \otimes \dots \otimes e_{i_{\sigma^{-1}(k)}}.
    \end{equation}

Therefore, a linear operator $T$ acting on $V^{\otimes k}$ is invariant under all coordinate transformations, in other words it commutes with $\rho(a)$ for all $a \in \GL_d$, if and only if it is a linear combination of permutation matrices.
\begin{equation}
    T \rho(a)^{k} = \rho(a)^{k} T \quad \iff \quad T = \sum_{\sigma \in S_k} \alpha_\sigma \pi(\sigma)
\end{equation}
where $\alpha_\sigma \in \mathbb{R}$ are scalar coefficients.
\end{theorem}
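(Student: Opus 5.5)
We prove the two implications separately, working inside $\operatorname{End}(V^{\otimes k}) \cong \operatorname{End}(V)^{\otimes k}$.

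\textbf{Easy direction ($\Leftarrow$).} We check that every $\pi(\sigma)$ commutes with $\rho(a)^{k} = g^{\otimes k}$. Evaluate both sides on a basis tensor $e_{i_1}\otimes\dots\otimes e_{i_k}$. Applying $g^{\otimes k}$ and then reordering the factors gives the same result as reordering first and then applying $g$ to each factor, because the same $g$ acts in every slot. Linearity then extends this to any $T=\sum_\sigma \alpha_\sigma \pi(\sigma)$.

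\textbf{Hard direction ($\Rightarrow$).} The plan is a double-commutant argument. Let $\mathcal{A}\subseteq \operatorname{End}(V^{\otimes k})$ be the image of the group algebra $\mathbb{R}[S_k]$ under $\pi$, and let $\mathcal{B}$ be the linear span of $\{g^{\otimes k} : g\in \GL_d\}$. Both are subalgebras. The claim is equivalent to $\mathcal{B}' \subseteq \mathcal{A}$, where $'$ denotes the commutant in $\operatorname{End}(V^{\otimes k})$. We proceed in three steps.

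\textbf{Step 1: $\mathcal{A}' = \operatorname{Sym}^k(\operatorname{End}V)$.} Conjugation by $\pi(\sigma)$ acts on $\operatorname{End}(V)^{\otimes k}$ by permuting the tensor factors: $\pi(\sigma)(x_1\otimes\dots\otimes x_k)\pi(\sigma)^{-1} = x_{\sigma^{-1}(1)}\otimes\dots\otimes x_{\sigma^{-1}(k)}$. Hence an operator commutes with all of $\mathcal{A}$ iff it is invariant under this permutation action. The $S_k$-invariant tensors are exactly the symmetric tensors $\operatorname{Sym}^k(\operatorname{End}V)$; this follows by averaging over $S_k$, which is legitimate in characteristic zero.

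\textbf{Step 2: $\mathcal{B}=\operatorname{Sym}^k(\operatorname{End}V)$.} The inclusion $\mathcal{B} \subseteq \operatorname{Sym}^k(\operatorname{End}V)$ is clear. For the reverse, we argue in two moves.
\begin{itemize}
\item \emph{Extend from $\GL_d$ to all of $\operatorname{End}V$.} The map $x\mapsto x^{\otimes k}$ is polynomial, $\mathcal{B}$ is a closed (finite-dimensional) subspace, and $\GL_d$ is dense in $\operatorname{End}V$. Therefore $x^{\otimes k}\in\mathcal{B}$ for every $x\in\operatorname{End}V$.
\item \emph{Polarize.} The coefficient of $t_1t_2\cdots t_k$ in $(t_1x_1+\dots+t_kx_k)^{\otimes k}$ is $\sum_\sigma x_{\sigma(1)}\otimes\dots\otimes x_{\sigma(k)}$. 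This coefficient can be extracted as a finite linear combination of values of the polynomial at different choices of the $t_i$, so it lies in $\mathcal{B}$. These symmetrized tensors span $\operatorname{Sym}^k(\operatorname{End}V)$.
\end{itemize}
Combining Steps 1 and 2 gives $\mathcal{B}=\mathcal{A}'$, and therefore $\mathcal{B}'=\mathcal{A}''$.

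\textbf{Step 3: $\mathcal{A}''=\mathcal{A}$.} By Maschke's theorem, $\mathbb{R}[S_k]$ is semisimple, so its image $\mathcal{A}$ is a semisimple subalgebra of $\operatorname{End}(V^{\otimes k})$ that contains the identity. The double commutant theorem for semisimple subalgebras holds over any field, as a consequence of the Jacobson density theorem applied to the semisimple module $V^{\otimes k}$. It gives $\mathcal{A}''=\mathcal{A}$. Hence any $T$ commuting with all $g^{\otimes k}$ lies in $\mathcal{A}$, i.e.\ $T=\sum_\sigma \alpha_\sigma\pi(\sigma)$.

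\textbf{Main obstacles.} We expect Step 2 to be the delicate point: extending from $\GL_d$ to all of $\operatorname{End}V$ needs the density and polynomiality argument, and the polarization identity must be stated carefully. Step 3 is imported rather than proved. Over $\mathbb{R}$ one must make sure that the version of the double commutant theorem used does not need algebraic closure; Jacobson density for semisimple modules suffices. Alternatively, one can complexify, prove the statement over $\mathbb{C}$, and take real parts, since both sides of the equivalence are defined over $\mathbb{R}$.
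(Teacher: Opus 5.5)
The paper gives no proof of this statement. It quotes it as a classical result of Weyl and uses it only to motivate the diagrammatic bases in the appendix. Your argument therefore supplies a proof where the paper has only a citation, and it is correct; it is the standard double-commutant proof.

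You rightly read $\rho(a)$ as $g$ itself (the defining representation), which is the reading under which the statement holds. The individual steps check out:
\begin{itemize}
\item The conjugation formula in Step~1 is correct.
\item The polarization identity, e.g.\ by inclusion--exclusion over subsets of $\{x_1,\dots,x_k\}$, places every symmetrized tensor in $\mathcal{B}$.
\item The double commutant theorem for the image of the semisimple algebra $\mathbb{R}[S_k]$ holds over $\mathbb{R}$ via Jacobson density. This works because $V^{\otimes k}$ is finite-dimensional and the scalars lie in every commutant involved.
\end{itemize}

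The citation buys brevity. Your proof shows where each hypothesis enters: characteristic zero (Maschke and the symmetrizer), an infinite field (interpolation), and, only in Step~2, the fact that the group is all of $\GL_d$.

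That last point explains the paper's remark that shrinking the group enlarges the algebra. For $H\subseteq\GL_d$ one has $\operatorname{span}\{h^{\otimes k}\}\subseteq\mathcal{B}$, so its commutant contains $\mathcal{A}$. For $O_d$ or (signed) permutation groups, however, Step~2 genuinely fails: already for $k=2$, $d\ge 2$ the span is a proper subspace of $\operatorname{Sym}^k(\operatorname{End}V)$. So the Brauer and partition-algebra descriptions the paper invokes need separate input. For $O_d$ that input is the first fundamental theorem. For (signed) permutation groups it is an elementary orbit count on index tuples, which is what the appendix's partition rules encode.

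Finally, your proof, like the statement, gives spanning only. The $\pi(\sigma)$ are linearly independent exactly when $d\ge k$; for $d<k$ the antisymmetrizer vanishes on $V^{\otimes k}$. Basis counts such as $k!$ therefore tacitly assume the dimension is large relative to $k$.
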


\cref{thm:schur_weyl} is fundamental concept that explains how to bridge two different types of symmetry: continuous and permutation symmetry when both act on the same space. In fact, the true power of Schur-Weyl duality is found in ability to transform continuous high dimensional problems into a small combinatorial one.

Classical Schur-Weyl duality between $\GL_n$ and $S_k$ has been extended with the support of many other symmetries. For example, duality has been established for orthogonal groups with Brauer algebra \cite{doty2009schur}, for permutation groups, including its subgroups, with corresponding partition algebras \cite{halverson2005partition}. An interesting fact that by restricting the symmetry group we strictly enlarge the algebra size:
\begin{alignat}{4}
    \GL_n & \supseteq {} & O_n      & \supseteq {} & B_n       & \supseteq {} & S_n \\
    \mathcal{S}_k        & \subseteq {} & \mathcal{B}_{k} & \subseteq {} & \mathcal{A}_{k+\half} & \subseteq {} & \mathcal{A}_{k},
\end{alignat}
where $\GL_n$, $O_n$, $B_n$ and $S_n$ are general linear, orthogonal, signed permutation, and permutation groups respectively; $S_k$, $B_k$, $A_{k + \half}$ and $A_k$ are groups' corresponding algebras.
In other words, the algebra becomes more complex and the number of basis increases. Moreover, we know exactly how many bases of solutions each algebra has. \cref{tab:algebra_size} provides sizes of Brauer and partition algebras for matching orthogonal and permutation groups.

\begin{figure}[t]
  \centering
  \includegraphics[width=\linewidth]{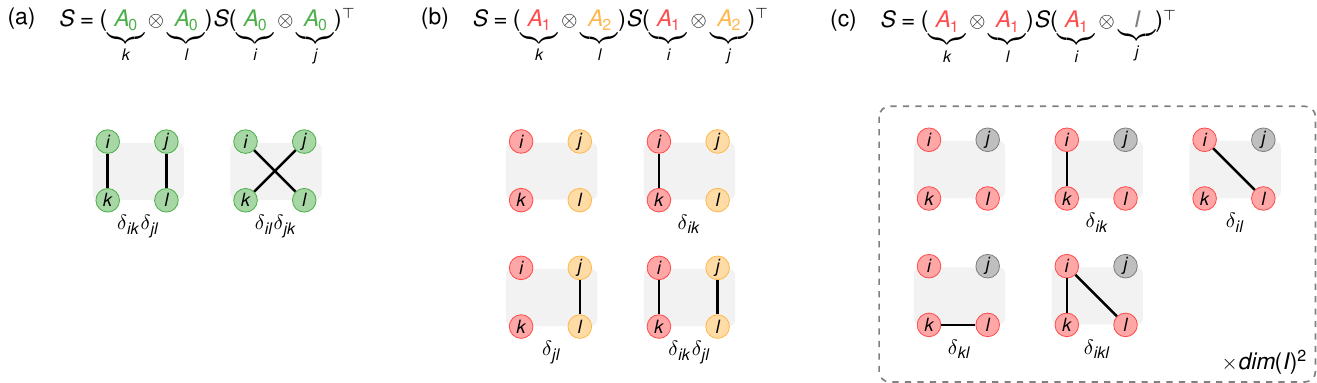}
  \caption{Examples of diagrammatic bases for commutant algebra  under different symmetry constraints. \textbf{(a)} Solution space for $\GL_d$ fixed-point equation $A_0^{\otimes 4} \vect(S) = \vect(S)$, where $A_0 \equiv \rho(a)$ for $a \in \GL_d$, the solution space is spanned by $S_2$. \textbf{(b)} Mixed symmetry example for fixed-point equation $(A_1 \otimes A_2)^{\otimes 2} \vect(S) = \vect(S)$, where $A_1 \equiv \rho(a)$ for $a \in S_d$, $A_2 \equiv \rho(a)$ for $a \in S_{d'}$, where $S_d$ and $S_{d'}$ are permutation groups. The indices of $S$ are split into two independent sets, and nodes in this diagram are connected by principle like-to-like forbidding mixed connections. \textbf{(c)} Partial symmetry (identity map) example is a special case of (b). The diagram for fixed-point equation $(A_1 \otimes I \otimes A_1^{\otimes 2}) \vect(S) = \vect(S)$ is split into two sets: \textit{active} and \textit{passive}. The passive index $j$ allows arbitrary connectivity and hence it tensors the diagrammatic basis of active set with $\dim(I) \times \dim(I)$ matrix.}
  \label{fig:diagram_examples}
\end{figure}

\begin{table*}[b]
    \centering
    \caption{Symmetry Groups and Commutant Algebras. The column $D_k$ lists the number of independent basis solutions to the fixed-point equation $\rho(a)^{\otimes 2k} \vect(T) = \vect(T)$. Here, $B_{2k}$ denotes the Bell number.}
    \label{tab:algebra_size}
    \begin{tabular}{@{}lllll@{}}
        \toprule
        \textbf{Activation} & \textbf{Symmetry Group $\mathcal{G}$} & \textbf{Commutant Algebra} & \textbf{Basis Dimension ($D_k$)} \\
        \midrule
        \texttt{linear} & $O_n$ Orthogonal Group &  $\mathcal{B}_k(n)$ Brauer Algebra & $(2k-1)!!$ \\
        \texttt{relu} & $S_n$ Permutation Group & $\mathcal{A}_k(n)$ Partition Algebra & $B_{2k}$ \\
        \bottomrule
    \end{tabular}
\end{table*}

\paragraph{Diagrammatic approach}

As we have seen, our problem is solving global fixed-point equation (\cref{eq:global_fixed_point} that in turn decomposes into smaller independent fixed-point equations using \cref{eq:tensor_decomp_gen}. Instead of solving those equations element-by-element using direct numerical algorithms, these fixed-point equations can be solved by constructing the combinatorial basis of the corresponding algebras, per \cref{thm:schur_weyl}. We employ a combinatorial approach that utilizes visual schemes called diagrams \cite{bowman2025diagrammatic}, which were inspired by Penrose graphical notation for tensors.

Let us build our intuition for how these diagrams work by applying them to a specific form of fixed-point equation $\rho(a)^{\otimes 2k} \vect(T) = \vect(T)$. In this framework, the diagram consists of two rows with $k$ nodes representing column and row indices of the tensor. A basis element of the algebra is constructed simply by linking these column and row nodes. The specific algebras define the rules for how those links allowed to be established. Let us show which rules are used in the case different algebras:
\begin{itemize}
    \item $\GL_n$. In case of GL symmetry, we are allowed to link nodes in 1 to 1 fashion. Every node in a row can connect only to a single node from another row, and no horizontal connections are allowed. 
    \item $O_n$. Diagrams for orthogonal group with corresponding Brauer algebra follow the same rules for GL symmetry with an exception that we can connect horizontal nodes. 
    \item $S_n$. In case of permutation group, the restriction for linking nodes vanish almost entirely. This diagram corresponds to set partitions. We can group nodes however we like: merge some nodes into a single cluster, leave a node disconnected, connect all nodes.
\end{itemize}

\cref{fig:diagram_examples}(a) depicts diagrams that correspond to bases spanning commutant algebra of $\rho(a)^{\otimes 4} \vect(S) = \vect(S)$. $S$ is an operator of size $d^2$ by $d^2$ that commutes with the $\GL_d$ group action. Diagrams show an application of \cref{thm:schur_weyl} which tells us that the solution for $\vect(S)$ is spanned by permutation group $S_2$ which acts on indices of tensor $S$, i.e. its rows and columns. The connections in diagram are interpreted as corresponding products of $\delta$ functions: $\delta_{ik}\delta_{jl}$ and $\delta_{il}\delta_{jk}$. Those $\delta$ combinations have matrix forms: $\delta_{ik}\delta_{jl}$ is an identity map (diagonal matrix) and $\delta_{il}\delta_{jk}$ is a swap operator (commutation matrix). Therefore $S$ is a linear combination of these two:
\begin{equation}
    S = \alpha_1 I + \alpha_2 K,
\end{equation}
where $\alpha_1 \in \mathbb{R}$ and $\alpha_2 \in \mathbb{R}$ are coefficients.

\paragraph{Diagrams for local network commutants}

So far we have considered uniform Kronecker products, such as presented in our simple example $\rho(a)^{\otimes 4} \vect(S) = \vect(S)$, \cref{fig:diagram_examples}(a). However, the structure of each local fixed point equation at \cref{eq:tensor_decomp_gen} can have much reacher structure. This structure is defined by invariant transformations applied to the network weights. We describe a diagrammatic approach here for such local fixed point equations. Our focus is interaction between two layers with order-$2$ weights, e.g. gradient covariance between $n^\text{th}$ and $m^\text{th}$ layer, which we denote in the main text as $S^{(nm)}$, and for brevity we will use $S$ instead for the rest of this section. The fixed point equation with diverse structure looks as:
\begin{equation}
    \left(\bigotimes_{i=1}^{4}\rho_i(a_i)\right) \vect(S) = \vect(S).
\end{equation}
We distinguish three sets of configurations: \textbf{1)} representations are all the same, which we considered in the first place; \textbf{2)} the representation is a direct product of subgroups, i.e. mixed representations; and \textbf{3)} the presence of identity maps (partial symmetry), where one or more $\rho_i(a_i) \equiv I$.

The first case has been considered before, see \cref{fig:diagram_examples}(a), i.e. the single group determines the algebra (e.g., orthogonal group with related Brauer algebra), and therefore we follow classical combinatorial approach for finding bases of commutant space. Next, we specifically address the diagrammatic rules for cases 2 and 3.

\begin{itemize}
    \item \textbf{Mixed representations} \cite{goodman2009symmetry}. Consider the fixed-point equation generated by a product of two independent groups, $(A_1 \otimes A_2)^{\otimes 2} \vect(S) = \vect(S)$, see \cref{fig:diagram_examples}(b), where $A_1 \equiv \rho(a_1)$ for $a_1 \in S_d$, and $A_2 \equiv \rho(a_2)$ for $a_2 \in S_{d'}$. Here, the tensor indices can be thought of as having different "colors" or types corresponding to their respective groups. For instance, at \cref{fig:diagram_examples} permutation group $S_d$ acts on $i$ and $k$ indices, and $S_{d'}$ acts on $j$ and $l$ indices. Group colors govern the connectivity of nodes in the diagram, and it is forbidden to link nodes with different colors. Sets of common group are  treated as independent systems, and nodes are connected following the combinatorial pattern of corresponding commutant algebra.
    \item \textbf{Partial symmetry}. Identity maps splits the diagram into two sets: \textit{passive} and \textit{active}. In case of the passive set, i.e. the identity map, symmetry constraints vanish, therefore there is total freedom to connect nodes. In contrast, nodes members of the active set follow the connectivity rules of corresponding groups (commutant algebra). The identity map has a multiplicative effect on the solution space. \Cref{fig:diagram_examples}(c) depicts an diagram example for fixed-point equation $(A_1 \otimes I \otimes A_1^{\otimes 2}) \vect(S) = \vect(S)$, where $A_1 \equiv \rho(a)$, and $a \in S_d$. Here we take the standard diagram the active permutation group with $3$ nodes at indices $i$, $k$, $l$, and tensor it with any matrix $M$ on index $j$. The reason for introducing weight matrix $M$ that commutation of the form $M I = I M$ holds for any matrix $M$, so the solution for identity map is entire $\dim(I) \times \dim(I)$ space. This is equivalent to copying active part of the diagram and weight each copy with entries of the $M$ matrix.
\end{itemize}

There rules generalize to any structured fixed-point equation $\bigotimes_{i=1}^{k} \rho_i(a_i) \vect(S) = \vect(S)$.

\subsection{Structure of the commutant algebra}
\label{app:commutant_algebra}

In this section, we detail our approach for generalizing the examples of \Cref{sec:commutant_algebra} and elucidating the structure of orbit averages in the general case, based on the insights presented in \Cref{app:schur-weyl}.
Specifically, for two (vectorized) tensors $\bm{v}$ and $\bm{v}'$ subject to a transformation group $\mathcal{G}$ as introduced in \Cref{sec:geometry}, $S_{\bm{v}\bm{v}'} \equiv \reynolds_2(\bm{v},\bm{v}',\mathcal{G})$ satisfies the equation
\begin{equation}
  S_{\bm{v}\bm{v}'} = 
   \left( \bigotimes_{k=1}^d A_{i(k)} \right) S_{\bm{v}\bm{v}'}
  \left( \bigotimes_{k=1}^{d'} A_{i'(k)}\right)^\top.
\end{equation}
for any $\{A_1, A_2, \ldots \}$ in the group.
Note that first-order orbit averages are also covered by this formalism, by considering the case where $\bm{v}'$ is a `null' tensor of order $d'=0$.
Solutions to \Cref{eq:general_commmutation} are order-$(d+d')$ tensors that belong to the group's commutant (or centralizer) algebra, and whose structured can be derived through a diagrammatic approach (\Cref{fig:commutant_basis}).
For each unique $A_j$, we begin by identifying all axes of $\bm{v}$ and $\bm{v}'$ which the group action transforms through the same $A_j$.
In other words, we identify the left- and right- axis sets $\alpha(j) \equiv \{ k; i(k) = j\}$ and  $\beta(j) \equiv \{ k; i'(k) = j \}$.

\begin{figure}
  \centering
  \includegraphics[width=0.6\linewidth]{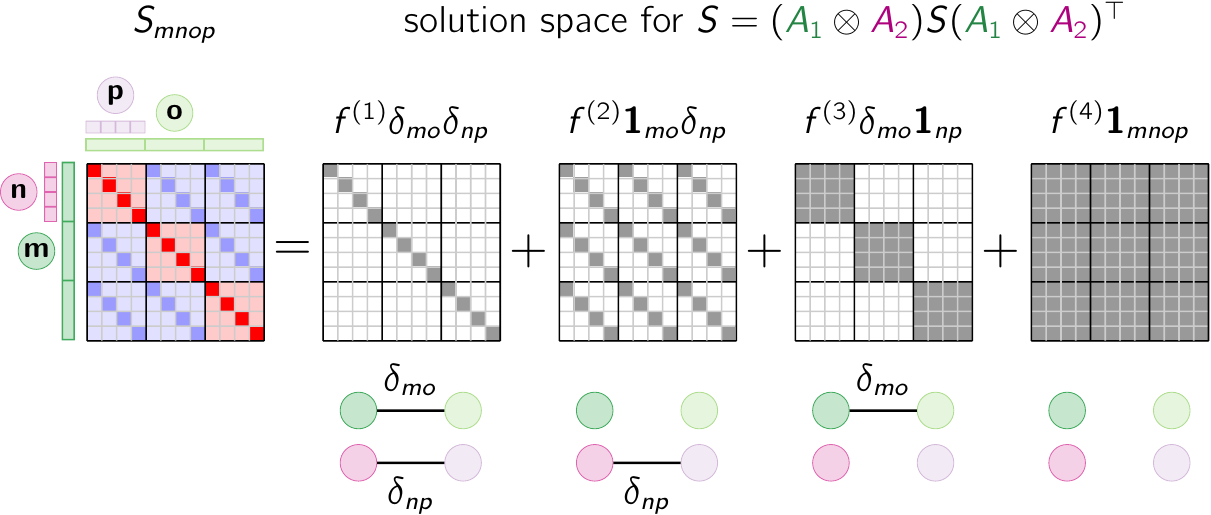}
  \caption{Example showing the structure of the commutant algebra for $(\mathcal{G}_1 \times \mathcal{G}_2$) where $\mathcal{G}_1$ and $\mathcal{G}_2$ are the symmetric groups in dimension $3$ and $4$, respectively. Any tensor $S$ that satisfies $S = (A_1 \otimes A_2) S (A_1 \otimes A_2)$ for any $(A_1, A_2) \in \mathcal{G}$ is a linear combination of four basis tensors, represented diagrammatically through specific partitions of the tensor axes. See \Cref{app:commutant_algebra} for details. }
  \label{fig:commutant_basis}
\end{figure}

The sets $\alpha(0)$ and $\beta(0)$ play a special role: it is clear that
\Cref{eq:general_commmutation} does not constrain the solution $S$ along any of its axes that belongs to $\alpha(0) \cup \beta(0)$ (recall that $A_0 = I$ by convention). Thus, any solution must include a sub-tensor of to-be-determined factors $f_{\alpha(0) \cup \beta(0)}$.
The homogeneous nature of \Cref{eq:general_commmutation} means that, even if $\alpha(0) \cup \beta(0) = \varnothing$, all solutions still involve a scalar factor $f$.

In contrast, solutions are very strongly constrained along the other axes.
The approach we detail here applies to the case where all transformations $A_j$ are permutations, but larger groups are easy to treat too and yield further restrictions to the space of solutions.  
For a given $j$ and associated axis sets $\alpha(j)$ and $\beta(j)$, solutions to \Cref{eq:general_commmutation} can take any of $B_{| \alpha(j) \cup \beta(j) |}$ forms\footnote{$B_k$ denotes the $k^\text{th}$ Bell number.} along the axes concerned.
These solution components are given by all possible ways of partitioning $\alpha(j) \cup \beta(j)$.
Each partition groups specific axes together, and each group contributes to the solution through a Kronecker delta function over the corresponding indices. Singleton axes in any partition do not contribute (i.e.\ contribute a $1$ -- c.f.\ $\bm{1}_{np}$ in the example above).
The space of solutions is then given by a cartesian product of all possible ways of choosing a partition for each of the $A_j$'s.
This diagrammatic approach is illustrated in \Cref{fig:commutant_basis} for an example where $i(1)=i'(1) = 1$ and $i(2) = i'(2) = 2$.

\definecolor{autotbl1}{HTML}{1F78B4}
\definecolor{autotbl2}{HTML}{FF7F00}
\definecolor{autotbl3}{HTML}{33A02C}

The following table shows the structure of the commutant algebras associated with all possible ways of transforming a pair of parameter tensors (line-separated sections of the table). Here, we restrict the parameter tensors to order $\leq 2$, i.e.\ vectors or matrices (higher-order cases can be treated using the exact same approach but would yield too large a table). $\textcolor{autotbl1}{A_1}$ and $\textcolor{autotbl2}{A_2}$ denote two independent permutations, and $\textcolor{autotbl3}{I}$ denotes the identity matrix of the appropriate dimension. The way to interpret e.g.\ the third row  is as follows: if, for any possible choice of a permutation matrix $\textcolor{autotbl1}{A_1} \in \mathbb{R}^{n \times n}$, we have $S = \textcolor{autotbl1}{A_1} S \textcolor{autotbl1}{A_1}^\top$, then the matrix 
$S \in \mathbb{R}^{n \times n} = \{ S_{ik} \} $ must be a linear superposition of two terms: a scalar factor $c^{\scriptscriptstyle (1)}$ times the matrix full of ones, and another scalar factor $c^{\scriptscriptstyle (2)}$ times the identity matrix ($\delta_{ik}$). 

\renewcommand{\arraystretch}{1.3}
\begin{longtable}{lllll}
	\textbf{left tensor} & \textbf{right tensor} & \textbf{commutation equation} & \textbf{solution form} & \textbf{solution basis} \\
	\hline
	\input{tabs/table_to_insert.tex}
\end{longtable}

\section{Proof of \Cref{lemma:least_squares}}
\label{app:factor_estimation}

We rewrite the definition of orbit average
\begin{equation}
\reynolds_1(\bm{v}, \mathcal{G}) \equiv \mathbb{E}_{\mathcal{G}} 
        \left[ \left( \bigotimes_{k=1}^d A_{i(k)} \right) \bm{v} \right] = \mathbb{E}_{\mathcal{G}}(A\bm{v})\quad\mathrm{with}\quad A\equiv \bigotimes_{k=1}^d A_{i(k)}.
\end{equation}
This is a linear operator acting on the vectorized tensor $\bm{v}$, that can be represented by the matrix $P$:
\begin{equation}
\reynolds_1(\bm{v}, \mathcal{G}) \equiv P\bm{v} \quad\mathrm{with}\quad P \equiv \frac{1}{|\mathcal{G}|}\sum_jA^{(j)}.
\end{equation}
The index $j$ runs over all members of the group $\mathcal{G}$, and we recall that $\mathcal{G}=\mathcal{G}_0\times\mathcal{G}_1\times\ldots\mathcal{G}_m$ is the direct product of groups acting on all tensors, and is itself a group.

We prove that $P$ is an orthogonal projection, namely that $P^2=P=P^\top$.
We start by computing $P^2$.
We note that, for any group, given indices $i$ and $j$, there exists an index $k$ such that $A^{(i)}A^{(j)}=A^{(k)}$ and each element of the group occurs $|\mathcal{G}|$ times when summing over both indices $i$ and $j$. Thus,
\begin{align}
P^2 = \frac{1}{|\mathcal{G}|^2}\sum_{ij}A^{(i)} A^{(j)} = \frac{1}{|\mathcal{G}|}\sum_kA^{(k)}=P.
\end{align}
Therefore, the matrix $P$ is a projection.
Next, we prove that the matrix $P$ is symmetric or, equivalently, that the projection is orthogonal. 
By assumption, $\mathcal{G}$ is a subset of the orthogonal group, therefore $A^\top=A^{-1}$ for the matrix representation of any member of $\mathcal{G}$.
Furthermore, each element in a group has a unique and distinct inverse within the group.
Therefore,
\begin{align}
P^\top = \frac{1}{|\mathcal{G}|}\sum_i\left(A^{(i)}\right)^\top= \frac{1}{|\mathcal{G}|}\sum_i\left(A^{(i)}\right)^{-1} = \frac{1}{|\mathcal{G}|}\sum_kA^{(k)}=P.
\end{align}
Since $P$ is an orthogonal projection, the group average of any tensor is equal to the nearest invariant in square norm.

\section{Rank deficiency of $\sigmaw$} \label{app:deficiency}
In this section, we carry out a simple test for whether \textcolor{symo1}{$H^\star_{\bm{g}}$} achieving better approximation to the Hessian compared to \textcolor{symo2}{$H^\star_\text{PD}$} in \Cref{fig:hess_approx} is due to the rank deficiency of $\sigmaw$ in \Cref{eq:symo_full}. For the same four-layer MLP student-teacher setup as used in Hessian similarity experiments, we compute the singular values spectra of $\sigmaw$ across training iterations for three different groups: $[P;P;P], [P;I;P], [I;P;I]$, with $I$ the identity group and $A$ the permutation group. As shown in \Cref{fig:spectrum-s-w}, $\sigmaw$ is indeed heavily rank-deficient in our setup, and becomes increasingly so for smaller groups (note that singular values were computed in single precision, so singular values past 8 orders of magnitude of decay are practically zero; note also that the far tail of the singular value spectrum had to be excluded from our logarithmic visualizations). Therefore, new evidence confirms that inversion of $S_w$ amplifies noise along singular directions which makes \textcolor{symo2}{$H^\star_\text{PD}$} less stable than \textcolor{symo1}{$H^\star_{\bm{g}}$}.

\begin{figure}[h]
    \centering
    \includegraphics[width=0.8\linewidth]{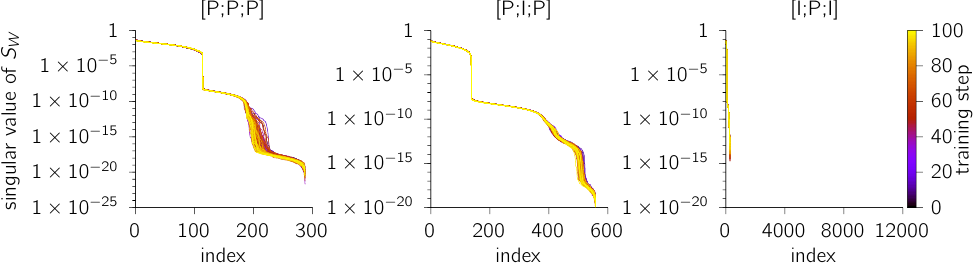}
    \caption{Rank deficiency of surrogate matrices.}
    \label{fig:spectrum-s-w}
\end{figure}

\section{The Symo optimizer}
\label{app:symo_optimizer}
In \Cref{sec:symo_equivalence} we defined the vanilla Symo update as 
\begin{equation}
    \bm{w}_{t+1} = \bm{w}_{t} - \eta \bigl(H^\star_{\bm{g}}\bigr)^{-1} \bm{g}_{t},
\end{equation}
where $H^\star_{\bm{g}}$ is defined in \Cref{eq:symo_1} and $\bm{g}_{t} = \nabla \mathcal{L}(\bm{w}_{t})$ denotes the gradient at iteration $t$. 
In practice, training heuristics such as damping and momentum can be applied to accelerate convergence and stabilize training.

\subsection{Damping}
Computing the update requires inverting $H^\star_{\bm{g}}$ (\Cref{eq:symo_update}), which may be ill-conditioned.
Damping can be applied via
\begin{equation}
    H^\star_{\bm{g}} \leftarrow H^\star_{\bm{g}} + \lambda s_{\text{max}} I,
\end{equation}
before inversion, where $\lambda$ is the damping parameter and $s_{\text{max}}$ is the largest singular value of $H^\star_{\bm{g}}$.

\subsection{Momentum and bias correction for factors}
To improve factor estimation and reduce noise, particularly in stochastic settings, we can apply momentum to the factor estimates, following (\citealp{bernacchia25a}; Appendix~K).
For a factor $F$, the momentum update is given by
\begin{equation}
    \hat{F}_{t+1} = \mu_{\text{factor}} \hat{F}_{t} + (1-\mu_{\text{factor}}) F_{t},
\end{equation}
where $F_{t}$ is the factor estimated using the gradient at iteration $t$ (i.e., $\bm{g}_{t}$), $\hat{F}_{t}$ denotes the running average at iteration $t$, and $\mu_{\text{factor}}$ is the momentum parameter for factors.
All factors are initialized to zero at $t=0$, and the estimates are bias corrected according to
\begin{equation}
    \hat{F}_{t+1} \leftarrow \frac{\hat{F}_{t+1}}{1 - \mu_{\text{factor}}^{t}},
\end{equation}
as in standard practice \citep{kingma2014adam}.

\subsection{Momentum and bias correction for gradient}
In addition to applying momentum to factor estimates, we can also apply momentum and bias correction directly to the gradients.
This is done via
\begin{equation}
    \hat{\bm{g}}_{t+1} = \mu_{\bm{g}} \hat{\bm{g}}_{t} + (1-\mu_{\bm{g}}) \bm{g}_{t},
\end{equation}
where $\bm{g}_{t}$ is the gradient at iteration $t$, $\hat{\bm{g}}_{t}$ is its running average, and $\mu_{\bm{g}}$ is the gradient momentum parameter.
We initialize $\hat{\bm{g}}_{0} = 0$, and apply bias correction as
\begin{equation}
    \hat{\bm{g}}_{t+1} \leftarrow \frac{\hat{\bm{g}}_{t+1}}{1 - \mu_{\bm{g}}^{t}},
\end{equation}
following standard practice \citep{kingma2014adam}.

\section{Symo-muon-shampoo equivalence}
\label{app:symo_shampoo_equivalence}

In \Cref{sec:symo_equivalence}, we showed that the Hessian approximation $H^\star_{\bm{g}} = \sigmag^{\frac12}$ is equivalent to the Muon/Shampoo update under specific choices of network architecture and symmetry groups. In this section we provide the detailed derivations for two concrete examples: a multi-layer perceptron (MLP) and a transformer layer. 

\subsection{Shampoo and Muon equivalence}
\label{app:shampoo_muon_eqv}
In \Cref{sec:shampoo_muon} we proved that Shampoo \cite{gupta2018shampoo} and Muon \cite{jordan2024muon} produce indeed equivalent parameter updates. Here we provide more details for the proof.
\begin{align}
    W_{t + 1} &= W_t - \eta L_t^{-\quarter} G_t R_t^{-\quarter}, && \eqnote{Shampoo} \label{eq:shampoo_update} \\
    W_{t + 1} &= W_t - \eta \operatorname{pol}(M_t), && \eqnote{Muon} \label{eq:muon_update}
\end{align}
where $W \in \mathbb{R}^{n \times m}$ and $\eta$ is the learning rate. By default, Muon utilizes a momentum buffer $M_{t} = \beta M_{t-1} + (1 - \beta) G_{t-1}$ and the unitary polar factor $\operatorname{pol}(M) = U V^\top$ derived from the singular value decomposition $M = U \Sigma V^\top$ \cite{amsel2025polar}. On the other hand, Shampoo uses cumulative tensors $L_{t + 1} = L_t + G_t G_t^\top$, $R_{t + 1} = R_t + G_t^\top G_t$, which deviates from the Muon setup. For simplicity, we assume that no momentum treatment is applied and focus on $M_t = G_t$ for Muon and $L_t = G_t G_t^\top$, $R_t = G_t^\top G_t$ for Shampoo.

The goal is to demonstrate that the Shampoo update is an exact expansion of the polar factor of the gradient. First, we perform a singular value decomposition on the gradient matrix $G_t = U \Sigma V^\top$. We exploit identity for matrix analytic functions $f$ from \citet[Corollary 1.34]{higham2008functions}, which states:
\begin{equation} \label{eq:preconditioner_commutation}
    f(A B) A = A f(B A).
\end{equation}
Substituting \eqref{eq:preconditioner_commutation} with $A = G$ and $B = G^\top$ into the Shampoo direction term in \eqref{eq:shampoo_update}:
\begin{align}
    L_t^{-\quarter} G_t R_t^{-\quarter}
    &= (G_t G_t^\top)^{-\quarter} G_t (G_t^\top G_t)^{-\quarter} \nonumber \\
    &= G_t (G_t^\top G_t)^{-\quarter} (G_t^\top G_t)^{-\quarter} \nonumber \\
    &= G_t (G_t^\top G_t)^{-\half}. \label{eq:shampoo_collapsed}
\end{align}
Recognizing that $(G_t^\top G_t)^{-\half} = (V \Sigma^2 V^\top)^{-\half} = V \Sigma^{-1} V^\top$, we substitute the SVD of $G_t$ into \eqref{eq:shampoo_collapsed}:
\begin{align}
    G_t (G_t^\top G_t)^{-\half}
    &= (U \Sigma V^\top) (V \Sigma^{-1} V^\top) \nonumber \\
    &= U (\Sigma \Sigma^{-1}) V^\top \nonumber \\
    &= U V^\top = \operatorname{pol}(G_t).
\end{align}
Thus, the Shampoo optimizer update with no accumulation of gradients is algebraically equivalent to the Muon optimizer in the zero momentum limit. We summarize their equivalency of updates via:
\begin{equation} \label{eq:shampoo_muon_equiv_clear}
    L_t^{-\quarter} G_t R_t^{-\quarter} = G_t (G_t^\top G_t)^{-\half} = \text{pol}(G_t).
\end{equation}

\subsection{Feed-forward networks}
\label{app:symo_ff_equiv}

In \Cref{sec:shampoo_muon} we proved that Symo and Shampoo/Muon updates are equivalent under certain constraints.
Here, we provide a concrete example using a two-layer bias-free MLP with \texttt{tanh} activations. Let the layer widths be $d^{(1)}$ and $d^{(2)}$, and $d^{(0)}$ denotes the input dimension. 
The largest symmetry group associated with the input and output layers is the identity group, and for the hidden layers it is the signed permutation group (because \texttt{tanh} is an odd function applied elementwise). 

Ignoring off-diagonal blocks, the gradient covariance matrix $\sigmag$ can be written as
\begin{equation}
\sigmag \approx
\begin{pmatrix}
\sigmag^{(11)} & 0 \\
0 & \sigmag^{(22)}
\end{pmatrix}.
\end{equation}
As shown in \Cref{sec:commutant_algebra}, $\sigmag$ satisfies the commutation condition in \Cref{eq:general_commmutation}. Consider the weight updates associated with the first-layer weight matrix \(W^{(1)} \in \mathbb{R}^{d^{(1)} \times d^{(0)}}\). The corresponding condition for $\sigmag^{(11)}$ is
\begin{equation}
\sigmag^{(11)}
= (I \otimes B_n)\,\sigmag^{(11)}\,(I \otimes B_n)^{\top},
\end{equation}
where \(I\) denotes the identity symmetry group and \(B_n\) denotes the signed permutation symmetry group. This condition admits a unique solution for all $B_n$, namely
\begin{equation}
\sigmag^{(11)} = F \otimes I,
\end{equation}
where $F \in \mathbb{R}^{d^{(0)} \times d^{(0)}}$ is the free parameter to be optimized and $I \in \mathbb{R}^{d^{(1)} \times d^{(1)}}$ is the identity matrix.
To obtain the factor $F$, we solve the associated least-squares problem in the smaller-dimension surrogate space with the assumption that the orbit average of the gradients is equal to zero \((\bm{g}^\star = \boldsymbol{0}\) in \Cref{eq:sg_def}; \Cref{sec:factor_estimation}; \citealp{bernacchia25a}). \Cref{eq:ls2} has the form:
\begin{equation}
  {F}^\star=\arg\min_{F} \left|F \otimes I-{\bm{g}^{(1)}}\left({{\bm{g}^{(1)}}}\right)^\top\right|^2_F,
\end{equation}
where $\bm{g}^{(1)}$ is the gradient of the loss w.r.t. the weight vector $\bm{w}^{(1)}$. This equation admits a unique closed-form solution
\begin{equation}
F = \frac{1}{d^{(1)}}\left(G^{(1)}\right)^{\top} G^{(1)},
\end{equation}
where $G^{(1)}$ is the gradient of the loss w.r.t. $W^{(1)}$ and $\mathrm{vec}(G^{(1)}) = \mathrm{vec}(G^{(1)})$.
The resulting Symo update is
\begin{equation}
\begin{aligned}
\mathrm{vec}(\Delta G^{(1)}) &=\left(\sigmag^{(11)}\right)^{-\frac12 } \,\bm{g}^{(1)}\\
&= (F \otimes I)^{-\frac12} \,\bm{g}^{(1)} \\
&= \sqrt{d^{(1)}} \mathrm{vec}\left(G^{(1)} \left(\left(G^{(1)}\right)^{\top} G^{(1)}\right)^{-\frac12} \right),
\end{aligned}
\end{equation}
Thus
\begin{equation}
\label{eq:symo_delta}
    \Delta G^{(1)} = \sqrt{d^{(1)}} G^{(1)} \left(\left(G^{(1)}\right)^{\top} G^{(1)}\right)^{-\frac12}.
\end{equation}

This gradient update is equivalent to the middle expression at \cref{eq:shampoo_muon_equiv_clear}, which proves our statement that the Shampoo~/~Muon update direction recovers the Symo update, up to a constant scaling factor when momentum is disabled.
The proof of update equivalence for $\bm{w}^{(2)}$ follows analogously, with the final Symo update being
\begin{equation}
\label{eq:symo_delta2}
    \Delta G^{(2)} = \sqrt{d^{(1)}} \left(G^{(2)} \left(G^{(2)}\right)^{\top}\right)^{-\frac12}  G^{(2)}. 
\end{equation}
This example also illustrates why \texttt{tanh} activations, and hence the signed permutation symmetry group for the hidden layer, are necessary. Any other choice of commutation conditions admits different solutions, leading to updates in different forms.

\subsection{Transformer}
\label{app:symo_transformer_equiv}

Here we show that our symmetry aware optimization framework (Symo) matches Shampoo/Muon updates on Transformer model \cite{vaswani2017attention}. We consider one layer of the Transformer:
\begin{align}
    X_1 &= W_e X + W_{\text{pe}} P_{\text{pe}} \\
    K &= W_k X_1, \quad
    Q = W_q X_1, \quad 
    V = W_v X_1 \\
    X_2 &= V \left(\frac{\text{softplus}(Q K^\top)}{\sqrt{d}}\right)\\
    X_3 &= W_p X_2 \\
    X_4 &= \text{LayerNorm}(X_3 + X_1) \\
    X_5 &= \text{tanh}(W_{\text{ff}_1}X_4) \\
    X_6 &= W_{\text{ff}_2}X_5 \\
    X_7 &= \text{LayerNorm}(X_6 + X_1) \\
    Y &= W_o X_7,
\end{align}
where $W_e$, $W_p$ and $W_o$ are the embedding, attention projection and output projection weights respectively and $W_{\text{ff}_1}$ and $W_{\text{ff}_2}$ are feedforward weights and $d$ is the embedding dimension. Following Muon's optimization strategy neither embedding weights, biases, layer normalization parameters nor projection weights are tuned with Muon (they are instead tuned with Adam). This puts an identity map constraint on the choice of the group transformation applied either to weight's columns and rows. When this constraint is applied, the Transformer weights after permutation become:
\begin{equation}
\label{eq:transformer_group}
\begin{aligned}
    W_k' &= A_k W_k I \\
    W_q' &= A_k W_q I \\
    W_v' &= A_v W_v I \\
    W_p' &= I W_p A_v \\
    W_{\text{ff}_1}' &= A_{\text{ff}_1} W_{\text{ff}_1} I \\
    W_{\text{ff}_2}' &= I W_{\text{ff}_2} A_{\text{ff}_1}, \\
\end{aligned}
\end{equation}
where \(A_k, A_v, A_{\mathrm{ff}_1}\) are permutation matrices.
Given the vectorized weight vector $\bm{w} = \left[\text{vec}(W_k); \text{vec}(W_q); \dots, \text{vec}(W_{\text{ff}_2})\right]$ and the corresponding transformed weight vector $\bm{w}' = \left[\text{vec}(W_k'); \text{vec}(W_q'); \dots, \text{vec}(W_{\text{ff}_2}')\right]$, the transformations in \Cref{eq:transformer_group} can be expressed as a single linear map $\bm{w}' = A\bm{w}$ where 
\begin{equation}
    A \equiv \begin{pmatrix}
    I \otimes A_k & 0 & 0 & 0 & 0 & 0 \\
    0 & I \otimes A_q & 0 & 0 & 0 & 0 \\
    0 & 0 & I \otimes A_v & 0 & 0 & 0 \\
    0 & 0 & 0 & A_v \otimes I & 0 & 0 \\
    0 & 0 & 0 & 0 & I \otimes A_{\text{ff}_1} & 0 \\
    0 & 0 & 0 & 0 & 0 & A_{\text{ff}_1} \otimes I
    \end{pmatrix}.
\end{equation}
We now derive invariant gradient covariance $S = \reynolds(\bm{w}, \bm{w}, \mathcal{G})$, that satisfies commutation equation $S = A S A^\top$ (\Cref{sec:commutant_algebra}).

To obtain equivalence with Shampoo/Muon, we again restrict $S$ to be \emph{block-diagonal}. This simplification reduces the matrix equation $S = A S A^\top$
to a set of independent equations for each block:
\[
S^{(ii)} = A^{(ii)} S^{(ii)} (A^{(ii)})^\top,
\]
where $A^{(ii)}$ denotes the $i$-th block of the block-diagonal matrix $A$, and $S^{(ii)}$ is the corresponding block of $S$.  
Each block $A^{(ii)}$ takes the form
\[
A^{(ii)} = I \otimes X \quad \text{or} \quad A^{(ii)} = X \otimes I,
\]
with $X \in \{A_k, A_v, A_{\mathrm{ff}_1}\}$. 
Here, $A_k$ and $A_v$ are representations of the orthogonal group, whereas $A_{\mathrm{ff}_1}$ is a representation of the signed permutation group. Since the solutions for orthogonal and signed permutation groups share the same form and derivation, we treat them equivalently.

To solve for blocks $S^{(ii)}$, we reuse a result that was established in  \Cref{app:symo_ff_equiv}:
\begin{equation}
    \begin{cases}
        S^{(ii)} = F \otimes I, 
        \quad F = \frac{1}{d^{(i)}} \left(G^{(i)} \right)^\top G^{(i)} 
        & \text{when } A^{(ii)} = I \otimes X \\
        S^{(ii)} = I \otimes F, 
        \quad F = \frac{1}{d^{(i-1)}} G^{(i)} \left(G^{(i)}\right)^\top 
        & \text{when } A^{(ii)} = X \otimes I
    \end{cases},
\end{equation}
where $G^{(i)} \in \mathbb{R}^{d^{(i)} \times d^{(i-1)}}$ denotes the gradient of the loss w.r.t. the weight $W^{(i)}$. Substituting these solutions into corresponding Symo updates we obtain:
\begin{equation}
    \begin{cases}
        (S^{(ii)})^{-\half} \text{vec}(G^{(i)}) = 
        \frac{1}{\sqrt{d^{(i)}}} \text{vec} \left(G^{(i)} \left((G^{(i)})^\top G^{(i)}\right)^{-\half}\right)
        & \text{when } S^{(ii)} = F \otimes I  \\
        (S^{(ii)})^{-\half} \text{vec}(G^{(i)}) = 
        \frac{1}{\sqrt{d^{(i-1)}}} \text{vec} \left(\left(G^{(i)} (G^{(i)})^\top\right)^{-\half} G^{(i)} \right) 
        & \text{when } S^{(ii)} = I \otimes F
    \end{cases},
\end{equation}
which is equivalent to the middle expression at \cref{eq:shampoo_muon_equiv_clear}, proving our statement that the Shampoo/Muon update direction recovers the Symo update, up to a constant scaling factor (that is different per layer) and in the absence of momentum.

\section{Details of symmetry group used in \Cref{fig:transformer}}
\label{app:transformer}

In \Cref{fig:transformer}, we showed the $\sigmag = \reynolds_2(\bm{g},\bm{g},\mathcal{G})$ for a single-layer Transformer.  
In this case, we considered the following group $\mathcal{G}$:
\begin{align}
 W_\text{emb} &\to A_d W_\text{emb} I_\text{vocab} \\    
 W_{QK} &\to \tilde{A}_d W_{QK} (I_2 \otimes A_d) \\    
 W_{V} &\to A_v W_{V} \tilde{A}_d \\    
 W_\text{proj} &\to A_d W_\text{proj} A_v \\
 W_{\text{ff}_1} &\to A_\text{f} W_{\text{ff}_1} A_d \\ 
 W_{\text{ff}_2} &\to A_d W_{\text{ff}_2} A_\text{f}, \\ 
\end{align}
where $A_d$, $A_v$ and $A_\text{f}$ are appropriately sized permutation matrices, and  $\tilde{A}_d$ is an orthogonal matrix. The transformer was run on a toy dataset generated with uniformly sampled inputs and outputs of vocabulary size equal to $7$ and sequence length $50$. Note that $W_{QK}$ is a matrix with two matrices as block diagonals each representing weights $W_{Q}$, and $W_{K}$ of query and key correspondingly. This construction is necessary following the implementation details in \citealp{karpathy2026nanoGPT}.

\section{Details on Hessian approximation experiments}
\label{app:hess_approx_details}
In this section, we describe the details of the Hessian approximation experiments presented in~\Cref{sec:hess_approx}.

\subsection{Network}
We train a four-layer multilayer perceptron (MLP) in a teacher-student setup, mapping the input \(x \in \mathbb{R}^{100}\) through layer widths \(70 \rightarrow 70 \rightarrow 70 \rightarrow 40 \)  using \texttt{tanh} activation functions. The total number of parameters is \(19{,}850\).

Both the teacher network weights and the initial student network weights are drawn from a Gaussian distribution with zero mean and standard deviation $1/\sqrt{d}$, where $d$ denotes the input dimension of the corresponding weight matrix. The network is trained to minimize the mean squared error (MSE) between its output and that of the teacher network. Training with Adam is performed in a full-batch setting with batch size $5000$ and a learning rate of $0.1$.

As discussed in \Cref{sec:symo_equivalence}, to demonstrate Symo-Muon-Shampoo equivalence we only consider signed permutation symmetries applied to alternating layers. In particular, we use identity symmetry groups for the input and output weights, and assigning a signed permutation group to every subsequent odd-indexed layer and identity symmetry group for even-indexed layer.

\subsection{Similarity test}
The comparison is carried out by computing the cosine similarity between
\[
\bm{g}' - \bm{g}
\quad \text{and} \quad
\hat{H}(\bm{w}' - \bm{w}),
\]
where $\bm{w}$ denotes the vectorization of parameter $W$ at a given point during optimization, $\bm{w}' = \bm{w} + \Delta \bm{w}$, $g = \nabla \mathcal{L}(\bm{w})$, and $g' = \nabla \mathcal{L}(\bm{w}')$. 
The cosine similarity between two vectors $\mathbf{a}$ and $\mathbf{b}$ is defined as
\[
\text{cos}(\mathbf{a},\mathbf{b})= \frac{\mathbf{a} \cdot \mathbf{b}}{\lVert \mathbf{a} \rVert \lVert \mathbf{b} \rVert}
\]
where $\cdot$ is the Euclidean dot product.
We consider two choices for $\Delta \bm{w}$. In the first, we take a step of size $r$ in a random direction,
\[
\Delta \bm{w} = r\,\bar{\boldsymbol{\epsilon}}, \qquad \boldsymbol{\epsilon} \sim \mathcal{N}(0, I),
\]
and in the second, we take a step of size $r$ in the negative gradient direction,
\[
\Delta \bm{w} = -r\,\bar{\bm{g}},
\]
where $\bar{\boldsymbol{x}} = \boldsymbol{x} / \lVert \boldsymbol{x} \rVert$ such that $\bar{\boldsymbol{x}}$ has unit norm. We use $r=0.01$ and $N=1000$ in all runs.

\subsection{Constructing Hessian approximations}
\label{sec:Hess_approx}

We compare against the true Hessian at $\bm{w}$ ($H = \nabla^{2} \mathcal{L}(\bm{w})$) five other Hessian approximations:
\begin{enumerate}
    \item the true Hessian evaluated at the orbit-averaged parameters, $H^\star$ (\Cref{eq:taylor_exp});
    \item $H^\star_\text{PD}$ (\Cref{eq:symo_full});
    \item $H^\star_{\bm{g}}$ (\Cref{eq:symo_1});
    \item block-diagonal $H^\star_{\bm{g}}$ (denoted as $H^\star_{\bm{g}}\,(\mathrm{BD})$);
    \item the Shampoo Hessian approximation (\Cref{sec:shampoo_muon}).
\end{enumerate}

\paragraph{Shampoo and Symo Hessian-vector products}
In \Cref{sec:symo_equivalence} and \Cref{app:symo_shampoo_equivalence}, we showed that the block-diagonal Symo update with curvature matrix \(H^\star_{\bm g}(\mathrm{BD})\) is mathematically equivalent to the Shampoo update under certain restrictions. However, this equivalence holds at the level of the \emph{update direction} and does not imply that the corresponding \(\hat H\,\bm{v}\) products are identical for arbitrary \(\bm{v} \neq \bm{g}\).

Consider the block of \(\hat H\,\bm{v}\) associated with the the $i$-th layer weight matrix \(W \in \mathbb{R}^{d^{(i)} \times d^{(i-1)}}\).
Since both the Shampoo curvature matrix and \(H^\star_{\bm g}(\mathrm{BD})\) employ block-diagonal Hessian approximations, each block of \(\hat H\,\bm{v}\) can be analyzed independently.
For notational simplicity, we henceforth omit the block indices.
For Shampoo, the matrix--vector product of a particular block takes the form
\begin{equation}
\label{eq:hv_shampoo}
\begin{split}
   \hat H\,\bm{v}
   &= \bm{v}\!\left(L^{\frac14}\,V\,R^{\frac14}\right) \\
   &= \mathrm{vec}\!\left((GG^\top)^{\frac14}\,V\,(G^\top G)^{\frac14}\right)
\end{split}
\end{equation}
where $V \in \mathbb{R}^{d^{(i)} \times d^{(i-1)}}$ is the unvectorized version of $\bm{v}$.
In contrast, the corresponding Symo product is given by
\begin{equation}
\label{eq:hv_symo}
\begin{split}
   H^\star_{\bm g}(\mathrm{BD})\,\bm{v}
   &= (\sigmag)^{\frac12}\,\bm{v} \\
   &= (F \otimes I)^{\frac12}\,\bm{v} \\
   &= \frac{1}{\sqrt{d^{(i)}}}\,\mathrm{vec}\!\left(V\,(G^\top G)^{\frac12}\right),
\end{split}
\end{equation}
or 
\begin{equation}
\label{eq:hv_symo2}
\begin{split}
   H^\star_{\bm g}(\mathrm{BD})\,\bm{v}
   &= (\sigmag)^{\frac12}\,\bm{v} \\
   &= (I \otimes F)^{\frac12}\,\bm{v} \\
   &= \frac{1}{\sqrt{d^{(i-1)}}}\,\mathrm{vec}\!\left((G G^\top)^{\frac12} V\ \right),
\end{split}
\end{equation}
where we make use of the expressions for \(F\) derived in \Cref{app:symo_shampoo_equivalence}. Equation ~\eqref{eq:hv_shampoo} and  Equations \eqref{eq:hv_symo2} ~\eqref{eq:hv_symo} are equivalent if and only if
\begin{equation}
\label{eq:shampoo_symo_curvature_eqn}
    (GG^\top)^{\frac14}\,V = V\,(G^\top G)^{\frac14},
\end{equation}
which holds, for example, when \(V\) is a scaled multiple of \(G\), but not for a general matrix \(V\).
In practice, an appropriate scaling factor \(\frac{1}{\sqrt{d^{(i)}}}\) or \(\frac{1}{\sqrt{d^{(i-1)}}}\) was applied to the corresponding blocks of \(H^\star_{\bm g}(\mathrm{BD})\) to demonstrate that Symo matches Shampoo in the gradient direction (\Cref{fig:hess_approx} (right)).

In our hessian approximation experiments, all covariance matrices (i.e., \(\sigmag\) and $\sigmaw$) are damped as \(X \leftarrow X + \lambda I\) with \(\lambda = 10^{-6}\) prior to any inversion or square-root operations.

\section{Details on equivalence experiments}
\label{app:empirical_equivalence_details}
In this section, we describe the details of the two experiments carried out for demonstrating Symo-Shampoo-Muon equivalence in \Cref{sec:empirical_equivalence}: the MNIST autoencoder with an MLP and the character-level prediction task with nanoGPT.

\subsection{Details on Autoencoder experiments}
\input{_fig_autoencoder}

\label{app:autoencoder_details}
In this section, we describe the details of the MNIST autoeconder experiments presented in Sec.~\ref{sec:autoencoder}.
\subsubsection{Network}

We study a deep autoencoder trained on the MNIST handwritten digit dataset \cite{lecun1998mnist}, using the same experimental configuration as in \citet{martens2015kfac}. The model is a fully connected autoencoder with a symmetric encoder–decoder topology. All layers are affine, including bias parameters, and employ tanh activation functions, except for the final reconstruction layer, which uses a sigmoid nonlinearity. Inputs \(x \in \mathbb{R}^{784}\) are mapped through hidden layer dimensions \(784 \rightarrow 1000 \rightarrow 500 \rightarrow 250 \rightarrow 30 \rightarrow 250 \rightarrow 500 \rightarrow 1000 \rightarrow 784\). The network contains a total of \(2{,}837{,}314\) trainable parameters. Training is performed using full-batch optimization over the \(60{,}000\) training images for \(300\) epochs, and evaluation is conducted on a held-out test set of \(10{,}000\) images. The learning objective is the binary cross-entropy loss between the input and its reconstruction.

\subsubsection{Optimizers}
\label{sec:optimizers}
This section details the specific configurations of optimizers used in MNIST autoencoder experiments. Exact hyperparameter values used and ranges explored can be found in the tables in \Cref{app:hyperparams}.

\paragraph{Muon}
We apply the standard PyTorch implementation of Muon \cite{jordan2024muon,liu2025muon} to all two-dimensional weights in the autoencoder. Since Muon is not defined for one-dimensional parameters, we apply Adam to the bias parameters. To have a fair comparison with Symo we disable weight decay as well as Nesterov momentum in Muon. For this experiment we set the number of Newton-Schulz iteration steps to \(10\).

\paragraph{Symo}
As discussed in \Cref{sec:symo_equivalence}, Symo and Muon coincide under a set of well-defined conditions. For the MNIST autoencoder, these conditions include:
\begin{enumerate}
\label{enum:symo_muon_conditions_ae}
    \item Using a block-diagonal \(H^\star_{\bm{g}}\) as the curvature matrix. \label{item:symo_muon_eq_bd}
    \item Setting the symmetry on the input space to be trivial by taking the input group as the identity, and imposing a signed permutation symmetry on each subsequent odd-indexed layer.\label{item:symo_muon_eq_symm}
    \item Removing the scaling factor for Symo updates (i.e., \(\sqrt{d^{(i)}}\) in \Cref{eq:symo_delta}). \label{item:symo_muon_eq_scaling}
    \item Removing the scaling factor for Muon updates (i.e., \(\max\!\left(1, \sqrt{\text{fan}_{\text{out}}/\text{fan}_{\text{in}}}\right)\) \cite{jordan2024muon}). \label{item:symo_muon_eq_muon_scaling}
    \item Replacing the Newton--Schulz method for computing the update parameter \cite{jordan2024muon} with the exact inverse square-root of the outer products of the gradient vector, effectively recovering the Shampoo update (\Cref{sec:shampoo_muon}). \label{item:symo_muon_eq_ns}
    \item Use Adam for optimizing one-dimensional parameters (i.e. biases) and Symo/Muon for two-dimensional parameters. 
    \label{item:symo_muon_eq_adam_for_bias}
    \item When calculating \(H^\star_{\bm{g}}\) we assume the orbit average of the gradients is equal to zero \((\bm{g}^\star = \boldsymbol{0}\)). \label{item:symo_muon_eq_avg} 
    
\end{enumerate}
In our experiments, we enforce Item \ref{item:symo_muon_eq_bd}, Item \ref{item:symo_muon_eq_symm} and Item \ref{item:symo_muon_eq_avg} but do not enforce the rest in order to compare the theoretically sound Symo optimization step with the empirically effective Muon update. For \Cref{item:symo_muon_eq_scaling}, since the scaling factor differs across gradient vectors, exact matching would require different learning rates for each vector. Instead, we use a single global learning rate for Symo, tuned to best approximate the Muon update. For \Cref{item:symo_muon_eq_muon_scaling} and \Cref{item:symo_muon_eq_ns} we retain the original Muon implementations, which are known to perform well in practice. As for \Cref{item:symo_muon_eq_adam_for_bias}, we retain the Muon implementation, which uses Adam to optimize one-dimensional parameters, while in the Symo run we still apply Symo updates to these parameters. 
We also adopt Muon’s momentum formulation, applying momentum to the gradients prior to factor estimation. 
As shown in \Cref{fig:mnist_comp}, despite these implementation differences, we are able to closely match the Muon training and evaluation curves.

\paragraph{Adam}
We compare results with the standard PyTorch implementation of Adam \cite{kingma2014adam}.

\subsection{Details on nanoGPT experiments}
\label{app:nanogpt_details}

\input{_fig_nanogpt}

In this section, we describe the details of the nanoGPT experiments presented in Sec.~\ref{sec:nanogpt}.

\subsubsection{Network}
We train a small nanoGPT model \cite{karpathy2026nanoGPT} on the Tiny Shakespeare character-level prediction task \cite{tinyshakespeare}. The network consists of \(6\) transformer blocks, each with \(6\) attention heads and an embedding dimension of \(384\). The total number of learnable parameters of 10.6 million. The vocabulary size is \(65\) and sequence length is \(256\). Training is performed in a mini-batch setting with batch size \(64\), and early stopping with max patience of ten is applied using validation loss as the stopping criterion. Dropout is set to \(0.2\) in all experiments. 

\subsubsection{Optimizers}
This section details the specific configurations of optimizers used in nanoGPT experiments. Exact hyperparameter values used and ranges explored can be found in the tables in \Cref{app:hyperparams}.

\paragraph{Muon}
Following the standard PyTorch implementation of Muon \cite{jordan2024muon,liu2025muon}, we apply Muon to parameters with dimension greater than one (e.g., weights in MLP layers and attention projections), and use Adam for one-dimensional parameters (e.g., biases). To demonstrate the equivalence between Muon and Symo, we set the weight decay to \(0\) and disable Nesterov momentum. We use the default value of five Newton-Schulz steps. 

\paragraph{Symo}
As outlined in \Cref{sec:symo_equivalence}, Symo and Muon are equivalent under specific conditions. For the nanoGPT experiment we summarize these conditions here:
\begin{enumerate}\label{enum:symo_muon_conditions_gpt}
    \item Using a block-diagonal \(H^\star_{\bm{g}}\) as the curvature matrix.\label{item:symo_muon_eq_gpt_bd}
     \item Apply the trivial identity group for embedding weights, orthogonal group symmetries among the key, query, and value weights, and permutation group symmetries across the attention heads. \label{item:symo_muon_eq_gpt_symm}
    \item Removing the scaling factor for Symo updates (i.e., \(\sqrt{d^{(1)}} \) in \Cref{eq:symo_delta}). \label{item:symo_muon_eq_gpt_scaling}
    \item Removing the scaling factor for Muon updates (i.e., \(\max\!\left(1, \sqrt{\text{fan}_{\text{out}}/\text{fan}_{\text{in}}}\right)\) \cite{jordan2024muon}). \label{item:symo_muon_eq_gpt_muon_scaling}
    \item Replacing the Newton--Schulz method for computing the update parameter \cite{jordan2024muon} with the exact inverse square-root of the outer products of the gradient vector, effectively recovering the Shampoo update (\Cref{sec:shampoo_muon}). \label{item:symo_muon_eq_gpt_ns}
    \item Use Adam for optimizing one-dimensional parameters (i.e. embedding weights, biases, layer normalization) and Symo/Muon for two-dimensional parameters. \label{item:symo_muon_eq_gpt_adam_for_bias}
    \item When calculating \(H^\star_{\bm{g}}\) we assume the orbit average of the gradients is equal to zero \((\bm{g}^\star = \boldsymbol{0}\)). \label{item:symo_muon_eq_gpt_avg}
\end{enumerate} 
In our nanoGPT experiments, we enforce \Cref{item:symo_muon_eq_gpt_bd}, \Cref{item:symo_muon_eq_gpt_adam_for_bias} \Cref{item:symo_muon_eq_gpt_avg} in List~\ref{enum:symo_muon_conditions_gpt}. However, similar to the autoencoder experiments, we do not enforce \Cref{item:symo_muon_eq_gpt_scaling}, \Cref{item:symo_muon_eq_gpt_muon_scaling} and \Cref{item:symo_muon_eq_gpt_ns}, for the same reasons explained in \cref{app:autoencoder_details}.
We also adopt Muon’s momentum formulation, applying momentum to the gradients prior to factor estimation. 
As shown in \Cref{fig:nanogpt_comp}, despite these implementation differences, we are able to closely match the Muon training and test loss curves and test accuracy curve.

The parameter groups used by Symo are chosen according to their underlying symmetry properties. The identity group is used for the embedding layer weights, input weights and output weights, as these parameters do not admit non-trivial symmetry transformations (\Cref{app:symo_transformer_equiv}). The permutation group is applied to head-group parameters to reflect the invariance of the model to permutations of attention heads. 
Finally, the orthogonal group is used for attention head parameters, capturing rotational invariances within the attention subspace.

\paragraph{Adam}
We use the PyTorch implementation of Adam \cite{kingma2014adam}.

\subsection{Hyperparameters}
\label{app:hyperparams}

Here we provide details on the method for hyperparameter tuning in the autoencoder and nanoGPT experiments described in \Cref{app:autoencoder_details} and \Cref{app:nanogpt_details}. Hyperparameter optimization was conducted using \texttt{Optuna} \cite{optuna2019}. For each experiment and optimizer configuration, we performed 100 optimization trials, where each trial corresponds to a training run with a distinct set of hyperparameters. 

For Symo, Muon, and Adam, hyperparameters were sampled from fixed search spaces, with parameter ranges reported in Table~\ref{tab:hyperparameters_symo_explored}, Table~\ref{tab:hyperparameters_muon_explored} and Table~\ref{tab:hyperparameters_adam_explored}. The optimization objective was to minimize the validation loss evaluated at a fixed number of training steps. For the autoencoder experiments, the validation loss after 30 epochs was used. For the nanoGPT experiments, the validation loss after 500 training iterations was used. 

\texttt{Optuna}’s default sampler was used, which is based on Bayesian optimization via the Tree-structured Parzen Estimator (TPE). For each experiment, the final hyperparameter configuration was selected as the trial achieving the lowest validation loss across the 100 trials. The final hyperparameters used are reported in Table~\ref{tab:hyperparameters_symo}, Table~\ref{tab:hyperparameters_muon} and Table~\ref{tab:hyperparameters_adam} for Symo, Muon and Adam respectively.

\input{tabs/hyperparams_symo}
\input{tabs/hyperparams_muon}
\input{tabs/hyperparams_adam}

\clearpage

\section{Details of \Cref{sec:symm_choice} -- exploration of different group choices}
\label{app:autoenc}

For \Cref{fig:autoenc}, we trained MLPs as autoencoders  of the MNIST and FACES datasets, a well-established benchmark for second-order optimization \citep{goldfarb2020practical}. Network architectures, training losses, and dataset splits were exactly as in \citet{goldfarb2020practical}. The network architecture, layer sizes and activation functions at each layer are presented in \Cref{tab:architecture_group_choice}. The hyperparameters for Symo, Adam and Shampoo are presented in ~\Cref{tab:hyperparameters_symo_group_choice}, \Cref{tab:hyperparameters_adam_group_choice} and \Cref{tab:hyperparameters_shampoo_group_choice} respectively (only learning rates were optimized by grid search).

\input{tabs/architecture_group_choice}

\input{tabs/hyperparams_symo_group_choice}
\input{tabs/hyperparams_adam_group_choice}
\input{tabs/hyperparams_shampoo_group_choice}

\section{Example Code}
\label{app:code}
Here we present an example code snippet for training a three-layer MLP with Symo. 

\lstset{
  language=Python,
  basicstyle=\ttfamily\footnotesize,
  keywordstyle=\color{blue},
  commentstyle=\color{gray},
  stringstyle=\color{teal},
  breaklines=true,
  frame=single,
  columns=fullflexible,
  xleftmargin=2.5em,
  xrightmargin=2.5em
}

\begin{lstlisting}[caption={Example code for training a simple MLP using the Symo optimizer}, label={lst:code}]
import torch
import torch.nn as nn
from symo.factory import groups_spec
from symo.optim import Symo

class MLP(nn.Module):
    ...
    def forward(self, x):
        x = nn.Linear(input_dim, hidden_dim)(x)
        x = torch.relu(x)
        x = nn.Linear(hidden_dim, hidden_dim)(x)
        x = torch.relu(x)
        x = nn.Linear(hidden_dim, output_dim)(x)
        return x

model = MLP(input_dim, hidden_dim, output_dim)

# [ Group ]_[ ID ]
# Group can be "I" for identity group
#              "S" for permutation group
#              "B" for signed-permutation group
#              "O" for orthogonal group
# ID indicates whether two groups share the same identity
In = "I_input"
Ou = "I_output"
Gh1 = "S_hidden1"
Gh2 = "S_hidden2"

groups = {
    "MLP.0.weight": (Gh1, In), #(out_features, in_features)
    "MLP.0.bias": (Gh1,),
    "MLP.1.weight": (Gh2, Gh1),
    "MLP.1.bias": (Gh2,),
    "MLP.2.weight": (Ou, Gh1),
    "MLP.2.bias": (Ou,),
}
groups_list = [
    groups[name] for n, _ in model.named_parameters()
]

# Each ID should have a dimension size
sizes = {"input": input_dim,
         "output": output_dim,
         "hidden1": hidden_dim,
         "hidden2": hidden_dim,}

optimizer = Symo(
    params=model.parameters(),
    groups_spec=groups_spec(groups_list, sizes),
    lr=learning_rate,
    grads_beta=0.9,
    factors_beta=0.9,
)

...

for _ in range(iterations):
    x,y = next(data_batch)

    optimizer.zero_grad()
    loss = mse_loss(model(x), y)
    loss.backward()
    optimizer.step()

\end{lstlisting}

\end{document}

%% file: _fig_illustration.tex
\begin{figure}[t]
    \centering
    \includegraphics[width=0.85\linewidth]{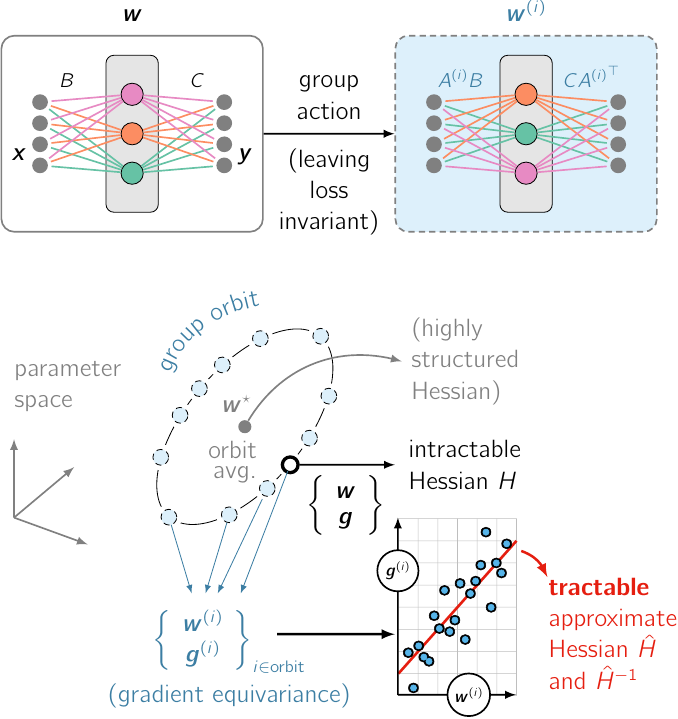}
    \caption{We present a framework for tractably approximating Hessians of large models by harnessing their inherent symmetries.
    \textbf{Top}: neural networks are typically invariant to many types of transformations, such as permutation of units within each layer (\Cref{sec:geometry}).
    \textbf{Bottom:} given a current parameter set $\bm{w}$ and associated loss gradient $\bm{g}$, all similar pairs $\smash{(\bm{w}^{(i)}, \bm{g}^{(i)})}$ obtained by symmetry form the \emph{group orbit} and are analytically accessible, as loss invariance implies gradient equivariance (\Cref{sec:orbit_averages}). This large set describes how gradients vary across the parameter space. Here, we show that such curvature information can be implicitly manipulated and summarized into a tractable approximation $\smash{\hat{H}}$ to the intractable Hessian $H$ (\Cref{sec:hessians_from_symmetries}).}
    \label{fig:illustration}
\end{figure}

%% file: _fig_hess_approx.tex
\definecolor{symo1_diag}{HTML}{FF7F00}
\definecolor{shampoo}{HTML}{860f2b}

\begin{figure}[t]
    \centering
    \includegraphics[width=1.\linewidth]{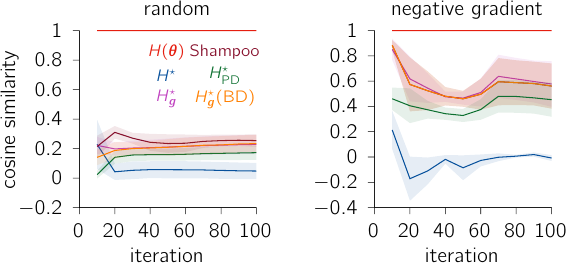}
    \caption{Hessian approximation experiments.
    Cosine similarities between $\bm{g}' - \bm{g}$ and $\smash{\hat{H}}(\bm{w}' - \bm{w})$,
    averaged across random directions (left)
    and in the negative gradient direction (right); with $\pm 95\%$ confidence intervals over 5 random seeds.
    See main text for details.
    Note that in the gradient direction, \textcolor{shampoo}{Shampoo} and \textcolor{symo1_diag}{$H^\star_{\bm{g}}\,(\mathrm{BD})$} overlap completely (consistent with the mathematical equivalence we proved). 
    }
    \label{fig:hess_approx}
    \vspace*{-1em}
\end{figure}

%% file: _fig_autoencoder_nanogpt.tex
\begin{figure}[t]
    \definecolor{symosmaller}{HTML}{FF8313}
    \definecolor{muon}{HTML}{860f2b}
    
    \centering
    \includegraphics[width=1.\linewidth]{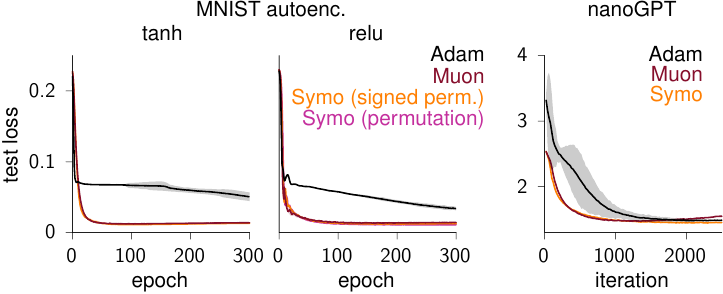}
    \caption{
    We demonstrate the equivalence of Symo and Muon on both the MNIST autoencoder task and the nanoGPT next-character prediction task. For comparison, we also train networks using Adam, showing the superior performance of Symo and Muon relative to Adam.
    \textbf{Left}:~Test mean-squared error (MSE) for the MNIST autoencoder under two activation functions.
    \textbf{Right}:~Test loss for nanoGPT.
    Note that the \textcolor{symosmaller}{\textbf{Symo}} and \textcolor{muon}{\textbf{Muon}} curves overlap.}
    \label{fig:mnist_nanogpt}
    \vspace*{-1em}
\end{figure}

%% file: tabs/table_to_insert.tex
vector & vector & $ S = \textcolor{autotbl3}{I} S \textcolor{autotbl3}{I}^\top $ & $ S_{i k} $ & $ c_{i k}^{\scriptscriptstyle{(1)}}  $\\ 
 \hline 
vector & vector & $ S = \textcolor{autotbl3}{I} S \textcolor{autotbl1}{A_1}^\top $ & $ S_{i k} $ & $ c_{i}^{\scriptscriptstyle{(1)}}  $\\ 
 \hline 
vector & vector & $ S = \textcolor{autotbl1}{A_1} S \textcolor{autotbl1}{A_1}^\top $ & $ S_{i k} $ & $ c_{}^{\scriptscriptstyle{(1)}}  $\\ 
 &  & $  $ & $  $ & $ c_{}^{\scriptscriptstyle{(2)}} \delta_{i k} $\\ 
 \hline 
vector & vector & $ S = \textcolor{autotbl1}{A_1} S \textcolor{autotbl2}{A_2}^\top $ & $ S_{i k} $ & $ c_{}^{\scriptscriptstyle{(1)}}  $\\ 
 \hline 
vector & matrix & $ S = \textcolor{autotbl3}{I} S (\textcolor{autotbl3}{I} \otimes \textcolor{autotbl3}{I})^\top $ & $ S_{i k \ell} $ & $ c_{i k \ell}^{\scriptscriptstyle{(1)}}  $\\ 
 \hline 
vector & matrix & $ S = \textcolor{autotbl3}{I} S (\textcolor{autotbl3}{I} \otimes \textcolor{autotbl1}{A_1})^\top $ & $ S_{i k \ell} $ & $ c_{i k}^{\scriptscriptstyle{(1)}}  $\\ 
 \hline 
vector & matrix & $ S = \textcolor{autotbl3}{I} S (\textcolor{autotbl1}{A_1} \otimes \textcolor{autotbl3}{I})^\top $ & $ S_{i k \ell} $ & $ c_{i \ell}^{\scriptscriptstyle{(1)}}  $\\ 
 \hline 
vector & matrix & $ S = \textcolor{autotbl3}{I} S (\textcolor{autotbl1}{A_1} \otimes \textcolor{autotbl1}{A_1})^\top $ & $ S_{i k \ell} $ & $ c_{i}^{\scriptscriptstyle{(1)}}  $\\ 
 &  & $  $ & $  $ & $ c_{i}^{\scriptscriptstyle{(2)}} \delta_{k \ell} $\\ 
 \hline 
vector & matrix & $ S = \textcolor{autotbl3}{I} S (\textcolor{autotbl1}{A_1} \otimes \textcolor{autotbl2}{A_2})^\top $ & $ S_{i k \ell} $ & $ c_{i}^{\scriptscriptstyle{(1)}}  $\\ 
 \hline 
vector & matrix & $ S = \textcolor{autotbl1}{A_1} S (\textcolor{autotbl3}{I} \otimes \textcolor{autotbl3}{I})^\top $ & $ S_{i k \ell} $ & $ c_{k \ell}^{\scriptscriptstyle{(1)}}  $\\ 
 \hline 
vector & matrix & $ S = \textcolor{autotbl1}{A_1} S (\textcolor{autotbl3}{I} \otimes \textcolor{autotbl1}{A_1})^\top $ & $ S_{i k \ell} $ & $ c_{k}^{\scriptscriptstyle{(1)}}  $\\ 
 &  & $  $ & $  $ & $ c_{k}^{\scriptscriptstyle{(2)}} \delta_{i \ell} $\\ 
 \hline 
vector & matrix & $ S = \textcolor{autotbl1}{A_1} S (\textcolor{autotbl3}{I} \otimes \textcolor{autotbl2}{A_2})^\top $ & $ S_{i k \ell} $ & $ c_{k}^{\scriptscriptstyle{(1)}}  $\\ 
 \hline 
vector & matrix & $ S = \textcolor{autotbl1}{A_1} S (\textcolor{autotbl1}{A_1} \otimes \textcolor{autotbl3}{I})^\top $ & $ S_{i k \ell} $ & $ c_{\ell}^{\scriptscriptstyle{(1)}}  $\\ 
 &  & $  $ & $  $ & $ c_{\ell}^{\scriptscriptstyle{(2)}} \delta_{i k} $\\ 
 \hline 
vector & matrix & $ S = \textcolor{autotbl1}{A_1} S (\textcolor{autotbl1}{A_1} \otimes \textcolor{autotbl1}{A_1})^\top $ & $ S_{i k \ell} $ & $ c_{}^{\scriptscriptstyle{(1)}}  $\\ 
 &  & $  $ & $  $ & $ c_{}^{\scriptscriptstyle{(2)}} \delta_{i \ell} $\\ 
 &  & $  $ & $  $ & $ c_{}^{\scriptscriptstyle{(3)}} \delta_{k \ell} $\\ 
 &  & $  $ & $  $ & $ c_{}^{\scriptscriptstyle{(4)}} \delta_{i k} $\\ 
 &  & $  $ & $  $ & $ c_{}^{\scriptscriptstyle{(5)}} \delta_{i k \ell} $\\ 
 \hline 
vector & matrix & $ S = \textcolor{autotbl1}{A_1} S (\textcolor{autotbl1}{A_1} \otimes \textcolor{autotbl2}{A_2})^\top $ & $ S_{i k \ell} $ & $ c_{}^{\scriptscriptstyle{(1)}}  $\\ 
 &  & $  $ & $  $ & $ c_{}^{\scriptscriptstyle{(2)}} \delta_{i k} $\\ 
 \hline 
vector & matrix & $ S = \textcolor{autotbl1}{A_1} S (\textcolor{autotbl2}{A_2} \otimes \textcolor{autotbl3}{I})^\top $ & $ S_{i k \ell} $ & $ c_{\ell}^{\scriptscriptstyle{(1)}}  $\\ 
 \hline 
vector & matrix & $ S = \textcolor{autotbl1}{A_1} S (\textcolor{autotbl2}{A_2} \otimes \textcolor{autotbl2}{A_2})^\top $ & $ S_{i k \ell} $ & $ c_{}^{\scriptscriptstyle{(1)}}  $\\ 
 &  & $  $ & $  $ & $ c_{}^{\scriptscriptstyle{(2)}} \delta_{k \ell} $\\ 
 \hline 
vector & matrix & $ S = \textcolor{autotbl2}{A_2} S (\textcolor{autotbl3}{I} \otimes \textcolor{autotbl1}{A_1})^\top $ & $ S_{i k \ell} $ & $ c_{k}^{\scriptscriptstyle{(1)}}  $\\ 
 \hline 
vector & matrix & $ S = \textcolor{autotbl2}{A_2} S (\textcolor{autotbl1}{A_1} \otimes \textcolor{autotbl3}{I})^\top $ & $ S_{i k \ell} $ & $ c_{\ell}^{\scriptscriptstyle{(1)}}  $\\ 
 \hline 
vector & matrix & $ S = \textcolor{autotbl2}{A_2} S (\textcolor{autotbl1}{A_1} \otimes \textcolor{autotbl1}{A_1})^\top $ & $ S_{i k \ell} $ & $ c_{}^{\scriptscriptstyle{(1)}}  $\\ 
 &  & $  $ & $  $ & $ c_{}^{\scriptscriptstyle{(2)}} \delta_{k \ell} $\\ 
 \hline 
vector & matrix & $ S = \textcolor{autotbl2}{A_2} S (\textcolor{autotbl1}{A_1} \otimes \textcolor{autotbl2}{A_2})^\top $ & $ S_{i k \ell} $ & $ c_{}^{\scriptscriptstyle{(1)}}  $\\ 
 &  & $  $ & $  $ & $ c_{}^{\scriptscriptstyle{(2)}} \delta_{i \ell} $\\ 
 \hline 
matrix & matrix & $ S = (\textcolor{autotbl3}{I} \otimes \textcolor{autotbl3}{I}) S (\textcolor{autotbl3}{I} \otimes \textcolor{autotbl3}{I})^\top $ & $ S_{i j k \ell} $ & $ c_{i j k \ell}^{\scriptscriptstyle{(1)}}  $\\ 
 \hline 
matrix & matrix & $ S = (\textcolor{autotbl3}{I} \otimes \textcolor{autotbl3}{I}) S (\textcolor{autotbl3}{I} \otimes \textcolor{autotbl1}{A_1})^\top $ & $ S_{i j k \ell} $ & $ c_{i j k}^{\scriptscriptstyle{(1)}}  $\\ 
 \hline 
matrix & matrix & $ S = (\textcolor{autotbl3}{I} \otimes \textcolor{autotbl3}{I}) S (\textcolor{autotbl1}{A_1} \otimes \textcolor{autotbl3}{I})^\top $ & $ S_{i j k \ell} $ & $ c_{i j \ell}^{\scriptscriptstyle{(1)}}  $\\ 
 \hline 
matrix & matrix & $ S = (\textcolor{autotbl3}{I} \otimes \textcolor{autotbl3}{I}) S (\textcolor{autotbl1}{A_1} \otimes \textcolor{autotbl1}{A_1})^\top $ & $ S_{i j k \ell} $ & $ c_{i j}^{\scriptscriptstyle{(1)}}  $\\ 
 &  & $  $ & $  $ & $ c_{i j}^{\scriptscriptstyle{(2)}} \delta_{k \ell} $\\ 
 \hline 
matrix & matrix & $ S = (\textcolor{autotbl3}{I} \otimes \textcolor{autotbl3}{I}) S (\textcolor{autotbl1}{A_1} \otimes \textcolor{autotbl2}{A_2})^\top $ & $ S_{i j k \ell} $ & $ c_{i j}^{\scriptscriptstyle{(1)}}  $\\ 
 \hline 
matrix & matrix & $ S = (\textcolor{autotbl3}{I} \otimes \textcolor{autotbl1}{A_1}) S (\textcolor{autotbl3}{I} \otimes \textcolor{autotbl1}{A_1})^\top $ & $ S_{i j k \ell} $ & $ c_{i k}^{\scriptscriptstyle{(1)}}  $\\ 
 &  & $  $ & $  $ & $ c_{i k}^{\scriptscriptstyle{(2)}} \delta_{j \ell} $\\ 
 \hline 
matrix & matrix & $ S = (\textcolor{autotbl3}{I} \otimes \textcolor{autotbl1}{A_1}) S (\textcolor{autotbl3}{I} \otimes \textcolor{autotbl2}{A_2})^\top $ & $ S_{i j k \ell} $ & $ c_{i k}^{\scriptscriptstyle{(1)}}  $\\ 
 \hline 
matrix & matrix & $ S = (\textcolor{autotbl3}{I} \otimes \textcolor{autotbl1}{A_1}) S (\textcolor{autotbl1}{A_1} \otimes \textcolor{autotbl3}{I})^\top $ & $ S_{i j k \ell} $ & $ c_{i \ell}^{\scriptscriptstyle{(1)}}  $\\ 
 &  & $  $ & $  $ & $ c_{i \ell}^{\scriptscriptstyle{(2)}} \delta_{j k} $\\ 
 \hline 
matrix & matrix & $ S = (\textcolor{autotbl3}{I} \otimes \textcolor{autotbl1}{A_1}) S (\textcolor{autotbl1}{A_1} \otimes \textcolor{autotbl1}{A_1})^\top $ & $ S_{i j k \ell} $ & $ c_{i}^{\scriptscriptstyle{(1)}}  $\\ 
 &  & $  $ & $  $ & $ c_{i}^{\scriptscriptstyle{(2)}} \delta_{j \ell} $\\ 
 &  & $  $ & $  $ & $ c_{i}^{\scriptscriptstyle{(3)}} \delta_{k \ell} $\\ 
 &  & $  $ & $  $ & $ c_{i}^{\scriptscriptstyle{(4)}} \delta_{j k} $\\ 
 &  & $  $ & $  $ & $ c_{i}^{\scriptscriptstyle{(5)}} \delta_{j k \ell} $\\ 
 \hline 
matrix & matrix & $ S = (\textcolor{autotbl3}{I} \otimes \textcolor{autotbl1}{A_1}) S (\textcolor{autotbl1}{A_1} \otimes \textcolor{autotbl2}{A_2})^\top $ & $ S_{i j k \ell} $ & $ c_{i}^{\scriptscriptstyle{(1)}}  $\\ 
 &  & $  $ & $  $ & $ c_{i}^{\scriptscriptstyle{(2)}} \delta_{j k} $\\ 
 \hline 
matrix & matrix & $ S = (\textcolor{autotbl3}{I} \otimes \textcolor{autotbl1}{A_1}) S (\textcolor{autotbl2}{A_2} \otimes \textcolor{autotbl3}{I})^\top $ & $ S_{i j k \ell} $ & $ c_{i \ell}^{\scriptscriptstyle{(1)}}  $\\ 
 \hline 
matrix & matrix & $ S = (\textcolor{autotbl3}{I} \otimes \textcolor{autotbl1}{A_1}) S (\textcolor{autotbl2}{A_2} \otimes \textcolor{autotbl2}{A_2})^\top $ & $ S_{i j k \ell} $ & $ c_{i}^{\scriptscriptstyle{(1)}}  $\\ 
 &  & $  $ & $  $ & $ c_{i}^{\scriptscriptstyle{(2)}} \delta_{k \ell} $\\ 
 \hline 
matrix & matrix & $ S = (\textcolor{autotbl3}{I} \otimes \textcolor{autotbl2}{A_2}) S (\textcolor{autotbl1}{A_1} \otimes \textcolor{autotbl1}{A_1})^\top $ & $ S_{i j k \ell} $ & $ c_{i}^{\scriptscriptstyle{(1)}}  $\\ 
 &  & $  $ & $  $ & $ c_{i}^{\scriptscriptstyle{(2)}} \delta_{k \ell} $\\ 
 \hline 
matrix & matrix & $ S = (\textcolor{autotbl3}{I} \otimes \textcolor{autotbl2}{A_2}) S (\textcolor{autotbl1}{A_1} \otimes \textcolor{autotbl2}{A_2})^\top $ & $ S_{i j k \ell} $ & $ c_{i}^{\scriptscriptstyle{(1)}}  $\\ 
 &  & $  $ & $  $ & $ c_{i}^{\scriptscriptstyle{(2)}} \delta_{j \ell} $\\ 
 \hline 
matrix & matrix & $ S = (\textcolor{autotbl1}{A_1} \otimes \textcolor{autotbl3}{I}) S (\textcolor{autotbl3}{I} \otimes \textcolor{autotbl2}{A_2})^\top $ & $ S_{i j k \ell} $ & $ c_{j k}^{\scriptscriptstyle{(1)}}  $\\ 
 \hline 
matrix & matrix & $ S = (\textcolor{autotbl1}{A_1} \otimes \textcolor{autotbl3}{I}) S (\textcolor{autotbl1}{A_1} \otimes \textcolor{autotbl3}{I})^\top $ & $ S_{i j k \ell} $ & $ c_{j \ell}^{\scriptscriptstyle{(1)}}  $\\ 
 &  & $  $ & $  $ & $ c_{j \ell}^{\scriptscriptstyle{(2)}} \delta_{i k} $\\ 
 \hline 
matrix & matrix & $ S = (\textcolor{autotbl1}{A_1} \otimes \textcolor{autotbl3}{I}) S (\textcolor{autotbl1}{A_1} \otimes \textcolor{autotbl1}{A_1})^\top $ & $ S_{i j k \ell} $ & $ c_{j}^{\scriptscriptstyle{(1)}}  $\\ 
 &  & $  $ & $  $ & $ c_{j}^{\scriptscriptstyle{(2)}} \delta_{i \ell} $\\ 
 &  & $  $ & $  $ & $ c_{j}^{\scriptscriptstyle{(3)}} \delta_{k \ell} $\\ 
 &  & $  $ & $  $ & $ c_{j}^{\scriptscriptstyle{(4)}} \delta_{i k} $\\ 
 &  & $  $ & $  $ & $ c_{j}^{\scriptscriptstyle{(5)}} \delta_{i k \ell} $\\ 
 \hline 
matrix & matrix & $ S = (\textcolor{autotbl1}{A_1} \otimes \textcolor{autotbl3}{I}) S (\textcolor{autotbl1}{A_1} \otimes \textcolor{autotbl2}{A_2})^\top $ & $ S_{i j k \ell} $ & $ c_{j}^{\scriptscriptstyle{(1)}}  $\\ 
 &  & $  $ & $  $ & $ c_{j}^{\scriptscriptstyle{(2)}} \delta_{i k} $\\ 
 \hline 
matrix & matrix & $ S = (\textcolor{autotbl1}{A_1} \otimes \textcolor{autotbl3}{I}) S (\textcolor{autotbl2}{A_2} \otimes \textcolor{autotbl3}{I})^\top $ & $ S_{i j k \ell} $ & $ c_{j \ell}^{\scriptscriptstyle{(1)}}  $\\ 
 \hline 
matrix & matrix & $ S = (\textcolor{autotbl1}{A_1} \otimes \textcolor{autotbl3}{I}) S (\textcolor{autotbl2}{A_2} \otimes \textcolor{autotbl2}{A_2})^\top $ & $ S_{i j k \ell} $ & $ c_{j}^{\scriptscriptstyle{(1)}}  $\\ 
 &  & $  $ & $  $ & $ c_{j}^{\scriptscriptstyle{(2)}} \delta_{k \ell} $\\ 
 \hline 
matrix & matrix & $ S = (\textcolor{autotbl1}{A_1} \otimes \textcolor{autotbl1}{A_1}) S (\textcolor{autotbl1}{A_1} \otimes \textcolor{autotbl1}{A_1})^\top $ & $ S_{i j k \ell} $ & $ c_{}^{\scriptscriptstyle{(1)}}  $\\ 
 &  & $  $ & $  $ & $ c_{}^{\scriptscriptstyle{(2)}} \delta_{i \ell} $\\ 
 &  & $  $ & $  $ & $ c_{}^{\scriptscriptstyle{(3)}} \delta_{j \ell} $\\ 
 &  & $  $ & $  $ & $ c_{}^{\scriptscriptstyle{(4)}} \delta_{k \ell} $\\ 
 &  & $  $ & $  $ & $ c_{}^{\scriptscriptstyle{(5)}} \delta_{i k} $\\ 
 &  & $  $ & $  $ & $ c_{}^{\scriptscriptstyle{(6)}} \delta_{i k \ell} $\\ 
 &  & $  $ & $  $ & $ c_{}^{\scriptscriptstyle{(7)}} \delta_{i k} \delta_{j \ell} $\\ 
 &  & $  $ & $  $ & $ c_{}^{\scriptscriptstyle{(8)}} \delta_{j k} $\\ 
 &  & $  $ & $  $ & $ c_{}^{\scriptscriptstyle{(9)}} \delta_{i \ell} \delta_{j k} $\\ 
 &  & $  $ & $  $ & $ c_{}^{\scriptscriptstyle{(10)}} \delta_{j k \ell} $\\ 
 &  & $  $ & $  $ & $ c_{}^{\scriptscriptstyle{(11)}} \delta_{i j} $\\ 
 &  & $  $ & $  $ & $ c_{}^{\scriptscriptstyle{(12)}} \delta_{i j \ell} $\\ 
 &  & $  $ & $  $ & $ c_{}^{\scriptscriptstyle{(13)}} \delta_{i j} \delta_{k \ell} $\\ 
 &  & $  $ & $  $ & $ c_{}^{\scriptscriptstyle{(14)}} \delta_{i j k} $\\ 
 &  & $  $ & $  $ & $ c_{}^{\scriptscriptstyle{(15)}} \delta_{i j k \ell} $\\ 
 \hline 
matrix & matrix & $ S = (\textcolor{autotbl1}{A_1} \otimes \textcolor{autotbl1}{A_1}) S (\textcolor{autotbl1}{A_1} \otimes \textcolor{autotbl2}{A_2})^\top $ & $ S_{i j k \ell} $ & $ c_{}^{\scriptscriptstyle{(1)}}  $\\ 
 &  & $  $ & $  $ & $ c_{}^{\scriptscriptstyle{(2)}} \delta_{i k} $\\ 
 &  & $  $ & $  $ & $ c_{}^{\scriptscriptstyle{(3)}} \delta_{j k} $\\ 
 &  & $  $ & $  $ & $ c_{}^{\scriptscriptstyle{(4)}} \delta_{i j} $\\ 
 &  & $  $ & $  $ & $ c_{}^{\scriptscriptstyle{(5)}} \delta_{i j k} $\\ 
 \hline 
matrix & matrix & $ S = (\textcolor{autotbl1}{A_1} \otimes \textcolor{autotbl1}{A_1}) S (\textcolor{autotbl2}{A_2} \otimes \textcolor{autotbl2}{A_2})^\top $ & $ S_{i j k \ell} $ & $ c_{}^{\scriptscriptstyle{(1)}}  $\\ 
 &  & $  $ & $  $ & $ c_{}^{\scriptscriptstyle{(2)}} \delta_{i j} $\\ 
 &  & $  $ & $  $ & $ c_{}^{\scriptscriptstyle{(3)}} \delta_{k \ell} $\\ 
 &  & $  $ & $  $ & $ c_{}^{\scriptscriptstyle{(4)}} \delta_{i j} \delta_{k \ell} $\\ 
 \hline 
matrix & matrix & $ S = (\textcolor{autotbl1}{A_1} \otimes \textcolor{autotbl2}{A_2}) S (\textcolor{autotbl1}{A_1} \otimes \textcolor{autotbl2}{A_2})^\top $ & $ S_{i j k \ell} $ & $ c_{}^{\scriptscriptstyle{(1)}}  $\\ 
 &  & $  $ & $  $ & $ c_{}^{\scriptscriptstyle{(2)}} \delta_{i k} $\\ 
 &  & $  $ & $  $ & $ c_{}^{\scriptscriptstyle{(3)}} \delta_{j \ell} $\\ 
 &  & $  $ & $  $ & $ c_{}^{\scriptscriptstyle{(4)}} \delta_{i k} \delta_{j \ell} $\\ 
 \hline 
matrix & matrix & $ S = (\textcolor{autotbl1}{A_1} \otimes \textcolor{autotbl2}{A_2}) S (\textcolor{autotbl2}{A_2} \otimes \textcolor{autotbl2}{A_2})^\top $ & $ S_{i j k \ell} $ & $ c_{}^{\scriptscriptstyle{(1)}}  $\\ 
 &  & $  $ & $  $ & $ c_{}^{\scriptscriptstyle{(2)}} \delta_{j \ell} $\\ 
 &  & $  $ & $  $ & $ c_{}^{\scriptscriptstyle{(3)}} \delta_{k \ell} $\\ 
 &  & $  $ & $  $ & $ c_{}^{\scriptscriptstyle{(4)}} \delta_{j k} $\\ 
 &  & $  $ & $  $ & $ c_{}^{\scriptscriptstyle{(5)}} \delta_{j k \ell} $\\ 
 \hline 
matrix & matrix & $ S = (\textcolor{autotbl2}{A_2} \otimes \textcolor{autotbl3}{I}) S (\textcolor{autotbl1}{A_1} \otimes \textcolor{autotbl1}{A_1})^\top $ & $ S_{i j k \ell} $ & $ c_{j}^{\scriptscriptstyle{(1)}}  $\\ 
 &  & $  $ & $  $ & $ c_{j}^{\scriptscriptstyle{(2)}} \delta_{k \ell} $\\ 
 \hline 
matrix & matrix & $ S = (\textcolor{autotbl2}{A_2} \otimes \textcolor{autotbl3}{I}) S (\textcolor{autotbl1}{A_1} \otimes \textcolor{autotbl2}{A_2})^\top $ & $ S_{i j k \ell} $ & $ c_{j}^{\scriptscriptstyle{(1)}}  $\\ 
 &  & $  $ & $  $ & $ c_{j}^{\scriptscriptstyle{(2)}} \delta_{i \ell} $

%% file: _fig_autoencoder.tex
\begin{figure}
    \centering
    \includegraphics[width=0.5\linewidth]{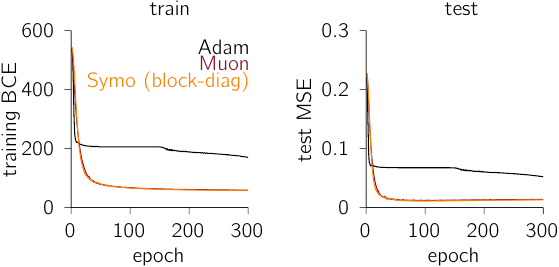}
    \caption{We train an MNIST autoencoder with Symo, Muon and Adam. Results for \textbf{(Left)}~training binary cross-entropy loss (BCE);
    \textbf{(Right)}~test mean-squared error loss (MSE).
    }
    \label{fig:mnist_comp}
\end{figure}

%% file: _fig_nanogpt.tex
\begin{figure}
    \centering
    \includegraphics[width=0.7\linewidth]{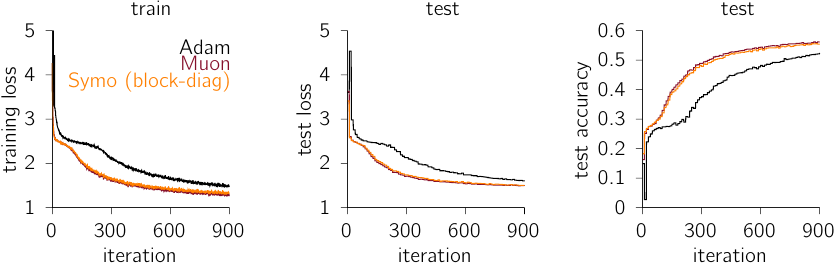}
    \caption{We train a nanoGPT on Shakespeare dataset character-level prediction task with Symo, Muon and Adam optimizers. Results for \textbf{(Left)}~training loss;
    \textbf{(Middle)}~test loss;
    \textbf{(Right)}~test accuracy.
    }
    \label{fig:nanogpt_comp}
\end{figure}

%% file: tabs/hyperparams_symo.tex
\begin{table}[!h]
\centering
\caption{Hyperparameters used for Symo. $\eta_{\text{symo}}$ is the learning rate for Symo (for training parameters with dimension greater than one) and $\eta_{\text{adam}}$ is the learning rate for Adam (For training one-dimensional parameters). $\beta_{1}$ and$\beta_{2}$ are coefficients used for computing running averages of gradient and its square respectively for the Adam optimizer. $\mu_{\bm{g}}$ is the momentum parameter on gradient and $\lambda$ is the damping parameter for Symo. We do not apply momentum on factors (see \Cref{app:symo_optimizer}).}\label{tab:hyperparameters_symo}
{
\begin{tabular}{lcccccc}
\toprule
 & $\eta_{\text{symo}}$ & $\eta_{\text{adam}}$ & $\beta_{1}$ & $\beta_{2}$ & $\mu_{\bm{g}}$ & $\lambda$\\
\midrule
\textbf{Autoencoder}  & \(4.6 \times 10^{-3}\) & - & - & - & $0.79$ & \(1.0 \times 10^{-9}\)\\
\textbf{NanoGPT}  & \(6.8 \times 10^{-4}\) & \(8 \times 10^{-4}\) & \(0.82\) &  \(0.99\) & \(0.7\) & \(0\)\\
\bottomrule
\end{tabular}}
\end{table}

\begin{table}[!h]
\centering
\caption{Range of hyperparameters explored for Symo. }\label{tab:hyperparameters_symo_explored}
{
\begin{tabular}{lcccccc}
\toprule
 & $\eta_{\text{symo}}$ & $\eta_{\text{adam}}$ & $\beta_{1}$ & $\beta_{2}$ & $\mu_{\bm{g}}$ & $\lambda$\\
\midrule
\textbf{Autoencoder}  & \(1 \times 10^{-4} - 2 \times 10^{-2}\)&- & - & - &\( 0.7 - 0.99\)& \(1 \times 10^{-10} -  1 \times 10^{-8} \)\\
\textbf{NanoGPT} &\(1 \times 10^{-4} - 2 \times 10^{-2}\)& \(5 \times 10^{-4}  -2 \times 10^{-3} \) & \(0.8 -0.9\) &  \(0.99\) & \(0.7\)& \(0\) \\
\bottomrule
\end{tabular}}
\end{table}

%% file: tabs/hyperparams_muon.tex
\begin{table}[!h]
\centering
\caption{Hyperparameters used for Muon. $\eta_{\text{muon}}$ is the learning rate for Muon (for training parameters with dimension greater than one) and $\eta_{\text{adam}}$ is the learning rate for Adam (For training one-dimensional parameters). $\beta_{1}$ and$\beta_{2}$ are coefficients used for computing running averages of gradient and its square respectively for the Adam optimizer. $\mu$ is the momentum for the Muon optimizer.}\label{tab:hyperparameters_muon}
{
\begin{tabular}{lcccccc}
\toprule
 & $\eta_{\text{muon}}$ & $\eta_{\text{adam}}$ & $\beta_{1}$ & $\beta_{2}$ & $\mu$ & $\epsilon$ \\
\midrule
\textbf{Autoencoder}  & \(9.9 \times 10 ^{-2}\) & \(1 \times 10 ^{-3}\)& 0.9 & 0.999 & 0.79 & \( 1 \times 10^{-9}\) \\
\textbf{NanoGPT}  & \(0.01\) & \(8 \times 10^{-4}\) & \(0.82\) &  \(0.99\) & \(0.7\) & 0\\
\bottomrule
\end{tabular}}
\end{table}

\begin{table}[!h]
\centering
\caption{Range of hyperparameters explored for Muon. }\label{tab:hyperparameters_muon_explored}
{
\begin{tabular}{lcccccc}
\toprule
 & $\eta_{\text{muon}}$ & $\eta_{\text{adam}}$ & $\beta_{1}$ & $\beta_{2}$ & $\mu$ & $\epsilon$ \\
\midrule
\textbf{Autoencoder}&\(1 \times 10^{-3}  - 2 \times 10^{-1} \)  & \(1 \times 10^{-3} \) & 0.9 & 0.999  & \(0.7 - 0.99\) & \(1 \times 10^{-10} -  1 \times 10^{-8} \) \\
\textbf{NanoGPT}  &\(1 \times 10^{-3} - 2 \times 10^{-1} \)  & \(5 \times 10^{-4}  - 2 \times 10^{-3} \) & \(0.8 - 0.9\) &  \(0.99\) & \(0.7\) & 0\\
\bottomrule
\end{tabular}}
\end{table}

%% file: tabs/hyperparams_adam.tex
\begin{table}[!h]
\centering
\caption{Hyperparameters used for Adam. $\eta$ is the learning rate. $\beta_{1}$ and $\beta_{2}$ are coefficients used for computing running averages of gradient and its square respectively. $\epsilon$ is term added to the denominator in Adam to improve numerical stability. }\label{tab:hyperparameters_adam}
{
\begin{tabular}{lcccc}
\toprule
 & $\eta$ & $\beta_{1}$ & $\beta_{2}$ & $\epsilon$ \\
\midrule
\textbf{Autoencoder}  & \(9 \times 10^{-4}\) & $0.9$ & $0.93$ & \(1 \times 10^{-8}\) \\
\textbf{NanoGPT}  & $8 \times 10^{-4}$ & \(0.82\) & \(0.99\)&  \(1 \times 10^{-10}\) \\
\bottomrule
\end{tabular}}
\end{table}

\begin{table}[!h]
\centering
\caption{Range of hyperparameters explored for Adam. }\label{tab:hyperparameters_adam_explored}
{
\begin{tabular}{lcccc}
\toprule
 & $\eta$ & $\beta_{1}$ & $\beta_{2}$ & $\epsilon$ \\
\midrule
\textbf{Autoencoder}  & $5 \times 10^{-4} - 2 \times 10^{-3}$ & $0.8 - 0.99$ & $0.9-0.999$ & \(1 \times 10^{-10}-   
1 \times 10^{-8}\) \\
\textbf{NanoGPT}  & $5 \times 10^{-4} - 2 \times 10^{-3}$ & $0.8 - 0.99$ & $0.9 -0.999$ &  \(1 \times 10^{-10}-  1 \times 10^{-8}\)\\
\bottomrule
\end{tabular}}
\end{table}

%% file: tabs/architecture_group_choice.tex
\begin{table}[!h]
\centering
\caption{Autoencoder architectures used for MNIST and FACES datasets. 
Layer sizes are listed from input to bottleneck and back to output.}
\label{tab:architecture_group_choice}
{
\begin{tabular}{lccccccccc}
\toprule
 & Input & L1 & L2 & L3 & Bottleneck & L5 & L6 & L7 & Output \\
\midrule
\textbf{MNIST} \\
Layer sizes 
& $784$ & $1000$ & $500$ & $250$ & $30$ & $250$ & $500$ & $1000$ & $784$ \\
Activation 
&  & ReLU & ReLU & ReLU & Linear & ReLU & ReLU & ReLU & Sigmoid \\
\midrule
\textbf{FACES} \\
Layer sizes 
& $625$ & $2000$ & $1000$ & $500$ & $30$ & $500$ & $1000$ & $2000$ & $625$ \\
Activation 
&  & ReLU & ReLU & ReLU & Linear & ReLU & ReLU & ReLU & Linear \\
\bottomrule
\end{tabular}
}
\end{table}

%% file: tabs/hyperparams_symo_group_choice.tex
\begin{table}[!h]
\centering
\caption{Hyperparameters used for Symo. $\eta$ is the learning rate. $\mu_{\bm{g}}$ is the momentum parameter on gradient and $\mu_{\text{factor}}$ is the momentum parameter on factors. $\lambda$ is the damping parameter.}\label{tab:hyperparameters_symo_group_choice}
{
\begin{tabular}{lcccc}
\toprule
 & $\eta \, \{ \text{MNIST} | \text{FACES} \}$  & $\mu_{\bm{g}}$  & $\mu_{\text{factor}}$ & $\lambda$\\
\midrule
\textbf{Symo}  & $ \{ 4 | 2 \} \times 10^{-3}$ & $0.9$ & $0.9$ & \(1.0 \times 10^{-4}\)\\
\textbf{Symo (larger)}  & $ \{ 2 | 1 \} \times 10^{-3}$ & $0.9$ & $0.9$& \(1.0 \times 10^{-4}\)\\
\textbf{Symo (smaller, BD)}  &$ \{ 4 | 2 \} \times 10^{-3}$ & $0.9$ & $0.9$& \(1.0 \times 10^{-4}\) \\
\bottomrule
\end{tabular}}
\end{table}

%% file: tabs/hyperparams_adam_group_choice.tex
\begin{table}[!h]
\centering
\caption{Hyperparameters used for Adam. $\eta$ is the learning rate. $\beta_{1}$ and $\beta_{2}$ are coefficients used for computing running averages of gradient and its square respectively. $\epsilon$ is term added to the denominator in Adam to improve numerical stability. }\label{tab:hyperparameters_adam_group_choice}
{
\begin{tabular}{lcccc}
\toprule
 & $\eta$ & $\beta_{1}$ & $\beta_{2}$ & $\epsilon$ \\
\midrule
\textbf{MNIST}  & \(4 \times 10^{-4}\) & $0.9$ & $0.9$ & \(1 \times 10^{-4}\) \\
\textbf{FACES}  & $4 \times 10^{-4}$ & \(0.9\) & \(0.9\)&  \(1 \times 10^{-4}\) \\
\bottomrule
\end{tabular}}
\end{table}

%% file: tabs/hyperparams_shampoo_group_choice.tex
\begin{table}[!h]
\centering
\caption{Hyperparameters used for Shampoo. $\eta$ is the learning rate. $\beta$ is the momentum parameter. $\epsilon$ is the damping parameter. }\label{tab:hyperparameters_shampoo_group_choice}
{
\begin{tabular}{lccc}
\toprule
 & $\eta$ & $\beta$  & $\epsilon$ \\
\midrule
\textbf{MNIST}  & \(2 \times 10^{-1}\) & $0.9$  & \(1 \times 10^{-4}\) \\
\textbf{FACES}  & $2 \times 10^{-1}$ & \(0.9\)&  \(1 \times 10^{-4}\) \\
\bottomrule
\end{tabular}}
\end{table}